\newif\ifviz
\pgfplotsset{every axis/.append style={
                   label style={font=\footnotesize},
                   tick label style={font=\footnotesize},
		   ylabel style={rotate=-90},
		   yticklabel style={/pgf/number format/.cd,fixed relative,precision=1,/tikz/.cd},
                    }}
\pgfplotsset{compat=1.12}
\pgfplotsset{compat=1.12}
\newlength\figureheight
\newlength\figurewidth
\title{Online Matrix Completion with Side Information}
 \author{Mark Herbster, Stephen Pasteris, Lisa Tse \\
Department of Computer Science\\ 
University College London\\ 
London WC1E 6BT, England, UK \\
 \texttt{\{m.herbster,s.pasteris,l.tse\}@cs.ucl.ac.uk}\\
}
\newtheorem{theorem}{Theorem} 
\newtheorem{proposition}[theorem]{Proposition}
\newtheorem{lemma}[theorem]{Lemma}
\newtheorem{definition}[theorem]{Definition}
\newcommand{\norm}[1]{\left\lVert#1\right\rVert}
\newcommand{\mc}{\operatorname{mc}}
\newcommand{\sign}{\operatorname{sign}}
\newcommand{\argmin}{\operatornamewithlimits{argmin}}
\newcommand{\into}{\rightarrow}
\newcommand{\bs}[1]{\boldsymbol{#1}}
\newcommand{\con}[2]{\left[#1;#2\right]}
\newcommand{\cl}[1]{\bs{q}^{#1}}
\newcommand{\imat}{\bm{Z}}
\bmdefine{\mat}{\tilde{U}}
\newcommand{\hl}{h}
\newcommand{\hlm}{h_{\mar}}
\newcommand{\la}{\bs{L}}
\newcommand{\re}{\mathcal{R}}
\newcommand{\one}{\bs{1}}
\newcommand{\wem}[1]{\bm{\tilde{W}}^{#1}}
\newcommand{\id}{\bs{I}}
\newcommand{\rent}[2]{\Delta(#1,#2)}
\newcommand{\ini}[1]{\left\|#1\right\|}
\newcommand{\di}[1]{\eta^t}
\newcommand{\pdi}[1]{\bar{\eta}^t}
\newcommand{\q}[1]{#1}
\newcommand{\Pm}{m}
\newcommand{\m}{\q{\Pm}}
\newcommand{\mar}{\gamma}
\newcommand{\marh}{\gamma}
\newcommand{\bx}[1]{\bs{x}^{#1}}
\newcommand{\bbx}[1]{\bs{X}^{#1}}
\newcommand{\co}{c}
\newcommand{\Exp}{\mathbb{E}}
\newcommand{\nm}{|\mathbb{M}|}
\newcommand{\nmset}{\mathbb{M}}
\newcommand{\uset}{\mathbb{U}}
\newcommand{\trace}[1]{\operatorname{tr}({#1})}
\newcommand{\trc}{\operatorname{tr}}
\newcommand{\tr}[1]{\operatorname{tr}\left(#1\right)}
\newcommand{\lr}{\eta}
\newcommand{\lrreg}{\eta}
\newcommand{\y}[1]{y_{#1}}
\newcommand{\dotp}[1]{{\langle{#1}\rangle}}
\newcommand{\vn}[1]{\norm{#1}}
\newcommand{\tnorm}[1]{\|{#1}\|_1}
\newcommand{\maxnorm}[1]{{\|{#1}\|_{\text{max}}}}
\newcommand{\gm}{\Re^{m\times n}}
\newcommand{\gmd}{\Re^{m \times d}}
\newcommand{\gnd}{\Re^{n \times d}}
\newcommand{\sm}{\{-1,1\}^{m\times n}}
\newcommand{\bikl}{\mathbb{B}_{k,\ell}^{m,n}}
\newcommand{\SP}{{\rm SP}}
\newcommand{\trans}{{\scriptscriptstyle \top}}
\newcommand{\bone}{{\bm 1}}
\newcommand{\bzero}{{\bm 0}}
\newcommand{\bw}{{\bm w}}
\newcommand{\ba}{\bm{a}}
\newcommand{\bb}{\bm{b}}
\newcommand{\bc}{\bm{c}}
\newcommand{\be}{\bm{e}}
\newcommand{\bp}{\bm{p}}
\newcommand{\bq}{\bm{q}}
\newcommand{\br}{\bm{r}}
\newcommand{\bt}{\bm{t}}
\newcommand{\bu}{\bm{u}}
\newcommand{\bmv}{\bm{v}}
\newcommand{\bxx}{\bm{x}}
\newcommand{\by}{\bm{y}}
\newcommand{\bxt}{\bm{\tilde{x}}}
\newcommand{\St}{\tilde{S}}
\newcommand{\bA}{\bm{A}}
\newcommand{\bB}{\bm{B}}
\newcommand{\bC}{\bm{C}}
\newcommand{\bD}{\bm{D}}
\newcommand{\bH}{\bm{H}}
\newcommand{\bI}{\bm{I}}
\newcommand{\bK}{\bm{K}}
\newcommand{\bLb}{\bm{L^{\circ}}}
\newcommand{\bL}{\bm{L}}
\newcommand{\bM}{\bm{M}}
\newcommand{\bN}{\bm{N}}
\newcommand{\bP}{\bm{P}}
\newcommand{\bQ}{\bm{Q}}
\newcommand{\bR}{\bm{R}}
\newcommand{\bS}{\bm{S}}
\newcommand{\bU}{\bm{U}}
\newcommand{\bV}{\bm{V}}
\newcommand{\bW}{\bm{W}}
\newcommand{\bWtilde}{\bm{\tilde{W}}}
\newcommand{\bX}{\bm{X}}
\newcommand{\bY}{\bm{Y}}
\newcommand{\bZ}{\bm{Z}}
\newcommand{\bLam}{\bm{\Lambda}}
\newcommand{\cE}{\mathcal{E}}
\newcommand{\cG}{\mathcal{G}}
\newcommand{\cH}{\mathcal{H}}
\newcommand{\cI}{\mathcal{I}}
\newcommand{\cJ}{\mathcal{J}}
\newcommand{\cK}{\mathcal{K}}
\newcommand{\cM}{\mathcal{M}}
\newcommand{\cN}{\mathcal{N}}
\newcommand{\cO}{\mathcal{O}}
\newcommand{\cOT}{\mathcal{\tilde{O}}}
\newcommand{\cU}{\mathcal{U}}
\newcommand{\cV}{\mathcal{V}}
\newcommand{\cX}{\mathcal{X}}
\newcommand{\onehalf}{\frac{1}{2}}
\newcommand{\PP}{\bP}
\newcommand{\QQ}{\bQ}
\newcommand{\nP}{\hat{\bP}} 
\newcommand{\nQ}{\hat{\bQ}} 
\newcommand{\R}{\bR} 
\newcommand{\C}{\bC} 
\newcommand{\RN}{\bM} 
\newcommand{\CN}{\bN} 
\newcommand{\RNfunc}{\mathcal{\bM^+}} 
\newcommand{\CNfunc}{\mathcal{\bN^+}} 
\newcommand{\SRN}{\sqrt{\bM}} 
\newcommand{\SCN}{\sqrt{\bN}} 
\newcommand{\RRN}{\mathcal{R}_{\RN}} 
\newcommand{\RCN}{\mathcal{R}_{\CN}} 
\newcommand{\IRRN}{\mathcal{R}_{\cM}} 
\newcommand{\IRCN}{\mathcal{R}_{\cN}} 
\newcommand{\RAD}{\mathcal{R}} 
\newcommand{\U}{\bU}
\newcommand{\Uset}{{\{-1,1\}^{m \times n}}}
\newcommand{\Up}{\bm{\bar{U}}}
\newcommand{\Ustar}{\U^*}
\newcommand{\nPstar}{\hat{\bP^*}}
\newcommand{\nQstar}{\hat{\bQ^*}}
\newcommand{\RNM}{\mathcal{N}}
\newcommand{\BEM}{\mathcal{B}}
\newcommand{\SPDM}{\bS_{++}}
\newcommand{\SPSDM}{\bS_{+}}
\newcommand{\eit}{\be_m^{i_t}}
\newcommand{\ejt}{\be_n^{j_t}}
\newcommand{\psm}{\sqrt{{\RN}^{+}}}
\newcommand{\ppsm}{\sqrt{\CN^{+}}}
\bmdefine{\mat}{\tilde{U}}
\newcommand{\XT}{\bm{\tilde{X}}^t}
\newcommand{\scp}{\mathcal{\widehat{\mathcal{D}}}}
\newcommand{\upD}{\widehat{\mathcal{D}}}
\newcommand{\qD}{\mathcal{D}^{\marh}_{\RN,\CN}}
\newcommand{\qDBF}{\mathcal{D}^{\circ}_{\RN,\CN}}
\newcommand{\pdim}{\mathcal{D}}
\newcommand{\pdimb}{\mathcal{{D^{\circ}}}}
\newcommand{\yut}{\Up_{i_t,j_t}}
\newcommand{\ybt}{\bar{y}_t}
\newcommand{\yht}{\hat{y}_t}
\newcommand{\Yrv}{Y_t}
\newcommand{\hazankale}{HKSS12}
\newcommand{\signcomplexity}{CMSM07}
\newcommand{\novikoff}{Novikoff62}
\newcommand{\matrixwinnow}{warmuth2007winnowing}
\newcommand{\MJH}[1]{{\color{purple}{{\rm\bfseries MJH:[}{\sffamily #1}{\rm\bfseries ]~}}}}
\newcommand{\LT}[1]{{\color{purple}{{\rm\bfseries LT:[}{\sffamily #1}{\rm\bfseries ]~}}}}
\newcommand{\HC}[1]{{}}
\begin{document}

\maketitle
\begin{abstract}
We give an online algorithm and prove novel mistake and regret bounds for online binary matrix completion with side information.  The mistake bounds we prove are of the form $\cOT(\frac{\pdim}{\mar^2})$.  The term $\frac{1}{\mar^2}$ is analogous to the usual margin term in SVM (perceptron) bounds.  More specifically, if we assume that there is some factorization of the underlying $m\times n$ matrix into $\bP \bQ^{\trans}$ where the rows of $\bP$ are interpreted as ``classifiers'' in $\Re^d$ and the rows of $\bQ$ as ``instances'' in $\Re^d$, then $\mar$ is is the maximum (normalized) margin over all factorizations $\bP \bQ^{\trans}$ consistent with the observed matrix.  The quasi-dimension term $\pdim$ measures the quality of side information.  In the presence of vacuous side information, $\pdim = m+n$.  However, if the side information is predictive of the underlying factorization of the matrix, then in an ideal case, $\pdim \in \cO(k + \ell)$ where $k$ is the number of distinct row factors and $\ell$ is the number of distinct column factors.   We additionally provide a generalization of our algorithm to the inductive setting.  
In this setting, we provide an example where the side information is not directly specified in advance.  For this example, the quasi-dimension $\pdim$ is now bounded  by $\cO(k^2 + \ell^2)$.
\end{abstract}
\section{Introduction}
We consider the problem of online binary matrix completion with {\em side information}.  In our model, the learner is sequentially queried to predict entries of a binary matrix. After each query, the learner then receives the value of that matrix entry.  The goal of the learner is to minimize prediction mistakes.  To aid the learner, side information is associated with each row and column. For instance, in the classic ``Netflix challenge''~\cite{BL07}, the rows of the matrix correspond to viewers and the columns to movies, with entries representing movie ratings.  It is natural to suppose that we have side information in the form of demographic information for each user,  and metadata for the movies.  In this work, we consider both {\em transductive} and {\em inductive} models.  In the former model, the side information associated with each row and column is specified completely in advance in the form of a pair of positive definite matrices that inform similarity between row pairs and column pairs.  For the inductive model, a pair of kernel functions is specified over potentially continuous domains, one for the rows and one for the columns. 
What is not specified is the mapping from the domain of the kernel function to specific rows or columns, which is only revealed sequentially.  In the Netflix example, the inductive model is especially natural if new users or movies are introduced during the learning process.

In Theorem~\ref{thm:base}, we will give  regret and mistake bounds for online binary matrix completion with side information.  Although this theorem has a broader applicability, our interpretation will focus on the case that the matrix has a {\em latent block structure}.  Hartigan~\cite{H72} introduced the idea of permuting a matrix by both the rows and columns into a few homogeneous blocks.  This is equivalent to assuming that each row (column) of the matrix has an associated row (column) class and that the matrix entry is completely determined by its corresponding row and column classes.  This has since become known as co- or bi-clustering.   This same assumption has become the basis for probabilistic models which can then be used to ``complete'' a matrix with missing entries.  The authors of~\cite{GLMZ16} give some rate-optimal results for this problem in the batch setting and provide an overview of this literature.  It is natural to compare this assumption to the dominant alternative, which assumes that there exists a low rank decomposition of the matrix to be completed, see for instance~\cite{CR12}.  Common to both approaches is that associated with each row and column, there is an underlying {\em latent} factor so that the given matrix entry is determined by a function on the appropriate row and column factor.  The low-rank assumption is that the latent factors are vectors in $\Re^d$  and that the function is the dot product. 
The latent block structure assumption is that the latent factors are instead categorical  and that the function between factors is arbitrary. 

In this work, we prove mistake bounds of the form $\cOT({\pdim}/{\mar^2})$.  The term $1/\mar^2$ is a parameter of our algorithm which, when exactly tuned, is the squared margin complexity $\mc(\bU)^2$ of the comparator matrix $\bU$.  The notion of margin complexity in machine learning was introduced in~\cite{BD2003}, where it was used to study the learnability of concept classes via linear embeddings. It was
further studied in~\cite{CMSM07}, and in~\cite{SrebroS05} a detailed study of margin complexity, trace complexity and rank in the context of statistical bounds for matrix completion was given.  The squared margin complexity is upper bounded by rank. Furthermore, if our $m \times n$ matrix has a latent block structure with $k \times \ell$ homogeneous blocks (for an illustration, see Figure~\ref{fig:exbcm}), then $\mc(\bU)^2 \le \min(k,\ell)$. 
The second term in our bound is the quasi-dimension $\pdim$ which, to the best of our knowledge, is novel to this work.  The quasi-dimension measures the extent to which the side information is ``predictive'' of the comparator matrix.  In Theorem~\ref{thm:bndpdim}, we provide an upper bound on the quasi-dimension, which measures the predictiveness of the side information when the comparator matrix has a latent block structure.  If there is only vacuous side information, then $\pdim = m+n$.  However, if there is a $k\times \ell$ latent block structure  and the side information is predictive, then $\pdim \in \cO(k+\ell)$; hence our nomenclature ``quasi-dimension.''   
In this case, we then have that the mistake bound term $\frac{\pdim}{\mar^2}\in\cO(k\ell)$, which we will later argue is optimal up to logarithmic factors.   Although latent block structure may appear to be a ``fragile'' measure of matrix complexity, our regret bound implies that performance will scale smoothly in the case of adversarial noise. 

The paper is organized as follows. First, we discuss related literature.  We then introduce preliminary concepts in Section~\ref{sec:preliminaries}. In Section~\ref{sec:basic}, we present our online matrix completion algorithm as well as a theorem (Theorem~\ref{thm:base}) that characterizes its performance in the transductive setting.  In Section~\ref{sec:lbs}, we formally introduce the concept of latent block structure (Definition~\ref{def:bikl}) and provide an upper bound (Theorem~\ref{thm:bndpdim}) for the quasi-dimension $\pdim$ when the matrix has latent block structure.  We then provide an example that bounds $\pdim$ when we have graph-based side information (Section~\ref{section:graph_side_info}); and a further example (Section~\ref{sec:sim}) when the matrix has additionally a ``community'' structure. Finally, in Section~\ref{sec:inductive}, we present an algorithm for the inductive setting, as well as an example illustrating a bound on $\pdim$ when the side information comes as vectors in $\Re^d$ which are separated by a clustering via hyper-rectangles.  Proofs as well as an experiment on synthetic data are contained in the appendices.
\subsection*{Related literature}
Matrix completion has been studied extensively in the batch setting, see for example~\cite{Srebro2005,CT10,Maurer2013,Chiang2018} and references therein. Central to these approaches is the aim of finding a low-rank factorization by optimizing a convex proxy to rank, such as the trace norm~\cite{Fazel2001}. The following papers~\cite{abernethy2006,Xu2013,Kalofolias2014,RHRD15} are partially representative of methods to incorporate side-information into the matrix completion task.  The inductive setting for matrix completion has been studied in~\cite{abernethy2006} through the use of tensor product kernels, and~\cite{Zhang2018} takes a non-convex optimization approach. Some examples in the transductive setting include~\cite{Xu2013,Kalofolias2014,RHRD15}. The last two papers use graph Laplacians to model the side information, which is similar to our approach. To achieve this, two graph Laplacians are used to define regularization functionals for both the rows and the columns so that rows (columns) with similar side information tend to have the same values. 
In particular, \cite{RHRD15} resembles our approach by applying the Laplacian regularization functionals to the underlying row and column factors directly. An alternate approach is taken in~\cite{Kalofolias2014}, where the regularization is instead applied to the row space (column space) of the ``surface'' matrix. 

In early work, the authors of~\cite{GRS93,GW95} proved mistake bounds for learning a binary relation which can be viewed as a special case of matrix completion.  In the regret setting, with minimal assumptions on the loss function,  the regret of the learner is bounded in terms of the {\em trace-norm} of the 
underlying comparator matrix in~\cite{CS11}. The authors of~\cite{\hazankale} provided tight upper and lower bounds in terms of a parameterized complexity class of matrices that include the bounded-trace-norm and bounded-max-norm matrices as special cases.  None of the above references considered the problem of side information.
The results in~\cite{gentile2013online,ourJMLR15,HPP16} are nearest in flavor to the results given here.  In~\cite{HPP16}, a mistake bound of $\cOT((m+n) \mc(\bU)^2)$ was given.   Latent block structure was also introduced to the online setting in~\cite{HPP16}; however, it was treated in a limited fashion and without the use of side information.  The papers~\cite{gentile2013online,ourJMLR15} both used side information to predict a limited complexity class of matrices.  In~\cite{gentile2013online}, side information was used to predict if vertices in a graph are ``similar''; in Section~\ref{sec:sim} we show how this result can be obtained as a special case of our more general bound.  In~\cite{ourJMLR15}, a more general setting was considered, which as a special case addressed the problem of a switching graph labeling.
The model in~\cite{ourJMLR15} is considerably more limited in its scope than our Theorem~\ref{thm:base}.  To obtain our technical results, we used an adaptation of the matrix exponentiated gradient algorithm~\cite{tsuda2005matrix}.  The general form of our regret bound comes from a matricization of the regret bound proven for a Winnow-inspired algorithm~\cite{litt88} for linear classification in the vector case given in~\cite{Sabato2015}. For a more detailed discussion, see Appendix~\ref{ap:regretmw}.

\vspace{-0.1in}
\section{Preliminaries} \label{sec:preliminaries}
For any positive integer $m$, we define $[m] := \left\{1,2,\ldots,m\right\}$. For any predicate $[\mbox{\sc pred}] :=1$ if $\mbox{\sc pred}$ is true and equals 0 otherwise, and $[x]_{+} := x [x>0]$.  

We denote the inner product of vectors $\bxx,\bw\in\Re^n$ as $\dotp{\bxx,\bw} = \sum_{i=1}^n x_i w_i$ and the norm as $\vn{\bw} = \sqrt{\dotp{\bw,\bw}}$.  The $ith$ coordinate $m$-dimensional vector  is denoted $\be_m^i := ([j=i])_{j\in [m]}$; we will often abuse notation and use $\be^i$ on the assumption that the dimensionality of the space may be inferred.  For vectors $\bp \in \mathbb{R}^m$ and $\bq \in \mathbb{R}^n$ we define $\con{\bp}{\bq}\in \mathbb{R}^{m+n}$ to be the concatenation of $\bp$ and $\bq$, which we regard as a column vector. Hence $\con{\bp}{\bq}^\trans\!{\con{\boldsymbol{\bar \bp}}{\boldsymbol{{\bar \bq}}}}=\bp^\trans \boldsymbol{\bar \bp}+\bq^\trans \boldsymbol{\bar \bq}$.  We let $\gm$ be the set of all $m\times n$ real-valued matrices.  If $\bX\in\gm$ then $\bX_i$ denotes the $i$-th $n$-dimensional row vector and the $(i,j)^{th}$ entry of $\bX$ is $X_{ij}$. We define $\bX^+$ and $\bX^\trans$ to be its pseudoinverse and transpose, respectively. 
 The trace norm of a matrix $\bX\in\gm$ is $\tnorm{\bX} = \trace{\sqrt{\bX^\trans \bX}}$, where $\sqrt{\cdot}$ indicates the unique positive square root of a positive semi-definite matrix, and $\trace{\cdot}$ denotes the trace of a square matrix. This is given by $\trace{\bY} = \sum_{i=1}^n Y_{ii}$ for $\bY\in \Re^{n\times n}$. 
 The $m \times m$ identity matrix is denoted $\bI^m$.   In addition, we define $\bS^m$ to be the set of $m \times m$ symmetric matrices and let $\bS^m_+$ and $\bS_{++}^m$ be the subset of positive semidefinite and strictly positive definite matrices respectively. Recall that the set of symmetric matrices $\bS_{+}^m$ has the following partial ordering: for every $\bM,\bN\in \bS_+^m$, we say that $\bM\preceq\bN$ if and only if $\bN-\bM \in \bS_+^m$. 
We also define the squared radius of $\RN \in \bS_+^m$ as $\RRN := \max_{i\in [m]} M^+_{ii}$.

For every matrix $\bU \in \Re^{m \times n}$, we define $\SP(\bU) = \{\bV \in \gm :\forall_{ij} V_{ij} U_{ij} > 0\}$, the set of matrices which are sign consistent with $\bU$. We also define $\SP^1(\bU) = \{\bV \in \gm : \forall_{ij} V_{ij} \sign(U_{ij}) \geq 1\}$, that is the set of matrices which are sign consistent  with $\bU$ with a margin of at least one.

The max-norm (or $\gamma_2$ norm \cite{CMSM07}) of a matrix $\bU \in \gm$ is defined by 
\begin{equation}
\label{eq:maxnorm}
\maxnorm{\U} := \min_{\PP \QQ^\trans = \U} \left\{\max_{1\leq i\leq m} \vn{\PP_{i}} ~\max_{1\leq j\leq n}\vn{\QQ_{j}}\right\}\,,
\end{equation} 
where the minimum is over all matrices $\PP \in \gmd$ and $\QQ \in \gnd$ and every integer $d$.
The {\em margin complexity} of a matrix  $\U\in\gm$ is 
\begin{equation}\label{eq:mcdef}
\mc(\U) := \min_{\bV\in \SP^1(\U)}  \maxnorm{\bV} = \!\!\!\min_{\PP \QQ^{\trans}\in\SP(\U)} \max_{ij}
\frac{\norm{\bP_{i}}{\norm{\bQ_{j}}}}{|\dotp{\bP_{i},\bQ_{j}}|}\,.
\end{equation}
Observe that for $\U\in\{-1,1\}^{m\times n}$, $1 \le \mc(\U) \le \maxnorm{\U}\le \min(\sqrt{m},\sqrt{n})$, where the lower bound follows from the right hand side of~\eqref{eq:mcdef} and the upper bound follows since we may decompose $\U = \U \bI^n$ or as $\U = \bI^m \U$.  Note there may be a large gap between the margin complexity and the max-norm.  In~\cite{\signcomplexity} a matrix in $\U\in \{-1,1\}^{n\times n}$ was given such that $\mc(\U) = \log n$ and $\maxnorm{\U} = \Theta\left(\frac{\sqrt{n}}{\log n}\right)$.
We denote the classes of $m \times d$ {\em row-normalized} and {\em block expansion} matrices as
$\RNM^{m,d} := \{\hat{\bP} \subset \Re^{m \times d} : \norm{\hat{\bP}_{i}} =1, i\in [m]\}$ and 
$\BEM^{m,d} := \{\bR \subset \{0,1\}^{m \times d} : \norm{\bR_i} =1, i\in [m],\operatorname{rank}(\bR) = d\}$, respectively.  Block expansion matrices may be seen as a generalization of permutation matrices, additionally duplicating rows (columns) by left (right) multiplication.  We define the {\em quasi-dimension} of a matrix  $\U\in\gm$ with respect to $\RN\in\bS_{++}^m,\, \CN\in\bS_{++}^n$ at margin~$\marh$ as
\begin{equation}\label{eq:defpdim}
\qD(\bU) := \min_{\nP \nQ^\trans = {\marh}\U} \RRN\trc\left(\nP^{\trans}\RN\nP  \right) 
+
\RCN\trc\left(\nQ^{\trans}\CN\nQ \right)\,,
\end{equation}
where the infimum is over all row-normalized matrices $\nP \in \RNM^{m,d}$ and $\nQ \in \RNM^{n,d}$ and every integer~$d$. If the infimum does not exist then $\qD(\bU) :=+\infty$.  Note that the infimum exists iff
$\maxnorm{\U} \le 1/\marh$.   Finally note that $\qD(\bU) = m + n$ if $\maxnorm{\U} \le 1/\marh$,
$\RN = \bI^m$ and $\CN = \bI^n$.

We now introduce notation specific to the graph setting.
Let then $\cG = (\cV,\cE,\bW)$ be an $m$-vertex  connected, weighted and undirected graph with positive weights.  
Let $\bA$ be the  $m\times m$ matrix such that $A_{ij}:= A_{ji}=W_{ij}$ if $(i,j)\in \cE(\cG)$ and $A_{ij}:= 0$ otherwise. Let $\bD$ be the $m\times m$ diagonal matrix such that $D_{ii}$ is the degree of vertex $i$. The Laplacian, $\bL$, of $\cG$ is defined as $\bD-\bA$.  Observe that if $\cG$ is connected, then $\bL$ is rank $m-1$ matrix with $\bone$ in its null space.  From $\bL$ we define the (strictly) positive definite {\em  PDLaplacian} 
$\bLb := \la+\left(\frac{\one}{\m}\right)\left(\frac{\one}{\m}\right)^\trans\RAD_{\bL}^{-1}$.  Observe that if $\bu\in [-1,1]^m$ then 
$(\bu^{\trans} \bLb \bu)  \RAD_{\bL^{\circ}}\le 2 (\bu^{\trans} \bL \bu\, \RAD_{\bL} +  1) $,
and similarly, 
$(\bu^{\trans} \bL \bu) \RAD_{\bL} \le \onehalf (\bu^{\trans} \bLb \bu)  \RAD_{\bL^{\circ}}$ 
(see~\cite{herbster2006prediction} for details of this construction).
\section{Transductive Matrix Completion}\label{sec:basic}
Algorithm~\ref{alg:base} corresponds to an adapted {\sc Matrix Exponentiated Gradient} ({\sc MEG}) algorithm~\cite{tsuda2005matrix} to perform transductive matrix completion with side information.   Although the algorithm is a special case of {\sc MEG}, the following theorem does not follow as a special case of the analysis in~\cite{tsuda2005matrix}. 
\begin{algorithm}[h]
\footnotesize
\begin{algorithmic} 
\caption{Predicting a binary matrix with side information in the transductive setting. \label{alg:base}}
\renewcommand{\algorithmicrequire}{\textbf{Parameters:}} 
\REQUIRE Learning rate: $0<\lr$\ , quasi-dimension estimate: $1\le \upD$, margin estimate: $0 <\marh\le 1$, non-conservative flag $[\mbox{\sc non-conservative}]\in \{0,1\}$
and side information matrices $\RN\in\bS_{++}^m,\, \CN\in\bS_{++}^n$ with $m+n \geq 3$\\  \vspace{.1truecm}
\renewcommand{\algorithmicrequire}{\textbf{Initialization:}}
\REQUIRE $\nmset \leftarrow \emptyset\ ; \ \wem{1}\leftarrow\frac{\upD}{(m+n)} \id^{m+n}$. \vspace{.1truecm}
\renewcommand{\algorithmicrequire}{\textbf{For}}
\REQUIRE $t =1,\dots,T$  \vspace{.1truecm}
\STATE $\bullet$ Receive pair $(i_t,j_t) \in [m] \times [n].$ \\ \vspace{.1truecm}
\STATE $\bullet$ Define 
\begin{equation}\label{eq:bbxdef}
\XT:=  \bx{t}(\bx{t})^\trans := \con{\frac{\psm\eit}{\sqrt{2\RRN}}}{\frac{\ppsm\ejt}{\sqrt{2\RCN}}}  \con{\frac{\psm\eit}{\sqrt{2\RRN}}}{\frac{\ppsm\ejt}{\sqrt{2\RCN}}}^\trans\,.
\end{equation}\vspace{-.2in}
\STATE $\bullet$  Predict 
\begin{equation*} \Yrv \sim \mbox{\sc Uniform}(-\marh,\marh) \!\times\! [\mbox{\sc non-conservative}]\,;\  
\ybt \!\leftarrow\! \tr{\wem{t}\XT}-1 \,;\  \yht\! \leftarrow\! \sign(\ybt - \Yrv)\,.\vspace{-.2in}
\end{equation*}
\STATE $\bullet$ Receive label $\y{t} \in \{-1,1\}$\,.\vspace{.1truecm}
\STATE $\bullet$ If $y_t \ne \yht$ then $\nmset \leftarrow \nmset \cup \{t\}.$
\STATE $\bullet$ If $y_t\ybt < \marh \times [\mbox{\sc non-conservative}]$ then \vspace{-.08in}
\begin{equation*}
\wem{t+1} \leftarrow\exp\left(\log(\wem{t})+ \lr\y{t}\XT\right)\,.
\end{equation*}\vspace{-.15in}
\STATE $\bullet$ Else  $\wem{t+1}  \leftarrow \wem{t}$.
\end{algorithmic}
\end{algorithm}
In the following theorem we give an expected regret bound.  In the realizable case (with exact tuning), the mistakes are bounded by $\cOT(\pdim \mc(\bU)^2 )$.  The term $\pdim$ evaluates the predictive quality of the side information provided to the algorithm.  In order to evaluate $\pdim$, we provide an upper bound in Theorem~\ref{thm:bndpdim} that is more straightforward to interpret.  Examples  are given in Sections~\ref{section:graph_side_info} and~\ref{sec:boxstory}, where Theorem~\ref{thm:bndpdim} is applied to evaluate the quality of side information in idealized scenarios.
 \begin{theorem}\label{thm:base}
The expected regret of Algorithm~\ref{alg:base} with {\bf non-conservative} updates  ($[\mbox{\sc non-conservative}]=1$)  and parameters $\mar \in (0,1]$, $\upD \geq \qD(\bU)$ ,
$\lr = \sqrt{\frac{\upD \log(m+n) }{2 T}}$, p.d. matrices $\RN\in \SPDM^m$ and $\CN\in \SPDM^n$ is bounded  by
\begin{equation}\label{eq:baseregret}
\Exp[\nm] - \sum_{t\in [T]} [ y_t \ne U_{i_t j_t}] \le  4 \sqrt{2 \frac{\upD}{\mar^2}\log(m+n) T} \end{equation}
for all $\bU\in \sm$ with $\maxnorm{\bU} \le 1/\mar$.

The mistakes in the {\bf realizable} case with  {\bf conservative} updates ($[\mbox{\sc non-conservative}]=0$) and parameters
$1/\lr = 1/\mar\geq \mc(\bU)$, $\upD \geq \min\limits_{\bV \in \SP^1(\bU)}  \qD(\bV)$ and for  $T\ge 1$ are bounded by, 
\begin{equation}\label{eq:basereal}
\nm \le 3.6 \frac{\upD}{\mar^2}\log(m+n)\,,
\end{equation}
for all $\bU\in \sm$ with $\mc(\bU) \le 1/\mar$ and $y_t =  U_{i_t j_t}$ for all $t\in\nmset$. 
\end{theorem}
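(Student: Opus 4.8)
The plan is to analyze Algorithm~\ref{alg:base} as an instance of Matrix Exponentiated Gradient and to derive both bounds from a single potential-function (relative-entropy) argument, with the comparator being a rank-one-per-block PSD matrix built from the factorization witnessing $\qD(\bU)$. First I would fix a comparator: given $\bU$ with $\maxnorm{\bU}\le 1/\marh$, take row-normalized $\nP\in\RNM^{m,d}$, $\nQ\in\RNM^{n,d}$ achieving (or approaching) the infimum in \eqref{eq:defpdim}, so that $\nP\nQ^\trans = \marh\bU$. Form the concatenated feature map as in \eqref{eq:bbxdef} and the associated comparator matrix $\bW^* \propto \con{\psm\,\hat{\bP}_{i}/\sqrt{\cdot}}{\cdot}\con{\cdot}{\cdot}^\trans$-type PSD matrix (a scaled sum of outer products of the embedded row/column factors) normalized to have trace $\upD$, matching the scale of $\wem{1}=\frac{\upD}{m+n}\bI^{m+n}$. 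The key algebraic identity to check is that $\tr{\bW^*\XT}$ reproduces, up to the fixed scaling by $\marh$ and the radii $\RRN,\RCN$, the entry $\marh\, U_{i_t j_t}$ of the scaled target matrix; this is where the definitions of $\XT$, the PDLaplacian-style normalization, and $\qD$ are tuned to fit together, and it is the step I expect to be the main obstacle — making all the constants ($2\RRN$, $2\RCN$, the $\marh$, the trace normalization) line up so that the comparator both has bounded quantum relative entropy to $\wem{1}$ (bounded by roughly $\upD\log(m+n)$, using $\tr{\nP^\trans\RN\nP}\le\qD/\RRN$ etc.) and satisfies a margin condition $y_t\,\tr{\bW^*\XT}\ge$ (something)$\cdot\marh$ on every trial.

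Next I would run the standard MEG one-step analysis: the quantum-relative-entropy drop $\Delta(\bW^*,\wem{t})-\Delta(\bW^*,\wem{t+1})$ on an update trial is at least $\lr\,y_t\,\tr{(\bW^*-\wem{t})\XT}$ minus a curvature term of order $\lr^2$, using $0\preceq\XT$ and $\tr{\XT}\le 1$ (which holds because $\eit^\trans\RNp\eit/(2\RRN)\le \tfrac12$ by definition of $\RRN$, and similarly for the column part, so $\tr{\XT}\le 1$; a Golden–Thompson / Bregman-inequality bound on $\expp{\cdot}$ gives the quadratic term). Summing over the update trials $t\in\nmset$ (conservative case) or all trials (non-conservative case) telescopes the left side to at most $\Delta(\bW^*,\wem{1})\le \upD\log(m+n)$, while the right side lower-bounds the ``margin surplus'' of the comparator against the algorithm's thresholded predictions. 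In the non-conservative case the randomized threshold $\Yrv\sim\mbox{\sc Uniform}(-\marh,\marh)$ converts the hinge-type surplus into an expected-mistake statement: whenever the algorithm errs, $\Exp$ over $\Yrv$ of the relevant indicator contributes at least a constant times the per-trial progress, and the extra term $\sum_t[y_t\ne U_{i_tj_t}]$ on the left of \eqref{eq:baseregret} absorbs the trials where the comparator itself disagrees with the observed label. Optimizing the free parameter $\lr=\sqrt{\upD\log(m+n)/(2T)}$ then yields the $4\sqrt{2\,\upD\,\mar^{-2}\log(m+n)\,T}$ regret bound (with $1/\marh$ playing the role of $\mc(\bU)$ after the substitution $\upD\ge\qD(\bU)$, rescaling the comparator by $\mc(\bU)$ vs.\ $1/\mar$).

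For the realizable conservative bound \eqref{eq:basereal} I would specialize: pick $\bV\in\SP^1(\bU)$ minimizing $\qD(\bV)$, so the comparator now satisfies the hard margin $y_t\,\tr{\bW^*\XT}\ge c\,\marh$ with no slack on the (mistaken) trials that trigger updates, and $y_t=U_{i_tj_t}=\sign(V_{i_tj_t})$ there. Each mistake forces an update with a guaranteed relative-entropy decrease bounded below by a constant multiple of $\marh^2$ (after choosing $\lr=\marh$, i.e.\ $1/\lr=1/\marh\ge\mc(\bU)$, to balance the linear margin gain against the $\lr^2$ curvature loss); since the total decrease cannot exceed $\Delta(\bW^*,\wem{1})\le\upD\log(m+n)$, the number of mistakes is at most $(\upD\log(m+n))/(c'\marh^2)$, and a careful accounting of the constant $c'$ (from the Bregman/Golden–Thompson term with $\tr{\XT}\le1$ and the margin-$1$ property of $\SP^1$) gives the stated $3.6$. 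The condition $T\ge 1$ and $m+n\ge 3$ are only needed so $\log(m+n)>0$ and the initialization is well defined. The main risk throughout is bookkeeping of the numerical constants and verifying the comparator's relative entropy is exactly $\le \upD\log(m+n)$ rather than something larger; conceptually everything reduces to the vector Winnow/MEG template of~\cite{Sabato2015,tsuda2005matrix} matricized via the $\con{\cdot}{\cdot}$ embedding.
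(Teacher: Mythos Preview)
Your proposal is correct and follows essentially the same route as the paper: build the PSD comparator $\tilde{\bU}=\bZ\bZ^{\trans}$ from the optimal row-normalized factorization so that $\tr{\tilde{\bU}\XT}-1=\marh\,U_{i_tj_t}$ (note the crucial ``$-1$'' offset, which comes from $\tfrac12\|\nP_{i_t}+\nQ_{j_t}\|^2=1+\langle\nP_{i_t},\nQ_{j_t}\rangle$ for unit rows) and $\tr{\tilde{\bU}}=\pdim\le\upD$, then run the quantum-relative-entropy MEG argument with Golden--Thompson, obtaining per-mistake progress $\ge(3-e)\marh^2$ in the conservative case and a hinge-loss regret in the non-conservative case. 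The paper's only additional wrinkles are that the non-conservative bound is first proved as a hinge-loss regret and then converted to expected $0$--$1$ regret via $2\,\Exp[y_t\ne\yht]\le h_\marh(y_t,\ybt)$ and $h_\marh(y_t,\marh U_{i_tj_t})\le 2[y_t\ne U_{i_tj_t}]$, and that the stated learning rate requires a separate trivial argument when $T<2\upD\log(m+n)$---precisely the bookkeeping you already flagged.
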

If the side information is vacuous, that is $\RN = \bI^m$ and $\CN = \bI^n$, then $\pdim = m+n$.   In this scenario, we recover a special 
case\footnote{In~\cite{\hazankale}, a regret bound for general loss functions for matrix completion without side information is given for $(\beta,\tau)$-decomposable matrices.  When $\beta$ is at its minimum over all possible decompositions, we recover the bound  up to constant factors with respect to the expected 0-1 loss.  On the algorithmic level, our works are similar except that the algorithm of~\cite{\hazankale} contains an additional projection step that dominates the computation time of the update.} of the analysis 
of~\cite{\hazankale} up to constant factors and with the additional assumption of realizability we recover~\cite[Theorem 3.1]{HPP16}.
The term $\pdim$ is difficult to directly quantify. 
In the next section, we specialize our analysis to the case that the matrix $\U$ has a latent block structure.  
\vspace{-0.1in}
\section{Latent Block Structure}\label{sec:lbs}
We introduce the concept class of $(k,\ell)$-binary-biclustered matrices 
(previously defined in~\cite[Section 5]{HPP16}), 
in the following definition.  We then give an upper bound to $\qD(\U)$ when a matrix has this type of latent structure in Theorem~\ref{thm:bndpdim}.  The magnitude of the bound will depend on how ``predictive'' matrices $\RN$ and $\CN$ are of the latent block structure.  In Sections~\ref{section:graph_side_info} and~\ref{sec:sim}, we will use a variant of the discrete Laplacian matrix for $\RN$ and $\CN$ to encode side information and illustrate the resultant bounds for idealized scenarios.
\begin{definition}
\label{def:bikl}
The class of $(k,\ell)$-binary-biclustered matrices is defined as
\begin{equation*}
\bikl = \{\bU \in \Uset : \br \in [k]^m, \bc \in [\ell]^n, \bU^* \in \{-1,1\}^{k\times\ell},~U_{ij} = U^*_{r_ic_j},~i \in [m], j \in [n] \} \,.
\end{equation*} 
\end{definition}
Thus each row $r_i$ is associated with a latent factor in $[k]$ and each column $c_j$ is associated with a latent factor in $[\ell]$ and the interaction of factors is determined by a matrix $\bU^* \in \{-1,1\}^{k\times\ell}$.
More visually, a binary matrix is $(k,\ell)$-biclustered if there exists some permutation of the rows and columns into a $k \times \ell$ grid of blocks each uniformly labeled $-1$ or $+1$, as illustrated in Figure~\ref{fig:exbcm}.   Determining if a matrix is in $\bikl$, may be done directly by a greedy algorithm.  However, the problem of determining if a matrix with missing entries may be completed to a matrix in $\mathbb{B}_{k,n}^{m,n}$ was shown in~\cite[Lemma 8]{GKOS18} to be {\sc NP-complete} by reducing the problem to  {\sc Clique Cover}.
\begin{figure}
\begin{center}
\includegraphics[width=.45\linewidth]{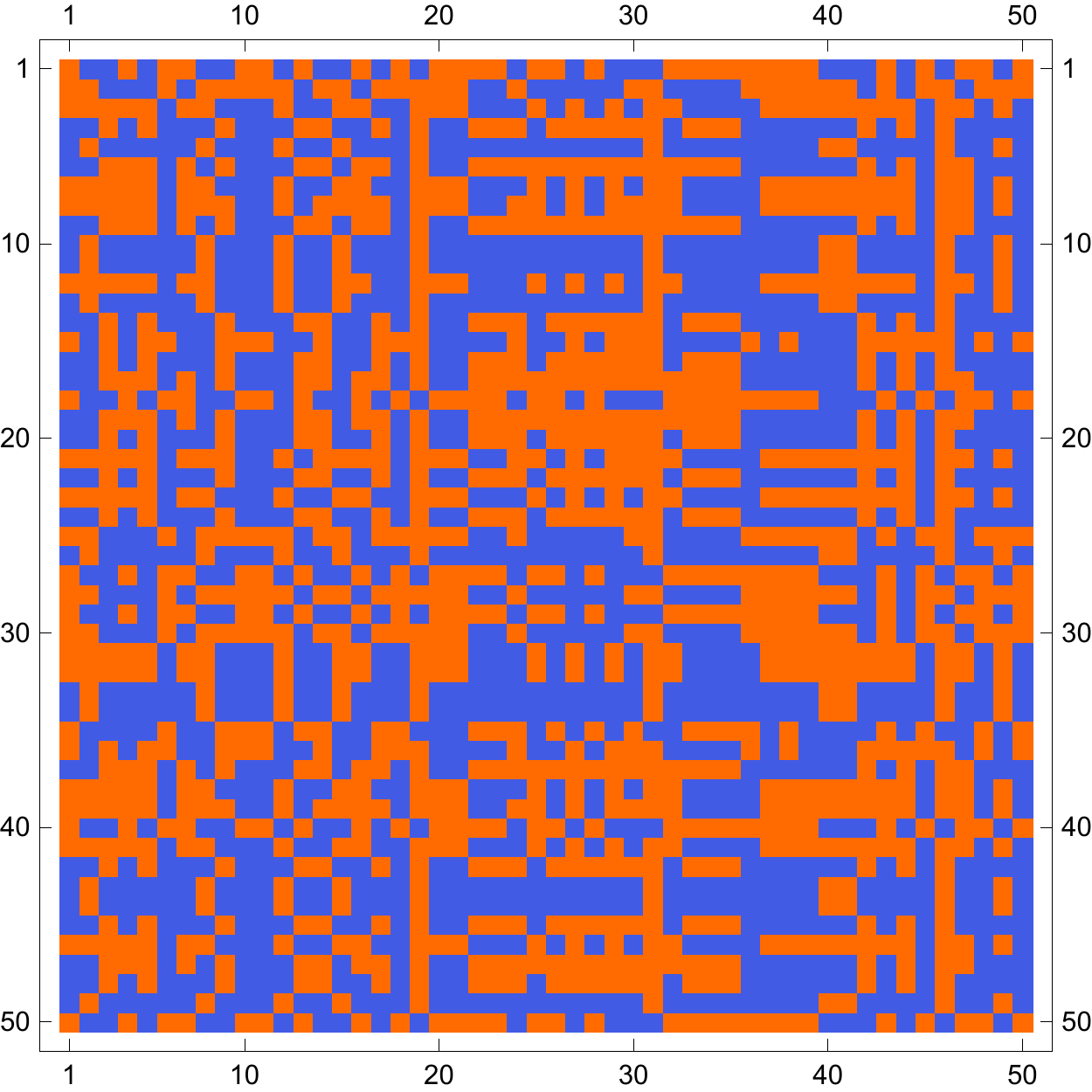}\ \ \ \includegraphics[width=.45\linewidth]{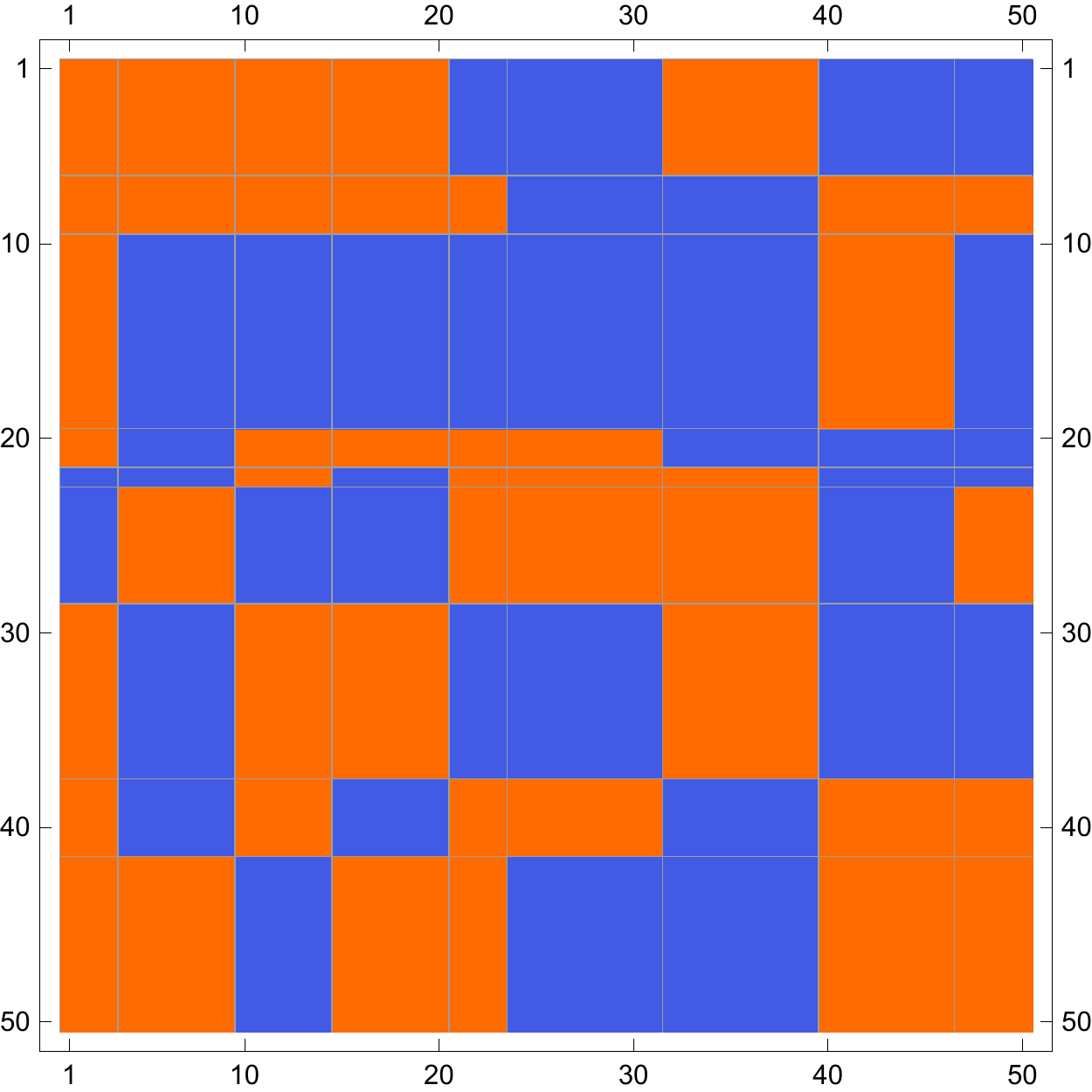}
\end{center}
\caption{A $(9,9)$-biclustered $50\times 50$ binary matrix before/after permuting into latent blocks.}\label{fig:exbcm}
\end{figure}

Many natural functions of matrix complexity are invariant to the presence of block structure.  A function $f:\cX \rightarrow \Re$ with respect to a class of matrices $\cX$  is {\em block-invariant} if for all $m,k,n,\ell\in\mathbb{N}^+$ with $m\ge k$, $n\ge\ell$, $\bR\in\BEM^{m,k}$ and $\bC\in\BEM^{n,\ell}$  we have that $f(\bX) = f(\bR\bX\bC^{\trans})$ for any $k \times \ell$ matrix $\bX\in\cX$.
The max-norm, margin complexity, rank and VC-dimension\footnote{Here, a hypothesis class $\cH$ defines a matrix via $\U := (h(x))_{h\in\cH,x\in\cX}$.} are all block-invariant. 
 From the block-invariance of the max-norm, we may conclude that for $\U\in\bikl$, 
 \begin{equation}\label{eq:mclbl}
 \mc(\U)\le\maxnorm{\U}=\maxnorm{\Ustar} \le \min(\sqrt{k},\sqrt{\ell}).
 \end{equation}
This follows since we may decompose $\bU= \bR \Ustar \bC^{\trans}$ for some $\Ustar\in\{-1,1\}^{k\times\ell}$, $\bR\in\BEM^{m,k}$ and $\bC\in\BEM^{n,\ell}$ and then use the  observation in the preliminaries that the max-norm of any matrix in $\Uset$ is bounded by $\min(\sqrt{m},\sqrt{n})$.

In the following theorem, we give a bound for the quasi-dimension $\qD(\bU)$ which will scale with the dimensions of the latent block structure and the ``predictivity'' of $\RN$ and $\CN$  with respect to that block structure.  The bound is independent of $\marh$ in so far as $\qD(\bU)$ is finite.

\begin{theorem}\label{thm:bndpdim}
If  $\U\in\bikl$ define 
\begin{equation}\label{eq:defpdimb}
\qDBF(\U) := 
\begin{cases}
 2 \trc( \bR^{\trans} \RN \bR)\RRN 
\!+ \!2 \trc( \bC^{\trans} \CN \bC)\RCN\!+\! 2k\! +\! 2\ell \, & \RN \text{ and }\CN \text{ are PDLaplacians} \\
k\trace{\bR^{\trans} \bM \bR}\RRN+\ell\trace{\bC^{\trans} \bN \bC}\RCN
 & \RN\in\SPDM^m \text{ and }\CN\in\SPDM^n
\end{cases}\,,
\end{equation}
as the minimum over all decompositions of $\U = \bR \Ustar \bC^{\trans}$ for
$\bR\in \BEM^{m,k}$, $\bC\in \BEM^{n,\ell}$ and $\Ustar \in \{-1,1\}^{k\times \ell}$.  
Thus for $\U\in\bikl$, 
\begin{align*}
\qD(\U) & \le \qDBF(\U)\quad\quad (\mbox{if }\maxnorm{\U} \le {1}/{\marh}) \\
\min_{\bV\in\SP^1(\U)} \qD(\bV) & \le \qDBF(\U) \quad\quad (\mbox{if }\mc(\U) \le {1}/{\marh})\,.
\end{align*}
\end{theorem}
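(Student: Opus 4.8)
\textbf{Proof proposal for Theorem~\ref{thm:bndpdim}.}

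The plan is to construct, for a given decomposition $\U = \bR \Ustar \bC^{\trans}$ with $\bR\in\BEM^{m,k}$, $\bC\in\BEM^{n,\ell}$, $\Ustar\in\{-1,1\}^{k\times\ell}$, an explicit pair of row-normalized matrices $\nP\in\RNM^{m,d}$, $\nQ\in\RNM^{n,d}$ witnessing $\nP\nQ^{\trans}={\marh}\U$ (respectively $\nP\nQ^{\trans}=\marh\bV$ for a suitable $\bV\in\SP^1(\U)$), and then to bound $\RRN\trc(\nP^{\trans}\RN\nP)+\RCN\trc(\nQ^{\trans}\CN\nQ)$ from above by $\qDBF(\U)$. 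The key observation is that the block-expansion structure lets us push the construction through the small matrix $\Ustar$: first find a factorization $\hat{\bP}^*\hat{\bQ}^{*\trans}$ of (a margin-one sign-consistent version of) $\Ustar$ with row-normalized factors $\hat{\bP}^*\in\RNM^{k,d}$, $\hat{\bQ}^*\in\RNM^{\ell,d}$ — this exists because $\mc(\Ustar)\le\min(\sqrt k,\sqrt\ell)<\infty$, and by the definition of the max-norm/margin complexity we can rescale so that the product equals $\marh$ times the appropriate sign-consistent matrix — and then set $\nP := \bR\hat{\bP}^*$ and $\nQ := \bC\hat{\bQ}^*$. Since each row of $\bR$ and $\bC$ is a standard basis vector, $\nP$ and $\nQ$ are again row-normalized, and $\nP\nQ^{\trans} = \bR(\hat{\bP}^*\hat{\bQ}^{*\trans})\bC^{\trans} = \marh \bR\Ustar\bC^{\trans} = \marh\U$ in the first case (or $\marh\bV$ with $\bV=\bR\bV^*\bC^{\trans}\in\SP^1(\U)$ in the second).

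Next I would evaluate the two trace terms. We have $\trc(\nP^{\trans}\RN\nP) = \trc(\hat{\bP}^{*\trans}\bR^{\trans}\RN\bR\hat{\bP}^*)$. Writing $\bR^{\trans}\RN\bR =: \RN'\in\SPDM^k$ (or a PD-Laplacian-like matrix) and using that $\hat{\bP}^*$ has unit-norm rows, I would bound $\trc(\hat{\bP}^{*\trans}\RN'\hat{\bP}^*) \le \trc(\RN')\cdot\max_a\vn{\hat{\bP}^*_a}^2$-type estimate — more carefully, $\trc(\hat{\bP}^{*\trans}\RN'\hat{\bP}^*)=\sum_{a}(\RN')_{a a}\vn{\hat{\bP}^*_a}^2 + (\text{off-diagonal cross terms})$; the clean route is to bound this by $\operatorname{rank}$-type or by the number of distinct blocks $k$ times a radius. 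This is exactly where the two cases of $\qDBF$ diverge: for general PD $\RN,\CN$ one absorbs a factor of $k$ (resp. $\ell$) coming from bounding $\trc$ of the $d\times d$ Gram structure of $\hat{\bP}^*$, giving the $k\trc(\bR^{\trans}\bM\bR)\RRN$ term; for PD-Laplacians one exploits the $[-1,1]^m$-quadratic-form inequality recorded in the preliminaries ($(\bu^{\trans}\bLb\bu)\RAD_{\bL^\circ}\le 2(\bu^{\trans}\bL\bu\,\RAD_{\bL}+1)$) applied columnwise to the $\{-1,1\}$-valued embedding of $\Ustar$, which produces the additive $+2k+2\ell$ and the factor $2$ rather than a multiplicative $k$ or $\ell$. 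In both cases one must also check $\RRN$ and $\RCN$ are handled correctly — note $\RRN = \max_i M^+_{ii}$ refers to the pseudoinverse, so the radius bookkeeping between $\RN$ and $\RN^+$ (and the corresponding quantities for $\bR^{\trans}\RN\bR$) has to be tracked. Finally, since the construction works for every decomposition $\U=\bR\Ustar\bC^{\trans}$, taking the minimum over such decompositions on the right-hand side gives $\qD(\U)\le\qDBF(\U)$, and since $\qD$ (resp. $\min_{\bV\in\SP^1}\qD(\bV)$) is itself an infimum, the bound follows; the $\SP^1$ version is identical after replacing $\Ustar$ by the margin-one matrix $\bV^*$ achieving $\maxnorm{\bV^*}=\mc(\Ustar)$.

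The main obstacle I anticipate is getting the constants and the radius factors exactly right in the PD-Laplacian case: one has to (i) choose the optimal dimension $d$ and the optimal margin-one representation of $\Ustar$ (likely $\Ustar$ itself, i.e. $d=\ell$ or $d=k$ with $\hat{\bP}^*$ the normalized rows of $\Ustar$), (ii) apply the Laplacian quadratic-form inequality to each of the $d$ columns of the embedding separately and sum, and (iii) verify that the "$+1$" per column, summed over $d\le\ell$ columns on the $\bR$ side and $d\le k$ on the $\bC$ side, collapses to the claimed $+2k+2\ell$ rather than something larger — this requires using the block structure to see that the relevant column count is controlled by $k$ and $\ell$, not by $m$ and $n$. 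The general PD case is more routine, essentially a direct trace manipulation plus the unit-norm-rows property of block-expansion matrices, but care is needed because the $k$ and $\ell$ prefactors must come out of the Cauchy–Schwarz / rank argument rather than being inserted by hand.
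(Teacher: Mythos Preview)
Your high-level plan is right and matches the paper: build $\nP=\bR\nPstar$, $\nQ=\bC\nQstar$ from a row-normalised factorisation of (a suitable multiple of) $\Ustar$ --- the paper calls this a \emph{block-invariant $\marh$-decomposition} --- and then bound $\trc(\nP^\trans\RN\nP)$ and $\trc(\nQ^\trans\CN\nQ)$ separately.

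For general positive definite $\RN$, the estimate ``$\trc(\RN')\cdot\max_a\vn{\nPstar_a}^2$'' you wrote is not the one that works; it drops the factor $k$. The clean one-line argument is
\[
\trc(\nP^\trans\RN\nP)=\trc\!\big(\nPstar(\nPstar)^\trans\,\bR^\trans\RN\bR\big)\le\trc\!\big(\nPstar(\nPstar)^\trans\big)\,\trc\!\big(\bR^\trans\RN\bR\big)=k\,\trc\!\big(\bR^\trans\RN\bR\big),
\]
using $\trc(\bA\bB)\le\lambda_{\max}(\bA)\trc(\bB)\le\trc(\bA)\trc(\bB)$ for PSD $\bA,\bB$ together with $\trc(\nPstar(\nPstar)^\trans)=\sum_a\vn{\nPstar_a}^2=k$. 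Your ``number of distinct blocks $k$'' intuition is correct, but the inequality you stated is not.

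The real gap is in the PDLaplacian case. You attribute both the additive $+2k+2\ell$ and the multiplicative ``factor $2$ rather than $k$'' to the preliminaries inequality $(\bu^\trans\bLb\bu)\RAD_{\bLb}\le 2(\bu^\trans\bL\bu\,\RAD_{\bL}+1)$ applied columnwise. That inequality only performs the conversion between the PDLaplacian $\bLb$ and the singular Laplacian $\bL$; applied to the $d$ columns of $\nP$ it contributes an additive $+2d$ (this is precisely the concern you flag in (iii)), but it cannot by itself replace the multiplicative $k$ of the general-PD argument by a $2$. The step that does that is separate and is missing from your proposal: once one is working with the ordinary Laplacian $\bL$, the quadratic form has the edge-sum representation
\[
\trc(\bX^\trans\bL\bX)=\sum_{(i,j)\in E}A_{ij}\,\vn{\bX_i-\bX_j}^2,
\]
and for a block-invariant $\nP=\bR\nPstar$ one has $\nP_r=\nP_s$ whenever $\bR_r=\bR_s$, while if $\bR_r\ne\bR_s$ then $\vn{\bR_r-\bR_s}^2=2$ (distinct one-hot vectors) and $\vn{\nP_r-\nP_s}^2\le 4$ (distinct unit vectors). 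Comparing edge by edge yields $\trc(\nP^\trans\bL\nP)\le 2\,\trc(\bR^\trans\bL\bR)$, and \emph{this} is where the factor $2$ comes from. The full chain is: convert $\bLb\to\bL$ on $\nP$ (additive cost controlled by $d$, hence by $k$ and $\ell$), apply the edge-sum comparison to pass from $\nP$ to $\bR$ (factor $2$), then convert $\bL\to\bLb$ on $\bR$ via the companion inequality. Without the edge-sum step you will not recover the $2\,\trc(\bR^\trans\RN\bR)\RRN$ term at all, only the weaker $k\,\trc(\bR^\trans\RN\bR)\RRN$ from the general-PD case.
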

The bound $\qD(\U) \le \qDBF(\U)$ 
allows us to bound the quality of the side information in terms of a hypothetical learning problem.  Recall that $\argmin_{r_i y_i \ge 1 : i\in [m]}  (\br^{\trans} \RN \br)  \RRN$ is the upper bound on the mistakes per Novikoff's theorem~\cite{\novikoff} for predicting the elements of vector $\by \in \{-1,1\}^m$ with a kernel perceptron using $\RN^{-1}$ as the kernel.  Hence the term $\cO(\trc( \bR^{\trans} \RN \bR)\RRN)$ in~\eqref{eq:defpdimb} may be interpreted as a bound for a one-versus-all $k$-class kernel perceptron where $\bR$ encodes a labeling from $[k]^m$ as one-hot vectors.  
We next show an example where $ \qDBF(\U) \in \cO(k+\ell)$ with ``ideal'' side information.
\vspace{-0.1in}
\subsection{Graph-based Side Information}\label{section:graph_side_info}
We may use a pair of separate graph Laplacians to represent the side information on the ``rows'' and the ``columns.''  A given row (column) corresponds to a vertex in the ``row graph'' (``column graph'').  The weight of edge $(i,j)$ represents our prior belief that row (column) $i$ and row (column) $j$ share the same underlying factor.   Such graphs may be inherent to the data. For example, we have a social network of users and a network based on shared actors or genres for the movies in a ``Netflix'' type scenario.  Alternatively, as is common in graph-based semi-supervised learning~\cite{BN04,Zhu2009} we may build a graph based on vectorial data  associated with the rows (columns), for example, user demographics.  Although the value of $\pdim$ will vary smoothly with the predictivity of $\RN$ and $\CN$ of the factor structure, in the following we give an example to quantify $\pdimb$ in a best case scenario.
\vspace{-0.1in}
\subsection*{Bounding $\pdimb$ for ``ideal'' graph-based side information}
In this ideal case we are assuming that we know the partition of $[m]$ that maps rows to factors.   The rows that share factors have an edge between them and there are no other edges. Given $k$ factors, we then have a graph that consists of $k$ disjoint cliques.  However, to meet the technical requirement that the side information matrix $\RN (\CN)$ is positive definite, we need to connect the cliques in a minimal fashion.   We achieve this by connecting the cliques like a ``star'' graph.  Specifically, a clique is arbitrarily chosen as the center and a vertex in that clique is arbitrarily chosen as the central vertex. From each of the other cliques, a vertex is chosen arbitrarily and connected to the central vertex.  Observe that a property of this construction is that there is a path of length $\le4$ between any pair of vertices. 
Now we can use the bound from Theorem~\ref{thm:bndpdim},
\[
\pdimb =
 2 \trc( \bR^{\trans} \RN \bR)\RRN 
+  2 \trc( \bC^{\trans} \CN \bC)\RCN+ 2k + 2\ell\,,
\]
to bound $\pdim \le \pdimb$ in this idealized case.  We focus on the rows, as a parallel argument may be made for the side information on the columns.  
Consider the term $\trc( \bR^{\trans} \RN \bR)\RRN$, where $\RN := \bLb$ is the PDLaplacian formed from a graph with Laplacian $\bL$. Then using the observation from the preliminaries that $(\bu^{\trans} \bLb \bu)  \RAD_{\bLb}\le 2 (\bu^{\trans} \bL \bu\, \RAD_{\bL} +  1)$, we have that $\trc( \bR^{\trans} \RN \bR)\RRN \le 2\trc(\bR^{\trans} \bL \bR)\RAD_{\bL} +2k$.  To evaluate this, we use the well-known equality of $\trc(\bR^{\trans} \bL \bR) = \sum_{(i,j)\in E} \norm{\bR_i - \bR_j}^2$. Observing that each of the $m$ rows of $\bR$ is a ``one-hot'' encoding of  the corresponding factor, only the edges between classes then contribute to the sum of the norms, and thus by construction  $\trc(\bR^{\trans} \bL \bR) \le k-1$.  We bound $\mathcal{R}_{\bL} \le 4$, using the fact that the graph diameter is a bound on $\mathcal{R}_{\bL}$ 
(See~\cite[Theorem 4.2]{herbster2005online}).   Combining terms and assuming similar idealized side information on the columns, we obtain $\pdimb \in O(k + \ell)$.  Observe then that since the comparator matrix is $(k,\ell)$-biclustered, we have in the realizable case (with exact tuning), that $\mc(\U)^2 \le \min(k,\ell)$ by~\eqref{eq:mclbl}. Thus, the mistakes of the algorithm are bounded by $\cOT(\mc(\U)^2\pdimb)=\cOT(k\ell)$. 
This upper bound is tight up to logarithmic factors as we may decompose $\bU= \bR \Ustar \bC^{\trans}$ for some $\Ustar\in\{-1,1\}^{k\times\ell}$, $\bR\in\BEM^{m,k}$ and $\bC\in\BEM^{n,\ell}$ and  force a mistake for each of the $k\ell$ entries in $\Ustar$.

Can side information provably help?  Unsurprisingly, yes. Consider the set of matrices such that each row  is either all `+1' or all `-1'.  This set is exactly $\mathbb{B}^{m,n}_{2,1}$.  Clearly an adversary can force $m$ mistakes whereas with ``ideal'' side information the upper bound is $\cOT(1)$.

Similar results to the above can be obtained via alternate positive definite embeddings. For example, consider a {\em $k$-partition} kernel of $[m]$ where $K_{\epsilon,S_1,\ldots,S_k}(i,j) := [i,j\in S_r : r\in[k]] +\epsilon [i=j]$ for some partition of $[m]$ into disjoint sets $S_1,\ldots,S_k$.  By using $\RN^{-1} = (K(i,j))_{i,j\in [m]}$ one can obtain for small~$\epsilon$,  bounds that are tighter than achieved by the Laplacian with respect to constant factors.  We have focused on the Laplacian as a method for encoding side information as it is more straightforward to  encode~\cite{Zhu2009} ``softer'' knowledge of relationships.
\vspace{-0.1in}
\subsection{Online Community Membership Prediction}\label{sec:sim}
A special case of matrix completion is the case where there are $m$ objects which are assumed to lie in $k$ classes (communities).   In this case, the underlying matrix $\bU\in\{-1,1\}$  is given by $U_{ij}=1$ if $i$ and $j$ are in the same class and $U_{ij}= -1$ otherwise.  Thus this may be viewed as an online version of community detection or ``similarity'' prediction.   In~\cite{gentile2013online}, this problem was addressed when the side information was encoded in a graph and the aim was to perform well when there were few edges between classes (communities).

Observe that this is an example of  a $(k,k)$-biclustered $m \times  m$ matrix where $\Ustar = 2 \bI^k - \bone \bone^{\trans}$ and there exists $\bR\in\BEM^{m,k}$ such that $\bU := \bR \Ustar\bR^{\trans}$.  Since the max-norm is block-invariant, we have that $\maxnorm{\bU} = \maxnorm{\bU^*}$. In the case of a general $k\times k$ biclustered matrix, $\maxnorm{\U^*}\le \sqrt{k}$ (see~\eqref{eq:mclbl}).  However in the case of ``similarity prediction'', we have $\maxnorm{\Ustar} \in O(1)$.  This follows since we have a decomposition $\Ustar = \bP\bQ^{\trans}$ by $\bP,\bQ\in\Re^{k,k+1}$ with $\bP := (P_{ij} = \sqrt{2} [i=j] + [j=k+1])_{i\in [k],j\in [k+1]}$ and $\bQ := (Q_{ij} = \sqrt{2} [i=j] - [j=k+1])_{i\in [k],j\in [k+1]}$,  thus giving $\maxnorm{\Ustar}\le 3$.  This example also shows that there may be an arbitrary  gap between rank and max-norm of $\pm 1$ matrices as the rank of $\Ustar$ is $k$  (in~\cite{CMSM07} this gap between the max-norm and rank was previously observed).  Therefore, if the side-information matrices are taken to be the same PDLaplacian $\RN=\CN$ defined from a Laplacian $\bL$,  we have that since $\maxnorm{\U}\in\cO(1)$ and $\pdimb\in\cO(\trc(\bR^{\trans} \bL\bR)\RAD_{\bL})$,  a mistake bound of $\cOT(\trc(\bR^{\trans} \bL \bR)\RAD_{\bL})$ is obtained, which recovers the bound of~\cite[Proposition 4]{gentile2013online} up to constant factors.  
This work extends the results in~\cite{gentile2013online} for similarity prediction to regret bounds, and to the inductive setting with general p.d. matrices.  In the next section, we will see how this type of result may be extended to an inductive setting.
\vspace{-0.1in}
\section{Inductive Matrix Completion}\label{sec:inductive}
In the previous section, the learner was assumed to have complete foreknowledge of the side information through the matrices $\RN$ and $\CN$.   In the inductive setting, the learner  has  instead kernel side information functions $\RNfunc$ and $\CNfunc$.  With complete foreknowledge of the rows (columns) that will be observed, one may use $\RNfunc$ ($\CNfunc$) to compute 
$\RN$ ($\CN$) which corresponds to an inverse of a submatrix of $\RNfunc$ ($\CNfunc$).  In the inductive, unlike the transductive setting, we do not have this foreknowledge and thus cannot compute $\RN$ ($\CN$) in advance.
 Notice that the assumption of side information as kernel functions is not particularly limiting, as for instance the side information could be provided by vectors in $\Re^d$ and the kernel could be the positive definite linear kernel 
$\bK_{\epsilon}(\bxx,\bxx') :=  \dotp{\bxx,\bxx'} + \epsilon [\bxx = \bxx']$.  On the other hand, despite the additional flexibility of the inductive setting versus the transductive one, there are two limitations.  First, only in a technical sense will it be possible to model side information via a PDLaplacian, since $\RN^+$ can only be computed given knowledge of the graph in advance. 
Second, the bound in Theorem~\ref{thm:bndpdim} on the quasi-dimension $\pdim \le \pdimb$ gains  additional multiplicative factors $k$ and $\ell$.  Nevertheless, we will observe in Section~\ref{sec:boxstory} that, for a given kernel for which the side information associated with a given row (column) latent factor is ``well-separated'' from distinct latent factors, we can show that $\pdimb \in \cO(k^2 + \ell^2)$.

The following algorithm is prediction-equivalent to Algorithm~\ref{alg:base} up to the value of $\RRN (\RCN)$.  In~\cite{WKZ12}, the authors provide very general conditions for the ``kernelization'' of algorithms with an emphasis on ``matrix'' algorithms.  They sketch a method to kernelize the {\sc Matrix Exponentiated Gradient} algorithm based on the relationship between the eigensystems of the kernel matrix and the Gram matrix.  We take a different, more direct approach in which we prove its correctness via Proposition~\ref{eq:alg}.
\begin{algorithm}[h]
\begin{algorithmic} 
\caption{Predicting a binary matrix with side information in the inductive setting. \label{alg:inductive}}
\renewcommand{\algorithmicrequire}{\textbf{Parameters:}} 
\REQUIRE Learning rate: $0<\lr$\, quasi-dimension estimate: $1\le \scp$, margin estimate: $0 <\marh\leq 1$, non-conservative flag $[\mbox{\sc non-conservative}]\in \{0,1\}$ and side-information kernels $\RNfunc:\mathcal{I}\times\mathcal{I}\rightarrow\Re$, $\CNfunc:\mathcal{J}\times\mathcal{J}\rightarrow\Re$,  
with $\IRRN := \max_{i\in\mathcal{I}} \RNfunc(i,i)$ and $\IRCN := \max_{j\in\mathcal{J}} \CNfunc(j,j)$,
and maximum distinct rows $m$ and columns $n$, where $m+n \geq 3$.
\\  \vspace{.1truecm}
\renewcommand{\algorithmicrequire}{\textbf{Initialization:}}   \vspace{.1truecm} \vspace{.1truecm}
\REQUIRE $\nmset \leftarrow \emptyset\,,\uset \leftarrow \emptyset\,, \cI^1 \leftarrow \emptyset\,, \,\cJ^1 \leftarrow \emptyset\, \,.$   \vspace{.1truecm}
\renewcommand{\algorithmicrequire}{\textbf{For}}
\REQUIRE $t =1,\dots,T$  \vspace{.1truecm}
\STATE $\bullet$ Receive pair $(i_t,j_t) \in \mathcal{I} \times \mathcal{J}.$ \\ \vspace{.1truecm}
\STATE $\bullet$ Define 
\begin{align*}\label{???}
&\quad(\RN^t)^+   := (\RNfunc(i_r,i_s))_{r,s\in \cI^t\cup \{i_t\}}\,; \quad (\CN^t)^+  := (\CNfunc(j_r,j_s))_{r,s\in \cJ^t\cup \{j_t\}}\,,\\
& \quad \XT(s) := \con{\frac{(\sqrt{(\RN^t)^+})\be^{i_s}}{\sqrt{2\IRRN}}}{\frac{(\sqrt{(\CN^t)^+})\be^{j_s}}{\sqrt{2\IRCN}}} \con{\frac{(\sqrt{(\RN^t)^+})\be^{i_s}}{\sqrt{2\IRRN}}}{\frac{(\sqrt{(\CN^t)^+})\be^{j_s}}{\sqrt{2\IRCN}}}^{\trans}\,, \\
&\quad\log(\wem{t}) \leftarrow \log\left(\frac{\scp}{m+n}\right) \id^{|\cI^t|+|\cJ^t| +2} + \sum_{s\in\uset}  \lr\y{s}  \XT(s)\,.
\end{align*}
\vspace{-.13in}
\STATE $\bullet$  Predict 
\begin{equation*} \Yrv \sim \mbox{\sc Uniform}(-\marh,\marh) \!\times\! [\mbox{\sc non-conservative}]\,;\ybt \leftarrow \tr{\wem{t}\XT}-1 \,;\quad\yht \leftarrow \sign(\ybt-\Yrv)\,.
\end{equation*}\vspace{-.16in}
\STATE $\bullet$ Receive label $\y{t} \in \{-1,1\}$\,.\vspace{.1truecm}
\STATE $\bullet$ If $y_t \ne \yht$ then $\nmset \leftarrow \nmset \cup \{t\}.$
\STATE $\bullet$ If $y_t\ybt < \marh \times [\mbox{\sc non-conservative}] $ then 
\begin{equation*}
\uset \leftarrow \uset \cup \{t\}\,,\ \ \cI^{t+1} \leftarrow \cI^{t} \cup \{i_t\}, \text{ and } \cJ^{t+1} \leftarrow \cJ^{t} \cup \{j_t\}\,.
\end{equation*}\vspace{-.16in}
\STATE $\bullet$  Else $\cI^{t+1} \leftarrow \cI^{t}$ and $\cJ^{t+1} \leftarrow \cJ^{t}$\,.
\end{algorithmic}
\end{algorithm}
The intuition behind the algorithm is that, although we cannot efficiently embed the row and column kernel functions $\RNfunc$ and $\CNfunc$ as matrices since they are potentially infinite-dimensional, we may instead work with the embedding corresponding to the currently observed rows and columns, recompute the embedding on a per-trial basis and then ``replay'' all re-embedded past examples to create the current hypothesis matrix.

The computational complexity of the inductive algorithm exceeds that of the transductive algorithm.
For the following analysis, assume $m \in \Theta(n)$.  On every trial (with an update), Algorithm~\ref{alg:base} requires the computation of the SVD of an $n \times n$ matrix and thus requires $\cO(n^3)$ time.  On the other hand,  for every trial (with an update) in Algorithm~\ref{alg:inductive}, the complexity is instead dominated by the sum of up to $m n$ (i.e., in the regret setting we can collapse terms from multiple observations of the same matrix entry) matrices of size up to $(m+n) \times (m+n)$ and thus has a per-trial complexity $\cO(n^4)$.  
The following is our proposition of equivalency, proven in Appendix~\ref{sec:append_inductive}.
\begin{proposition}\label{eq:alg}
The inductive and transductive algorithms are equivalent up to $\RRN$ and $\RCN$.  Without loss of generality assume $\cI^{T+1} \subseteq [m]$ and  $\cJ^{T+1} \subseteq [n]$. 
Define $\RN := ((\RNfunc(i',i''))_{i',i''\in [m]})^{+}$ and $\CN := ((\CNfunc(j',j''))_{j',j''\in [n]})^{+}$. Assume that for the transductive algorithm, the matrices $\RN$ and $\CN$ are given whereas for the inductive algorithm, only the kernel functions $\RNfunc$ and $\CNfunc$ are provided. Then, if $\IRRN=\RRN$ and $\IRCN = \RCN$, and if the algorithms receive the same label and index sequences, then the predictions  of the algorithms are the same.
\end{proposition}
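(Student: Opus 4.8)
The plan is to show that the inductive algorithm, on each trial $t$, constructs a hypothesis matrix $\wem{t}$ that is the ``active'' principal submatrix of the hypothesis matrix produced by the transductive algorithm (restricted to the coordinates in $\cI^t\cup\{i_t\}$ and $\cJ^t\cup\{j_t\}$), and that this suffices to reproduce the same prediction $\ybt$ on every trial. First I would set up the correspondence between the two embeddings. In the transductive algorithm $\XT = \bx{t}(\bx{t})^{\trans}$ with $\bx{t} = \con{\psm\eit/\sqrt{2\RRN}}{\ppsm\ejt/\sqrt{2\RCN}}$ living in $\Re^{m+n}$, where $\RN = ((\RNfunc(i',i''))_{i',i''\in[m]})^{+}$; in the inductive algorithm the analogous vector lives in $\Re^{|\cI^t|+|\cJ^t|+2}$ and uses $(\RN^t)^+ = (\RNfunc(i_r,i_s))_{r,s\in\cI^t\cup\{i_t\}}$, i.e. the principal submatrix of $\RNfunc = (\RN)^+$ indexed by the observed rows. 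The key linear-algebra fact I would invoke is the following: for a positive definite matrix $\bK$ and an index set $S$, the inner products $\langle \sqrt{\bK_{S}}\,\be^a, \sqrt{\bK_{S}}\,\be^b\rangle = (\bK_S)_{ab} = \bK_{ab}$ for $a,b\in S$, where $\bK_S$ is the principal submatrix — so the Gram structure on the observed coordinates agrees regardless of whether we use the full $\bK$ or the submatrix $\bK_S$. Consequently, writing $\RN = \bK^{+}$ where $\bK=\RNfunc$ restricted to $[m]$, the quantity $(\sqrt{\RN^+})\be^{i_s}$ appearing inductively has exactly the same pairwise inner products as $(\psm)\be^{i_s}$ does transductively, for all rows $i_s$ observed so far.

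The next step is to translate ``same pairwise inner products among the vectors that actually appear'' into ``same traces''. Since the prediction is $\ybt = \tr{\wem{t}\XT} - 1$ and, by the update rule, $\log\wem{t} = \log\!\big(\tfrac{\upD}{m+n}\big)\id + \sum_{s\in\uset}\lr y_s \XT(s)$ (in the inductive case with the appropriate-dimensional identity; in the transductive case with $\id^{m+n}$), I would expand $\tr{\wem{t}\XT}$ as a power series in the exponent and observe that every term is a sum of products of traces of the form $\tr{\XT(s_1)\XT(s_2)\cdots\XT(s_r)\XT}$, each of which by $\XT(s) = \bx{s}(\bx{s})^{\trans}$ telescopes into a product of inner products $\langle\bx{s_1},\bx{s_2}\rangle\langle\bx{s_2},\bx{s_3}\rangle\cdots$. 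Since all of $s_1,\dots,s_r,t$ index rows in $\cI^{t}\cup\{i_t\}$ and columns in $\cJ^{t}\cup\{j_t\}$ (because $\uset$ only contains update trials, and those add their indices to $\cI,\cJ$), the concatenation structure $\con{\cdot}{\cdot}$ means every such inner product is a sum of a ``row part'' $\langle(\psm)\be^{i_{s}},(\psm)\be^{i_{s'}}\rangle/(2\RRN)$ and a ``column part'', both of which agree between the two algorithms by the submatrix fact above. The identity-matrix term needs a small separate check: $\tr{(\id^{m+n})^{a}\,\text{stuff}}$ only ever contracts against the vectors $\bx{s}$, and $\|\bx{s}\|^2$ is again a sum of row and column self-inner-products which match. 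So every term in the expansion of $\ybt+1$ coincides, hence $\ybt$ coincides, hence $\yht$ coincides (the randomization $\Yrv$ is by hypothesis driven by the same randomness / is irrelevant in the conservative case), hence the same index is added to $\nmset$ and the same update decision is made; induction on $t$ closes the loop, and in particular $\cI^{T+1}\subseteq[m]$, $\cJ^{T+1}\subseteq[n]$ as assumed WLOG (one may harmlessly relabel).

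I expect the main obstacle to be handling the interaction between the growing dimension of the inductive embedding and the fixed dimension of the transductive one cleanly — in particular making rigorous that adding a new, previously-unseen coordinate to $\cI^{t}$ (which enlarges $(\RN^t)^+$ by a row and column) does not retroactively change the inner products among the old coordinates. This is exactly the statement that the pseudoinverse does not commute with taking submatrices, so one cannot argue $(\RN^t)^+$'s inverse-structure is literally the restriction of $\RN$'s; instead the clean route is to never invert at all on the transductive side for this argument but to work with $\bK = \RNfunc|_{[m]}$ and $\bK^{t} = \RNfunc|_{\cI^t\cup\{i_t\}}$ directly, using $\sqrt{\bK^{+}}$ and $\sqrt{(\bK^{t})^{+}}$ and the identity $\langle \sqrt{\bK^{+}}\be^a,\sqrt{\bK^{+}}\be^b\rangle = (\bK^{+})_{ab}$, then reconciling $(\bK^{+})_{ab}$ with $((\bK^{t})^{+})_{ab}$ via the fact that what enters the trace expansion is only ever $\langle (\psm)\be^{i}, (\psm)\be^{i'}\rangle = (\RN)_{i i'} = (\bK^{+})_{ii'}$ — wait, that is not equal to $((\bK^t)^+)_{ii'}$ in general. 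The correct resolution, which I would carry out carefully, is that $\wem{t}$ in the transductive algorithm is itself only ever evaluated through $\bx{s}$'s and $\bx{t}$, and the relevant scalars are $(\bx{s})^{\trans}\bx{s'} = \frac{[i_s=i_{s'}]_{\RN}\text{-weighted piece}}{}$ — more precisely the row contribution is $\be^{i_s \trans}\psm \psm \be^{i_{s'}} / (2\RRN) = \RN_{i_s i_{s'}}/(2\RRN)$; and the inductive analogue gives $((\RN^t)^+)_{\,\cdot\,,\,\cdot}/(2\IRRN)$, which is genuinely $((\,\RNfunc|_{\cI^t\cup\{i_t\}}\,)^{+})$ evaluated at those indices. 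So equivalence requires showing $\RN_{i_s i_{s'}} = ((\RNfunc|_{\cI^t\cup\{i_t\}})^{+})$-entry at the positions of $i_s,i_{s'}$, for all observed $s,s'$ — which is precisely \emph{false} in general, and the escape is that it is not the raw matrix $\wem{t}$ that must match but only $\tr{\wem{t}\XT}$, and there the restriction/pseudoinverse mismatch is absorbed because both algorithms' quadratic-form evaluations reduce to the same Gram matrix once one restricts attention to the eigenspaces spanned by the observed embedding vectors — this is the content of the eigensystem relationship sketched in \cite{WKZ12}. Pinning this down — i.e. proving that $\tr{\exp(\log c\,\id + \sum_s \lr y_s \bx{s}\bx{s}^{\trans})\,\bx{t}\bx{t}^{\trans}}$ depends on the $\{\bx{s}\}\cup\{\bx{t}\}$ only through their Gram matrix, and that that Gram matrix is identical for the two algorithms because each Gram entry is $\tfrac{1}{2\RRN}([\RNfunc(i_s,i_{s'})]\text{-based } +\text{ col})$ with the kernel values being literally shared — is the heart of the proof, and I would organize the appendix proof around that single reduction.
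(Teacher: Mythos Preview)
Your overall architecture is correct and matches the paper's proof: expand $\tr{\wem{t}\XT}$ via the exponential power series, reduce every term to a product of inner products $\langle \bx{s},\bx{s'}\rangle$, and observe that these inner products coincide for the two algorithms, then induct on $t$.  The paper organizes this as a short chain of lemmas culminating in exactly the ``trace depends only on the Gram matrix'' statement you isolate at the end.

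However, your long ``obstacle'' paragraph is a red herring caused by a computational slip.  You write that the transductive row contribution is
\[
\frac{(\be^{i_s})^{\trans}\psm\,\psm\,\be^{i_{s'}}}{2\RRN}=\frac{\RN_{i_s i_{s'}}}{2\RRN}\,,
\]
but $\psm=\sqrt{\RN^+}$, so $\psm\psm=\RN^+$, not $\RN$.  Since by the proposition's hypothesis $\RN=\bigl((\RNfunc(i',i''))_{i',i''\in[m]}\bigr)^{+}$ with the kernel matrix positive definite, we have $\RN^+=(\RNfunc(i',i''))_{i',i''\in[m]}$ and hence the transductive row contribution is simply $\RNfunc(i_s,i_{s'})/(2\RRN)$.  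On the inductive side, $(\RN^t)^+$ is \emph{defined} in Algorithm~\ref{alg:inductive} to be the kernel submatrix $(\RNfunc(i_r,i_{s}))_{r,s\in\cI^t\cup\{i_t\}}$, so the inductive row contribution is $\RNfunc(i_s,i_{s'})/(2\IRRN)$.  With $\IRRN=\RRN$ these are literally equal; there is no pseudoinverse--of--a--submatrix mismatch to repair, and the appeal to the eigensystem argument of~\cite{WKZ12} is unnecessary.  Once you correct this, your sketch collapses to the paper's direct computation: both algorithms' embedding vectors have pairwise inner products $\tfrac{\RNfunc(i_s,i_{s'})}{2\RRN}+\tfrac{\CNfunc(j_s,j_{s'})}{2\RCN}$, every trace in the power-series expansion is a polynomial in these, and the induction goes through.
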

Thus, the only case when the algorithms are different is when $\IRRN\ne\RRN$ or $\IRCN \ne \RCN$.  This is a minor inequivalency, as the only resultant difference is in the term $\pdim$.  Alternatively, if one uses a normalized kernel such as the Gaussian, then $\IRRN =\RRN=1$.  In the following subsection, we describe a scenario where the quasi-dimension bound $\pdimb$ scales quadratically with the number of distinct factors. 
\vspace{-0.1in}
\subsection{Side information in $[-r,r]^d$}\label{sec:boxstory}
In the following, we show an example for predicting a matrix $\U\in\bikl$ such that for online side information in $[-r,r]^d$ that is well-separated into {\em boxes}, there exists a kernel for which the quasi-dimension grows no more than quadratically with the number of latent factors (but exponentially with the dimension $d$).  For simplicity we use the {\em min} kernel, which approximates functions by linear interpolation. In practice, we speculate that similar results may be proven for other universal kernels, but the analysis with the min kernel has the advantage of simplicity.

In the previous section, with the idealized graph-based side information, one may be dissatisfied as the 
skeleton of the latent structure is essentially encoded into $\RN (\CN)$.   In the inductive setting, the side information is instead revealed in an online fashion.  If such side information may be separated into distinct clusters restrospectively, we will be able to bound $\pdimb \in \cO(k^2 + \ell^2)$.  In this example, we receive a row and column vector $\imath_t,\jmath_t \in [-r,r]^d \times [-r',r']^{d'}$ on each trial; these vectors will be the indices to our row and column kernels, and for simplicity we set $r=r'$ and $d=d'$.  
\vspace{-0.1in}
\subsection*{Bounding $\pdimb$ for the {\em min} kernel.}
Define the transformation $s(\bxx):=\frac{r-1}{2r} \bxx + \frac{r+1}{2} $.
and the {\em min} kernel $\cK : [0, r]^d \times [0 , r]^d\into \Re$ as $\cK(\bxx,\bt) := \prod_{i=1}^d \min(x_i,t_i)$. Also define $\delta(S_1,\ldots,S_k) := \min_{1\le i < j \le k} \min_{\bxx\in S_i, \bxx' \in S_j} \norm{\bxx - \bxx'}_{\infty}$.
A {\em box} in $\Re^d$ is a set $\{\bxx : a_i \le x_i \le b_i, i\in [d]\}$ defined by a pair of vectors $\ba,\bb\in\Re^d$. 
\begin{proposition}\label{lem:min_kernel_norm}
Given $k$ boxes $S_1,\ldots,S_k \subset [-r,r]^d$, $r\ge 2$, $\delta^* = \min \left(2,  \frac{1}{4}\delta(S_1,\ldots, S_k) \right)$,  and  $\bxx_1,\ldots,\bxx_m\in\cup_{i=1}^k S_i$,  if $ \bR = ([\bxx_i\in S_j])_{i\in[m],j\in [k]}$ and  $\bK= (\cK(s(\bxx_i),s(\bxx_j)))_{i,j\in [m]}$ 
then $\trc(\bR^\trans \bK^{-1} \bR)  \le k \left(\frac{4}{\delta^*}\right)^d$.
\end{proposition}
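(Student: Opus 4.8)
The plan is to read $\trc(\bR^\trans\bK^{-1}\bR)$ as a sum of squared minimum-norm interpolants in the reproducing kernel Hilbert space $\cH$ of the min kernel $\cK$, and then, for each box, to write down an explicit cheap interpolant built coordinate-by-coordinate from one-dimensional ``plateau'' functions.

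\medskip\noindent\textbf{Reduction to interpolation.} First I would write $\trc(\bR^\trans\bK^{-1}\bR)=\sum_{j=1}^{k}\bR_{\cdot j}^\trans\bK^{-1}\bR_{\cdot j}$, where $\bR_{\cdot j}\in\{0,1\}^m$ is the indicator of $\{i:\bxx_i\in S_j\}$ (the boxes are disjoint once $\delta^*>0$; if $\delta^*=0$ the bound is vacuous). By the standard characterisation of kernel interpolation, $\bR_{\cdot j}^\trans\bK^{-1}\bR_{\cdot j}=\min\{\|f\|_{\cH}^2:\ f\in\cH,\ f(s(\bxx_i))=[\bxx_i\in S_j]\text{ for all }i\in[m]\}$; here $\bK$ is the Gram matrix of the points $s(\bxx_i)$, which are distinct because $s$ is injective and repeated $\bxx_i$ carry identical box labels and may be discarded, so $\bK$ is invertible (and if one keeps repeats, $\bK^{-1}$ should be read as a pseudoinverse, the identity persisting on the range of $\bK$). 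Thus it suffices to exhibit, for each $j\in[k]$, a feasible $f_j$ with $\|f_j\|_{\cH}^2\le(4/\delta^*)^d$.

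\medskip\noindent\textbf{Two structural facts.} (a) $\cH$ is the $d$-fold tensor product of $\cH_1:=\{g:[0,r]\to\Re\mid g(0)=0,\ g'\in L^2([0,r])\}$ with $\|g\|_{\cH_1}^2=\int_0^r g'(x)^2\,dx$, and for product functions $\big\|\bigotimes_{c=1}^d g_c\big\|_{\cH}^2=\prod_{c=1}^d\|g_c\|_{\cH_1}^2$ (Aronszajn's formula for products of kernels). (b) The map $s$ sends $[-r,r]^d$ bijectively into $[1,r]^d$, scaling all $\infn{\cdot}$-distances by the factor $\tfrac{r-1}{2r}\in[\tfrac14,\tfrac12)$ for $r\ge2$, and it carries each $S_j$ to a box $\prod_{c=1}^d[a_{j,c},b_{j,c}]\subseteq[1,r]^d$. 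The key separation consequence: for two boxes the $\infn{\cdot}$-distance equals the maximum over coordinates of the distances between the coordinate projections, hence is attained on a single coordinate; since $\delta(S_1,\dots,S_k)\ge4\delta^*$ and $s$ shrinks distances by a factor at most $4$, for every $i\ne j$ there is a coordinate $c(i,j)$ on which the projections of $s(S_i)$ and $s(S_j)$ are intervals separated by at least $\delta^*$.

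\medskip\noindent\textbf{Construction and bound.} For each $j$ put $f_j(\bxx):=\prod_{c=1}^d g_{j,c}(x_c)$, where $g_{j,c}\in\cH_1$ is the continuous piecewise-linear function equal to $0$ on $[0,[a_{j,c}-\delta^*]_+]$, rising linearly to $1$ at $a_{j,c}$, equal to $1$ on $[a_{j,c},b_{j,c}]$, then decreasing with slope $-1/\delta^*$ and truncated at value $0$ (and at $x=r$). Then $g_{j,c}(0)=0$, so $f_j\in\cH$; $f_j\equiv1$ on $s(S_j)$, so the constraint holds when $\bxx_i\in S_j$; and if $\bxx_i\in S_i$ with $i\ne j$, then on coordinate $c(i,j)$ the value $s(\bxx_i)_{c(i,j)}$ lies at $\infn{\cdot}$-distance $\ge\delta^*$ from $[a_{j,c(i,j)},b_{j,c(i,j)}]$, hence outside the support of the plateau, so $g_{j,c(i,j)}$ vanishes there and $f_j(s(\bxx_i))=0$ (the truncation at $0$ matters only when $a_{j,c}<\delta^*$, in which case no box lies below $S_j$ on that coordinate and correctness is unaffected). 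A direct computation gives
\[
\|g_{j,c}\|_{\cH_1}^2=\frac{1}{\min(a_{j,c},\delta^*)}+\frac{\min(\delta^*,\,r-b_{j,c})}{(\delta^*)^2}\ \le\ \frac{2}{\delta^*}+\frac{1}{\delta^*}\ \le\ \frac{4}{\delta^*},
\]
where $1/\min(a_{j,c},\delta^*)\le 2/\delta^*$ uses $a_{j,c}\ge1$ and $\delta^*\le2$. Multiplying over the $d$ coordinates gives $\|f_j\|_{\cH}^2\le(4/\delta^*)^d$, and summing over $j\in[k]$ yields $\trc(\bR^\trans\bK^{-1}\bR)\le k(4/\delta^*)^d$.

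\medskip\noindent\textbf{Main obstacle.} The delicate part is the bookkeeping in the last two steps: I must pass the $\infn{\cdot}$ box-separation hypothesis through the affine rescaling $s$ so as to obtain one clean separating coordinate per pair of boxes (this is exactly where ``boxes'' rather than arbitrary sets is used, and where the $\tfrac14$ in $\delta^*$ together with $r\ge2$ enters), and I must choose the plateau widths so that simultaneously (i) the product $f_j$ interpolates the $0/1$ pattern, (ii) each $g_{j,c}$ still lies in $\cH_1$ although the relevant points occupy only $[1,r]$ (forcing the truncations at $0$ and $r$), and (iii) the per-coordinate norm is at most $4/\delta^*$ uniformly — the cap $\delta^*\le2$ and the placement $a_{j,c}\ge1$ are precisely what make (iii) hold. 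The RKHS identification and the tensor-product norm identity are classical and only need to be quoted.
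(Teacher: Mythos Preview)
Your proof is correct and follows essentially the same route as the paper: write $\trc(\bR^\trans\bK^{-1}\bR)$ as a sum over $j\in[k]$ of minimum-norm interpolation problems in the min-kernel RKHS, and for each $j$ exhibit a product-of-one-dimensional-plateau interpolant whose per-coordinate norm is at most $4/\delta^*$. The only cosmetic difference is that the paper uses plateau half-width $\delta^*/2$ (giving norm exactly $4/\delta^*$) where you use half-width $\delta^*$ with truncation (giving $\le 3/\delta^*$); you are in fact more explicit than the paper about the $\ell_\infty$ separating-coordinate argument for boxes and about the boundary truncations, both of which the paper passes over in a sentence.
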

Recall the bound (see~\eqref{eq:defpdimb}) on the quasi-dimension for a matrix $\U\in\bikl$,  where we have $\pdim \le
\pdimb= 
k\trace{\bR^{\trans} \bM \bR}\RRN+\ell\trace{\bC^{\trans} \bN \bC}\RCN
$ for positive definite matrices. If we assume that the side information on the rows (columns) lies in $ [-r,r]^d$, then $\RRN\le \IRRN \le r^d$ ($\RCN\le \IRCN \le r^d$) for the min kernel.  
Thus by applying the above proposition separately for the rows and columns and substituting into~\eqref{eq:defpdimb}, we have that 
\[ \pdim \le
\pdimb= 
k^2 \left(\frac{4r}{\delta^*}\right)^d +\ell^2 \left(\frac{4r}{\delta^*}\right)^d.\]
We then observe that for this example,with an optimal tuning and well-separated side information on the rows and columns, the mistake bound for a $(k,\ell)$-biclustered matrix in the inductive setting is of $\cOT(\min(k,\ell)\max(k,\ell)^2)$.  
However, our best lower bound in terms of $k$ and $\ell$ is just $k\ell$, as in the transductive setting. 
An open problem is to resolve this gap.

\section{Acknowledgements}
We would like to thank Robin Hirsch for valuable discussions.
This research was sponsored by the U.S. Army Research Laboratory and the U.K. Ministry of Defence under Agreement Number W911NF-16-3-0001. The views and conclusions contained in this document are those of the authors and should not be interpreted as representing the official policies, either expressed or implied, of the U.S. Army Research Laboratory, the U.S. Government, the U.K. Ministry of Defence or the U.K. Government. The U.S. and U.K. Governments are authorized to reproduce and distribute reprints for Government purposes notwithstanding any copyright notation hereon.  This research was further supported by the Engineering and Physical Sciences Research Council [grant number EP/L015242/1].

\clearpage
{
\bibliographystyle{unsrt}
\bibliography{mp19}

\begin{thebibliography}{10}

\bibitem{BL07}
J.~Bennett and S.~Lanning.
\newblock The netflix prize.
\newblock In {\em Proceedings of the KDD Cup Workshop 2007}, pages 3--6, New
  York, August 2007. ACM.

\bibitem{H72}
J.~A. Hartigan.
\newblock {Direct Clustering of a Data Matrix}.
\newblock {\em Journal of the American Statistical Association},
  67(337):123--129, 1972.

\bibitem{GLMZ16}
C.~Gao, Y.~Lu, Z.~Ma, and H.~H. Zhou.
\newblock Optimal estimation and completion of matrices with biclustering
  structures.
\newblock {\em Journal of Machine Learning Research}, 17:161:1--161:29, 2016.

\bibitem{CR12}
E.~Cand\`{e}s and B.~Recht.
\newblock Exact matrix completion via convex optimization.
\newblock {\em Commun. ACM}, 55(6):111--119, June 2012.

\bibitem{BD2003}
S.~Ben-David, N.~Eiron, and H.~U. Simon.
\newblock Limitations of learning via embeddings in euclidean half spaces.
\newblock {\em Journal of Machine Learning Research}, 3:441--461, 2003.

\bibitem{CMSM07}
N.~Linial, S.~Mendelson, G.~Schechtman, and A.~Shraibman.
\newblock Complexity measures of sign matrices.
\newblock {\em Combinatorica}, 27(4):439--463, 2007.

\bibitem{SrebroS05}
N.~Srebro and A.~Shraibman.
\newblock Rank, trace-norm and max-norm.
\newblock In {\em Proceedings of the 18th Annual Conference on Learning
  Theory}, pages 545--560, 2005.

\bibitem{Srebro2005}
N.~Srebro, J.~D.~M. Rennie, and T.~S. Jaakkola.
\newblock Maximum-margin matrix factorization.
\newblock {\em Advances in Neural Information Processing Systems 17}, 2005.

\bibitem{CT10}
E.~J. Cand\`{e}s and T.~Tao.
\newblock The power of convex relaxation: Near-optimal matrix completion.
\newblock {\em IEEE Trans. Inf. Theor.}, 56(5):2053--2080, May 2010.

\bibitem{Maurer2013}
A.~Maurer and M.~Pontil.
\newblock Excess risk bounds for multitask learning with trace norm
  regularization.
\newblock In {\em Proceedings of The 27th Conference on Learning Theory}, pages
  55--76, 2013.

\bibitem{Chiang2018}
Kai~Yang Chiang, Inderjit~S. Dhillon, and Cho~Jui Hsieh.
\newblock {Using side information to reliably learn low-rank matrices from
  missing and corrupted observations}.
\newblock {\em Journal of Machine Learning Research}, 19, 2018.

\bibitem{Fazel2001}
M.~Fazel, H.~Hindi, and S.~P. Boyd.
\newblock A rank minimization heuristic with application to minimum orders
  system approximation.
\newblock {\em Proceedings of the American Control Conference}, 2001.

\bibitem{abernethy2006}
J.~Abernethy, F.~Bach, T.~Evgeniou, and J.~Vert.
\newblock {Low-rank matrix factorization with attributes}.
\newblock In {\em ArXiv preprint ArXiv: cs/0611124}, 2006.

\bibitem{Xu2013}
M~Xu, R~Jin, and Z.~H. Zhou.
\newblock {Speedup matrix completion with side information: Application to
  multi-label learning}.
\newblock In {\em Advances in Neural Information Processing Systems}, 2013.

\bibitem{Kalofolias2014}
V.~Kalofolias, X.~Bresson, M.~Bronstein, and P.~Vandergheynst.
\newblock Matrix completion on graphs.
\newblock Technical report, EPFL, 2014.

\bibitem{RHRD15}
N.~Rao, P.~Yu, H.-F.;~Ravikumar, and I.~Dhillon.
\newblock {Collaborative Filtering with Graph Information: Consistency and
  Scalable Methods}.
\newblock In {\em Advances in Neural Information Processing Systems}, 2015.

\bibitem{Zhang2018}
X.~Zhang, S.~S. Du, and Q.~Gu.
\newblock {Fast and Sample Efficient Inductive Matrix Completion via
  Multi-Phase Procrustes Flow}.
\newblock In {\em Proceedings of Machine Learning Research}, 2018.

\bibitem{GRS93}
S.~A. Goldman, R.~L. Rivest, and R.~E. Schapire.
\newblock Learning binary relations and total orders.
\newblock {\em SIAM J. Comput.}, 22(5), 1993.

\bibitem{GW95}
S.~A. Goldman and M.~K. Warmuth.
\newblock Learning binary relations using weighted majority voting.
\newblock In {\em Proceedings of the 6th Annual Conference on Computational
  Learning Theory}, pages 453--462, 1993.

\bibitem{CS11}
N.~Cesa{-}Bianchi and O.~Shamir.
\newblock Efficient online learning via randomized rounding.
\newblock In {\em Advances in Neural Information Processing Systems 24}, pages
  343--351, 2011.

\bibitem{HKSS12}
E.~Hazan, S.~Kale, and S.~Shalev-Shwartz.
\newblock Near-optimal algorithms for online matrix prediction.
\newblock In {\em Proceedings of the 23rd Annual Conference on Learning
  Theory}, volume 23:38.1-38.13, 2012.

\bibitem{gentile2013online}
C.~Gentile, M.~Herbster, and S.~Pasteris.
\newblock Online similarity prediction of networked data from known and unknown
  graphs.
\newblock In {\em Proceedings of the 26th Annual Conference on Learning
  Theory}, 2013.

\bibitem{ourJMLR15}
M.~Herbster, S.~Pasteris, and S.~Pontil.
\newblock Predicting a switching sequence of graph labelings.
\newblock {\em Journal of Machine Learning Research}, 16:2003--2022, 2015.

\bibitem{HPP16}
M.~Herbster, S.~Pasteris, and M.~Pontil.
\newblock Mistake bounds for binary matrix completion.
\newblock In {\em Advances in Neural Information Processing Systems 29}, pages
  3954--3962. 2016.

\bibitem{tsuda2005matrix}
K.~Tsuda, G.~R{\"a}tsch, and M.K. Warmuth.
\newblock Matrix exponentiated gradient updates for on-line learning and
  bregman projection.
\newblock {\em Journal of Machine Learning Research}, 6:995--1018, 2005.

\bibitem{litt88}
N.~Littlestone.
\newblock Learning quickly when irrelevant attributes abound: A new
  linear-threshold algorithm.
\newblock {\em Machine Learning}, 2:285--318, April 1988.

\bibitem{Sabato2015}
S.~Sabato, S.~Shalev-Shwartz, N.~Srebro, Daniel~J. Hsu, and T.~Zhang.
\newblock {Learning sparse low-threshold linear classifiers}.
\newblock {\em Journal of Machine Learning Research}, 16:1275--1304, 2015.

\bibitem{herbster2006prediction}
M.~Herbster and M.~Pontil.
\newblock Prediction on a graph with a perceptron.
\newblock In {\em Advances in Neural Information Processing Systems 19}, pages
  577--584, 2006.

\bibitem{GKOS18}
Robert Ganian, Iyad~A. Kanj, Sebastian Ordyniak, and Stefan Szeider.
\newblock Parameterized algorithms for the matrix completion problem.
\newblock In {\em Proceedings of the 35th International Conference on Machine
  Learning, {ICML} 2018, Stockholmsm{\"{a}}ssan, Stockholm, Sweden, July 10-15,
  2018}, pages 1642--1651, 2018.

\bibitem{Novikoff62}
A.B. Novikoff.
\newblock On convergence proofs on perceptrons.
\newblock In {\em Proceedings of the Symposium on the Mathematical Theory of
  Automata}, pages 615--622, 1962.

\bibitem{BN04}
M.~Belkin and P.~Niyogi.
\newblock Semi-supervised learning on riemannian manifolds.
\newblock {\em Machine Learning}, 56:209--239, 2004.

\bibitem{Zhu2009}
Xiaojin Zhu and Andrew~B. Goldberg.
\newblock Introduction to semi-supervised learning.
\newblock {\em Synthesis Lectures on Artificial Intelligence and Machine
  Learning}, 2009.

\bibitem{herbster2005online}
M.~Herbster, M.~Pontil, and L.~Wainer.
\newblock Online learning over graphs.
\newblock In {\em Proceedings of the 22nd International Conference on Machine
  Learning}, pages 305--312, 2005.

\bibitem{WKZ12}
M.~K. Warmuth, W.~Kot{\l}owski, and S.~Zhou.
\newblock Kernelization of matrix updates, when and how?
\newblock In {\em Algorithmic Learning Theory}, pages 350--364, Berlin,
  Heidelberg, 2012. Springer Berlin Heidelberg.

\bibitem{Bhatia1997}
R.~Bhatia.
\newblock {\em Matrix Analysis}.
\newblock Springer Verlag, New York, 1997.

\end{thebibliography}
}
\newpage
\appendix

\section{Synthetic Experiments}\label{sec:experiments}
\ifviz
\begin{wrapfigure}{R}{0.45\textwidth} 
     \input{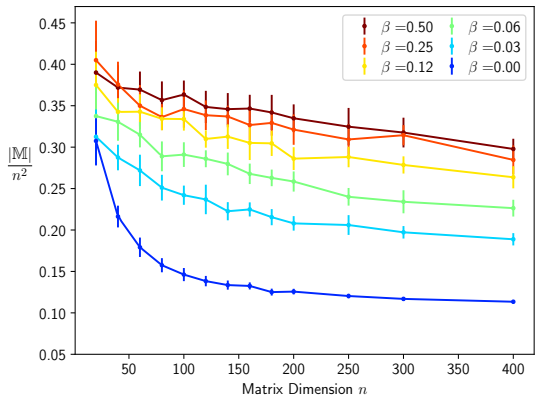}
    \caption{Error rates for predicting a noisy $(9,9)$-biclustered matrix with side information.} 
    \label{fig:results}
\end{wrapfigure}
\else

\begin{wrapfigure}{R}{0.45\textwidth} 
\vspace{-0.5cm}
	\centering
     \includegraphics[trim=5 5 18 30,height=.9\figureheight, width=.9\figurewidth,clip]{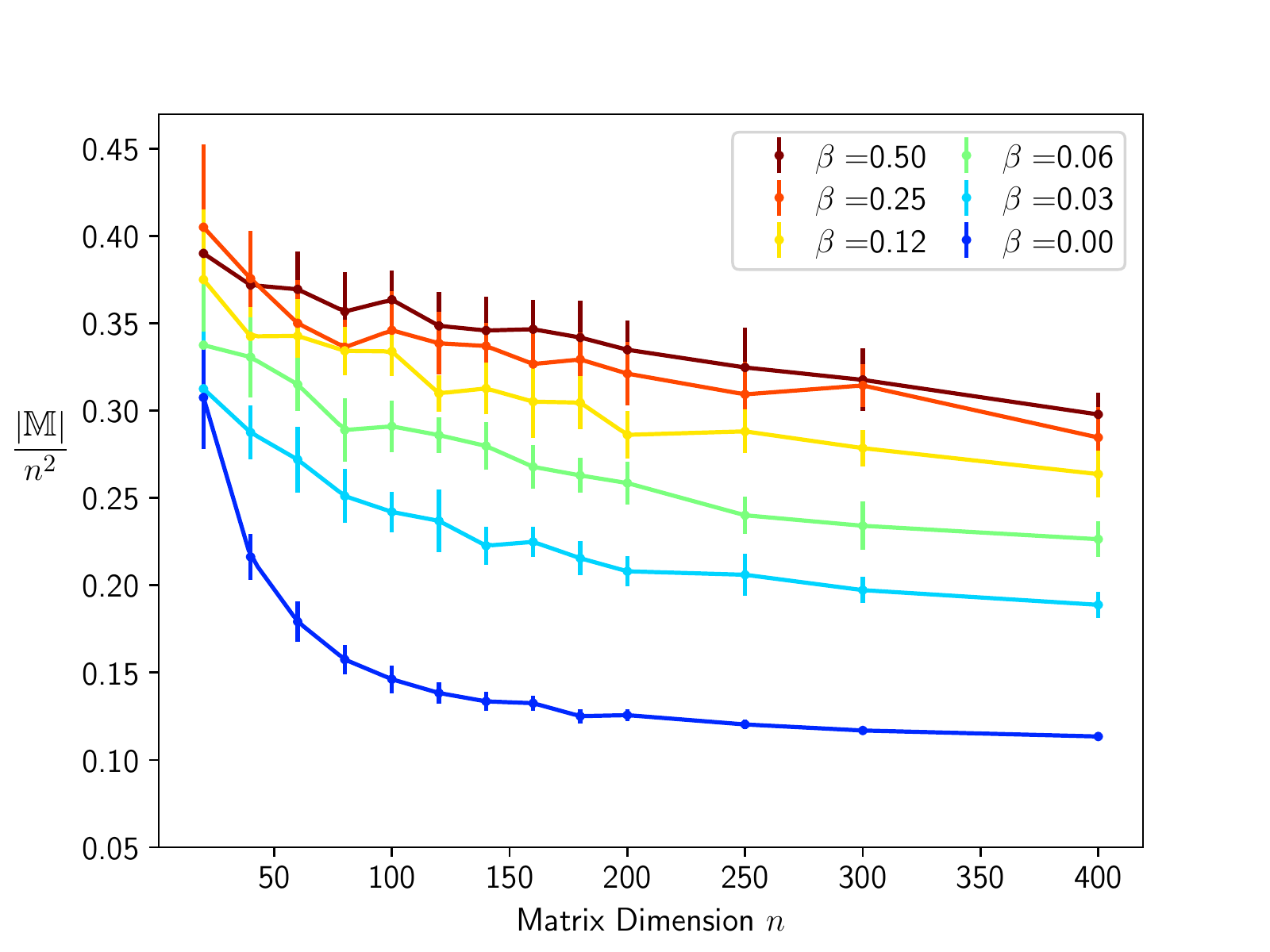}
     \vspace{-0.5cm}
   \caption{Error rates for predicting a noisy $(9,9)$-biclustered matrix with side information.} 
    \label{fig:results}
\end{wrapfigure}
\vspace{-0.1in}
\fi
To illustrate the algorithm's performance, synthetic experiments were performed in the transductive setting with graph side information. In particular, we took $\bU$ to be  randomly generated square (9,9)-biclustered matrices with i.i.d. noise. A visualization of a noise-free example matrix can be found in Figure~\ref{fig:exbcm}. The noise process flipped the label of each matrix entry independently with probability $p=0.10$. 
The side information on the rows and columns were represented by PDLaplacian matrices, for which the underlying graphs were constructed in the manner described in Section~\ref{section:graph_side_info}. Varying levels of side information noise $\beta\in [0.0,0.5]$ were applied. This was introduced by considering every pair of vertices independently from the constructed graph and flipping the state between {\sc edge}/{\sc not-edge} with probability~$\beta$.  A final step is added to ensure the graph is connected.  In this step a random pair of components is connected by a random edge, recursively. The process terminates when the graph is connected.

The parameters were chosen so that the expected regret bound in Theorem~\ref{thm:base} would apply to our experimental setting. We use the quasi-dimension upper bound $\upD := \qDBF(\U) =2 \trc( \bR^{\trans} \RN \bR)\RRN +  2 \trc( \bC^{\trans} \CN \bC)\RCN+4k $, as developed in  Theorem~\ref{thm:bndpdim} for PDLaplacians. The learning rate was set as $\lr = \sqrt{\frac{\upD \log(2n) }{2T}} $. Since each run of the algorithm consisted of predicting all $n^2$ matrix entries sampled uniformly at random without replacement, we set $T=n^2$. As for the margin estimate, due to the requirement that $\marh \leq 1/\maxnorm{\U}$, a suitable value can be extracted from Equation~\eqref{eq:mclbl}, giving $\marh = 1/ \sqrt{k}$.

The per trial mistake rate is shown in Fig.~\ref{fig:results} for matrix dimension $n=20,\ldots,400$, where each data point is averaged over 10 runs.
We observe that for random side information $\beta=0.5$, the term $\upD$ could lead to a bound which is vacuous (for small $n$), however, the algorithm's error rate was in the range of $[0.30, 0.45]$, being well below chance.
With ideal side information, $\beta=0.0$, the performance improved drastically, as suggested by the bounds, to an error rate in $[0.10,0.35]$. Observe that since there is 10\% label noise for all values of $\beta$, the curves are converging to an online mistake rate of 10\%. 
The data points for the plot can be found below.

\subsection{Data Points}\label{app:experiments}
The data points used to generate Fig.~\ref{fig:results} can be found in Table~\ref{table:raw_data}.  
\renewcommand{\arraystretch}{1.2}
\begin{table}
\centering
\caption{Data points used for Fig.~\ref{fig:results}.}
\begin{tabular}{@{}l l l l l l l@{}}
\toprule
\multirow{2}{*}{Matrix Dimensions $n$}& \multicolumn{6}{c}{Noise}\\ \cmidrule{2-7}
& 0.50&0.25&0.125&0.0625&0.03125&0.00\\ \midrule
20&0.39$\pm$ 0.04&0.4$\pm$ 0.05&0.38$\pm$ 0.04&0.34$\pm$ 0.04&0.31$\pm$ 0.03&0.31$\pm$ 0.03\\
40&0.37$\pm$ 0.03&0.38$\pm$ 0.03&0.34$\pm$ 0.02&0.33$\pm$ 0.02&0.29$\pm$ 0.02&0.22$\pm$ 0.01\\
60&0.37$\pm$ 0.02&0.35$\pm$ 0.02&0.34$\pm$ 0.02&0.32$\pm$ 0.02&0.27$\pm$ 0.02&0.18$\pm$ 0.01\\
80&0.36$\pm$ 0.02&0.34$\pm$ 0.02&0.33$\pm$ 0.01&0.29$\pm$ 0.02&0.25$\pm$ 0.02&0.16$\pm$ 0.01\\
100&0.36$\pm$ 0.02&0.35$\pm$ 0.02&0.33$\pm$ 0.01&0.29$\pm$ 0.01&0.24$\pm$ 0.01&0.15$\pm$ 0.01\\
120&0.35$\pm$ 0.02&0.34$\pm$ 0.02&0.31$\pm$ 0.01&0.29$\pm$ 0.01&0.24$\pm$ 0.02&0.14$\pm$ 0.01\\
140&0.35$\pm$ 0.02&0.34$\pm$ 0.01&0.31$\pm$ 0.01&0.28$\pm$ 0.01&0.22$\pm$ 0.01&0.13$\pm$ 0.01\\
160&0.35$\pm$ 0.02&0.33$\pm$ 0.02&0.31$\pm$ 0.02&0.27$\pm$ 0.01&0.22$\pm$ 0.01&0.13$\pm$ 0.0\\
180&0.34$\pm$ 0.02&0.33$\pm$ 0.02&0.3$\pm$ 0.02&0.26$\pm$ 0.01&0.22$\pm$ 0.01&0.13$\pm$ 0.0\\
200&0.33$\pm$ 0.02&0.32$\pm$ 0.02&0.29$\pm$ 0.01&0.26$\pm$ 0.01&0.21$\pm$ 0.01&0.13$\pm$ 0.0\\
250&0.32$\pm$ 0.02&0.31$\pm$ 0.02&0.29$\pm$ 0.01&0.24$\pm$ 0.01&0.21$\pm$ 0.01&0.12$\pm$ 0.0\\
300&0.32$\pm$ 0.02&0.31$\pm$ 0.01&0.28$\pm$ 0.01&0.23$\pm$ 0.01&0.2$\pm$ 0.01&0.12$\pm$ 0.0\\
400&0.3$\pm$ 0.01&0.28$\pm$ 0.02&0.26$\pm$ 0.01&0.23$\pm$ 0.01&0.19$\pm$ 0.01&0.11$\pm$ 0.0\\ \bottomrule
\end{tabular}
\label{table:raw_data}
\end{table}

\section{Proof of Theorem~\ref{thm:base}}

The proof of Theorem~\ref{thm:base} is organized as follows. We start with the required preliminaries in Subsection~\ref{ap:preliminaries}, and then proceed to prove the regret statement of the theorem, given by Equation~\eqref{eq:baseregret}, in Subsection~\ref{ap:regretmw}. Finally, in Subsection~\ref{ap:mwthm1}, we provide a proof for the mistake bound in the realizable case, as stated in Equation~\eqref{eq:basereal}. 
\subsection{Preliminaries for Proof}\label{ap:preliminaries}

Suppose we have $\RN$, $\CN$ and $\bU$ as in Theorem~\ref{thm:base}. Instead of working with $\bU$ directly, we shall work with an embedding of this matrix. We have different treatments for the embedding of $\bU$ in the two parts of the proof. For Subsection~\ref{ap:regretmw}, let $\Up \in \Re^{m\times n}$ be such that $\Up = \marh \bU$. Following from the assumption that $\maxnorm{\bU} \leq \frac{1}{\gamma}$, there exist row-normalized matrices $\nP \in \Re^{n \times d}$ and $\nQ \in \Re^{m \times d}$ that give $\Up = \nP \nQ^\trans$. In Subsection~\ref{ap:mwthm1}, however, we have that $\Up = \marh \argmin \limits_{\bV \in \SP^1(\bU)}  \qD(\bV)$. In this section we assume $\mc(\bU) \leq \frac{1}{\gamma}$, which also guarantees a decomposition in terms of the row-normalized matrices $\nP$ and $\nQ$ such that $\Up = \nP \nQ^\trans$.

Let us then define the quasi-dimension with respect to a specific factorization \[\pdim := \RRN\trc\left(\nP^{\trans}\RN\nP\right) 
+
\RCN\trc\left(\nQ^{\trans}\CN\nQ\right).\] Note that in general for the case that $\Up = \marh \bU$, $\pdim \geq \qD(\bU)$, whereas for the case that $\Up \in \marh \SP^1(\bU)$, $\pdim \geq \min\limits_{\bV \in \SP^1(\bU)}  \qD(\bV).$  We proceed with the proof assuming that $(\nP,\nQ)$ is the optimal factorization for a given $\Up$. That is, for $\Up = \marh \bU$, we have that $(\nP,\nQ)$ is the factorization that satisfies $\pdim= \qD(\bU)$, and for $\Up = \marh \argmin \limits_{\bV \in \SP^1(\bU)}  \qD(\bV)$, $(\nP,\nQ)$ satisfies $\pdim = \min\limits_{\bV \in \SP^1(\bU)}  \qD(\bV)$ . 

Next, we define $\mat$, which is a positive semidefinite matrix, used as an embedding for $\Up$ in the analysis of the algorithm. Its exact relationship with $\Up$ is shown in Lemma~\ref{lem:stuttS}.
\begin{definition}\label{def:imat}
Define the $(m+n)\times (m+n)$ matrix $\imat$ as
\begin{equation}\label{eq:imatdef}
\imat := \begin{pmatrix}
\sqrt{\RRN}\SRN\nP \\
\sqrt{\RCN}\SCN\nQ
\end{pmatrix}\,.
\end{equation}
and
 construct $\mat$ as,
\[
\mat:=\imat\imat^{\trans} = 
\begin{pmatrix}
\RRN \SRN \nP \nP^\trans \SRN & \sqrt{\RRN\RCN} \SRN \nP \nQ^\trans \SCN \\
\sqrt{\RRN\RCN} \SCN \nQ \nP^\trans \SRN & \RCN \SCN \nQ \nQ^\trans \SCN
\end{pmatrix}\,.
\]

 \begin{lemma}\label{lem:stuttS}
For all trials $t\in [T]$, 
\[
\Up_{i_t j_t}=\tr{\mat{\XT}}-1
\] 
where $\mat$ is as constructed from Definition~\ref{def:imat}. 
\end{lemma}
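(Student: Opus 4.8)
The plan is to simply expand both sides and match terms, using the definition of $\XT$ from~\eqref{eq:bbxdef} and the block structure of $\mat$ from Definition~\ref{def:imat}. First I would recall that $\XT = \bx{t}(\bx{t})^\trans$ is a rank-one matrix with $\bx{t} = \con{\psm\eit/\sqrt{2\RRN}}{\ppsm\ejt/\sqrt{2\RCN}}$, so that by the cyclic property of the trace, $\tr{\mat\XT} = \tr{\mat\,\bx{t}(\bx{t})^\trans} = (\bx{t})^\trans\mat\,\bx{t}$. Since $\mat = \imat\imat^\trans$, this equals $\|\imat^\trans\bx{t}\|^2$, but it is cleaner to just write it as $(\bx{t})^\trans\imat\imat^\trans\bx{t}$ and compute $\imat^\trans\bx{t}$ directly.

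Next I would carry out the block computation. Writing $\bx{t} = \con{\ba}{\bb}$ with $\ba := \psm\eit/\sqrt{2\RRN}$ and $\bb := \ppsm\ejt/\sqrt{2\RCN}$, and using $\imat^\trans = \big(\sqrt{\RRN}\,\nP^\trans\SRN \ \ \sqrt{\RCN}\,\nQ^\trans\SCN\big)$, we get
\[
\imat^\trans\bx{t} = \sqrt{\RRN}\,\nP^\trans\SRN\,\ba + \sqrt{\RCN}\,\nQ^\trans\SCN\,\bb = \tfrac{1}{\sqrt{2}}\,\nP^\trans\SRN\psm\eit + \tfrac{1}{\sqrt{2}}\,\nQ^\trans\SCN\ppsm\ejt.
\]
Here the key algebraic fact is that $\SRN\psm = \SRN\sqrt{\RNp} = \bI$ — more precisely, $\sqrt{\RN}\sqrt{\RN^+}$ acts as the identity on the range of $\RN$, and since $\RN\in\SPDM^m$ is strictly positive definite this is genuinely $\bI^m$ (and likewise $\SCN\ppsm = \bI^n$). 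Hence $\imat^\trans\bx{t} = \tfrac{1}{\sqrt2}\big(\nP^\trans\eit + \nQ^\trans\ejt\big) = \tfrac{1}{\sqrt2}\con{\nP_{i_t}}{\nQ_{j_t}}$ reading off rows $i_t$ of $\nP$ and $j_t$ of $\nQ$. Then
\[
\tr{\mat\XT} = \|\imat^\trans\bx{t}\|^2 = \tfrac12\big(\|\nP_{i_t}\|^2 + 2\,\nP_{i_t}\cdot\nQ_{j_t} + \|\nQ_{j_t}\|^2\big).
\]
Since $\nP$ and $\nQ$ are row-normalized, $\|\nP_{i_t}\| = \|\nQ_{j_t}\| = 1$, so this collapses to $1 + \nP_{i_t}\cdot\nQ_{j_t} = 1 + (\nP\nQ^\trans)_{i_tj_t} = 1 + \Up_{i_tj_t}$, which is exactly the claim after subtracting $1$.

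I do not anticipate a serious obstacle here; the only point requiring a little care is the justification that $\SRN\psm = \bI^m$ and $\SCN\ppsm = \bI^n$ as full identity matrices (rather than projections onto a range), which is where strict positive-definiteness of $\RN$ and $\CN$ — part of the hypothesis of Theorem~\ref{thm:base} — is used, and the observation that the cross-normalization constants $1/\sqrt{2\RRN}$ and $\sqrt{\RRN}$ (resp.\ for the column side) are designed precisely to cancel. Everything else is bookkeeping with block matrices and the definition of the row-normalized factorization $\Up = \nP\nQ^\trans$.
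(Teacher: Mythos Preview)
Your proposal is correct and follows essentially the same route as the paper's own proof: reduce $\tr{\mat\XT}$ to $\|\imat^\trans\bx{t}\|^2$, use $\SRN\psm=\bI^m$ (and analogously for $\CN$) to collapse the block product, and then exploit the row-normalization of $\nP,\nQ$ to obtain $1+\Up_{i_tj_t}$. One small notational slip: after computing $\imat^\trans\bx{t}=\tfrac{1}{\sqrt2}(\nP^\trans\eit+\nQ^\trans\ejt)$ you write this as $\tfrac{1}{\sqrt2}\con{\nP_{i_t}}{\nQ_{j_t}}$, but $\con{\cdot}{\cdot}$ denotes concatenation in this paper, whereas here you need the \emph{sum} $\tfrac{1}{\sqrt2}(\nP_{i_t}+\nQ_{j_t})$ in $\Re^d$ --- your subsequent norm expansion (with the cross term) shows you mean the sum, so just fix the notation.
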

\begin{proof}
We have:
\begin{align}
\tr{\mat{\XT}}&={\left(\bx{t}\right)}^\trans\mat\bx{t}\\
&=\left(\bx{t}\right)^\trans\imat\imat^{\trans}\bx{t}\\
&=\ini{{(\bx{t})}^{\trans}\imat}^2\,. \label{eq:nov}
\end{align}
Recall that 
\[
 \bx{t} = \con{\frac{\psm\eit}{\sqrt{2\RRN}}}{\frac{\ppsm\ejt}{\sqrt{2\RCN}}} \text{ and } \imat = 
 \begin{pmatrix}
\sqrt{\RRN}\SRN\nP \\
\sqrt{\RCN}\SCN\nQ
\end{pmatrix}
\]

Hence,
\begin{align}
(\bx{t})^{\trans}\imat 
& = \frac{(\psm\eit)^\trans}{\sqrt{2\RRN}} \sqrt{\RRN}\SRN\nP 
+ \frac{(\ppsm\ejt)^\trans}{\sqrt{2\RCN}}\sqrt{\RCN}\SCN\nQ \notag \\
& = \frac{1}{\sqrt{2}} (\eit)^\trans\psm\SRN\nP 
+ \frac{1}{\sqrt{2}}(\ejt)^\trans\ppsm\SCN\nQ \notag\\
& = \frac{1}{\sqrt{2}}(\nP_{i_t} + \nQ_{j_t}) \label{eq:mv}
\end{align}
Thus substituting~\eqref{eq:mv} into~\eqref{eq:nov} gives,
\begin{align*}
 \tr{\mat{\XT}}&=\frac{1}{2}\ini{\nP_{i_t}+\nQ_{j_t}}^2\\
 &=\frac{1}{2}\left(\ini{\nP_{i_t}}^2+2\dotp{\nP_{i},\nQ_{j_t}}+\ini{\nQ_{j_t}}^2\right)\\
  &=\left(1+\dotp{\nP_{i_t},\nQ_{j_t}}\right)\\
  &=1+\Up_{i_t j_t}.
\end{align*}
\end{proof}
\end{definition}
In the subsequent proofs, we will also need to make use of the following facts.
\begin{lemma}\label{lem:qexactS}  For $\mat$ as defined in Definition~\ref{def:imat}, we have that, 
\begin{equation}\label{eq:qexact}
\trc(\mat) = 
\pdim \,.
\end{equation}
\end{lemma}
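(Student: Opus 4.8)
The plan is to compute $\trc(\mat)$ directly from its block form in Definition~\ref{def:imat}. Since $\mat = \imat\imat^{\trans}$, I would use the cyclic property of the trace to write $\trc(\mat) = \trc(\imat^{\trans}\imat)$, which splits as the sum of the traces of the two diagonal blocks of $\mat$. The first block is $\RRN\,\SRN\nP\nP^{\trans}\SRN$ and the second is $\RCN\,\SCN\nQ\nQ^{\trans}\SCN$, so $\trc(\mat) = \RRN\trc(\SRN\nP\nP^{\trans}\SRN) + \RCN\trc(\SCN\nQ\nQ^{\trans}\SCN)$.

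Next I would simplify each term using cyclicity and the fact that $\SRN\,\SRN = \RNp$ (and likewise $\SCN\,\SCN = \CNp$). Specifically, $\trc(\SRN\nP\nP^{\trans}\SRN) = \trc(\nP^{\trans}\SRN\,\SRN\nP) = \trc(\nP^{\trans}\RNp\nP)$, and similarly $\trc(\SCN\nQ\nQ^{\trans}\SCN) = \trc(\nQ^{\trans}\CNp\nQ)$. Here I am using the notation of the paper where $\RN = \RNp$ in the role of the side-information matrix appearing in $\qD$; more precisely, the matrix called $\RN$ in the definition~\eqref{eq:defpdim} of $\qD$ is the one denoted $\RNp$ in Definition~\ref{def:imat} (both parts of the proof take $(\nP,\nQ)$ to be the optimal factorization, so $\pdim = \RRN\trc(\nP^{\trans}\RN\nP) + \RCN\trc(\nQ^{\trans}\CN\nQ)$). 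Substituting back gives $\trc(\mat) = \RRN\trc(\nP^{\trans}\RN\nP) + \RCN\trc(\nQ^{\trans}\CN\nQ)$, which is exactly the definition of $\pdim$ recorded just before Definition~\ref{def:imat}.

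This is essentially a one-line computation, so there is no real obstacle — the only thing to be careful about is bookkeeping of the notation: making sure the square-root matrices $\SRN,\SCN$ are the (unique, symmetric PSD) square roots so that $\SRN^{\trans}\SRN = \SRN\SRN = \RN$, and that the $\RRN,\RCN$ scalars are pulled out of the trace correctly. I would present it as a short displayed chain of equalities invoking linearity and cyclicity of the trace, then identify the result with $\pdim$.

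\begin{proof}
By Definition~\ref{def:imat}, $\mat = \imat\imat^{\trans}$, so by the cyclic property of the trace,
\begin{align*}
\trc(\mat) &= \trc(\imat\imat^{\trans}) = \trc(\imat^{\trans}\imat)\\
&= \RRN\,\trc\!\left(\nP^{\trans}\SRN\,\SRN\,\nP\right) + \RCN\,\trc\!\left(\nQ^{\trans}\SCN\,\SCN\,\nQ\right)\\
&= \RRN\,\trc\!\left(\nP^{\trans}\RN\,\nP\right) + \RCN\,\trc\!\left(\nQ^{\trans}\CN\,\nQ\right),
\end{align*}
where in the last step we used that $\SRN$ and $\SCN$ are the symmetric positive semidefinite square roots of $\RN$ and $\CN$, so that $\SRN\,\SRN = \RN$ and $\SCN\,\SCN = \CN$. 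The right-hand side is precisely $\pdim$ as defined above, which proves~\eqref{eq:qexact}.
\end{proof}
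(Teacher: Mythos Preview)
Your formal proof is correct and essentially identical to the paper's: both compute $\trc(\imat\imat^{\trans})$, split it into the two block contributions, and then use cyclicity of the trace together with $\SRN\,\SRN=\RN$ to arrive at $\RRN\trc(\nP^{\trans}\RN\nP)+\RCN\trc(\nQ^{\trans}\CN\nQ)=\pdim$. The only wrinkle is in your preamble, where you momentarily write $\SRN\,\SRN=\RNp$ and speculate that the $\RN$ in~\eqref{eq:defpdim} is the $\RNp$ of Definition~\ref{def:imat}; that is not the case (it is $\RN$ throughout, and $\SRN=\sqrt{\RN}$), but your displayed proof uses the correct identity, so the argument stands.
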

\begin{proof}
\begin{align*}
\trc(\mat) & = \trc(\imat \imat^{\trans}) 
 = \trc\left(
\begin{pmatrix}
\sqrt{\RRN}\SRN\nP \\
\sqrt{\RCN}\SCN\nQ
\end{pmatrix}
\begin{pmatrix}
\sqrt{\RRN}\SRN\nP \
\sqrt{\RCN}\SCN\nQ
\end{pmatrix}^{\trans}
\right) \\
& = 
\RRN\trc\left(\SRN\nP\nP^{\trans}\SRN^{\trans}\right) 
+
\RCN\trc\left(\SCN\nQ\nQ^{\trans}\SCN^{\trans}\right) \\
&= \RRN\trc\left(\nP^{\trans}\RN\nP\right) 
+
\RCN\trc\left(\nQ^{\trans}\CN\nQ\right)  \\
&=\pdim
\end{align*}
\end{proof}

\begin{lemma}\label{eigenx}
For all trials $t$, all eigenvalues of $\XT$ are in $[0,1]$.  
\end{lemma}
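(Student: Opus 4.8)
The plan is to show that each $\XT$ is a rank-one positive semidefinite matrix whose single nonzero eigenvalue is at most $1$, which immediately puts all eigenvalues in $[0,1]$. Recall from~\eqref{eq:bbxdef} that $\XT = \bx{t}(\bx{t})^\trans$ with $\bx{t} = \con{\frac{\psm\eit}{\sqrt{2\RRN}}}{\frac{\ppsm\ejt}{\sqrt{2\RCN}}}$. Any matrix of the form $\bv\bv^\trans$ is positive semidefinite with eigenvalues $\{\|\bv\|^2, 0, \ldots, 0\}$, so the only thing to check is that $\|\bx{t}\|^2 \le 1$.

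First I would compute $\|\bx{t}\|^2$ using the concatenation identity from the preliminaries, namely $\con{\bp}{\bq}^\trans\con{\bp}{\bq} = \bp^\trans\bp + \bq^\trans\bq$. This gives
\[
\|\bx{t}\|^2 = \frac{(\eit)^\trans \psm^\trans\psm \eit}{2\RRN} + \frac{(\ejt)^\trans \ppsm^\trans\ppsm \ejt}{2\RCN} = \frac{(\eit)^\trans \RNp \eit}{2\RRN} + \frac{(\ejt)^\trans \CNp \ejt}{2\RCN},
\]
using that $\psm^\trans\psm = \sqrt{\RNp}\sqrt{\RNp} = \RNp$ (the square root of the symmetric PSD matrix $\RNp = \RN^+$), and similarly for the column term. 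Now $(\eit)^\trans \RNp \eit = (\RNp)_{i_t,i_t} = M^+_{i_t,i_t} \le \max_i M^+_{ii} = \RRN$ by the definition of the squared radius given in the preliminaries, and likewise $(\ejt)^\trans \CNp \ejt = N^+_{j_t,j_t} \le \RCN$. Therefore $\|\bx{t}\|^2 \le \frac{\RRN}{2\RRN} + \frac{\RCN}{2\RCN} = \frac{1}{2} + \frac{1}{2} = 1$.

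Combining, $\XT = \bx{t}(\bx{t})^\trans$ has eigenvalues $\|\bx{t}\|^2 \le 1$ and $0$ (with multiplicity $m+n-1$), so all eigenvalues lie in $[0,1]$, as claimed. There is no real obstacle here; the only mild subtlety is being careful that $\psm$ denotes $\sqrt{\RN^+}$ so that $\psm^\trans\psm = \RN^+$ exactly (symmetry of the PSD square root), and invoking the definition $\RRN = \max_{i} M^+_{ii}$ to bound the diagonal entries. The bound being exactly $1$ (rather than something smaller) is the reason the normalization constants $\sqrt{2\RRN}$ and $\sqrt{2\RCN}$ appear in the definition of $\bx{t}$.
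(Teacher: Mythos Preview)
Your proof is correct and takes essentially the same approach as the paper: both compute $\|\bx{t}\|^2$ by splitting into the row and column pieces, bound each by $\tfrac{1}{2}$ via the definition of $\RRN$ and $\RCN$, and conclude that the eigenvalues of the rank-one PSD matrix $\XT=\bx{t}(\bx{t})^\trans$ lie in $[0,1]$. The only cosmetic difference is that you invoke the rank-one eigenvalue structure directly, whereas the paper phrases it as ``the trace is at most $1$ and $\XT$ is PSD''; these are equivalent here.
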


\begin{proof}
Recall from~\eqref{eq:bbxdef} that 
\[ \trc(\XT)=\trc(\bx{t}(\bx{t})^\trans) = \con{\frac{\psm\eit}{\sqrt{2\RRN}}}{\frac{\ppsm\ejt}{\sqrt{2\RCN}}}^\trans \con{\frac{\psm\eit}{\sqrt{2\RRN}}}{\frac{\ppsm\ejt}{\sqrt{2\RCN}}}.  
\]
Hence 
\[
\ini{\bx{t}}^2 = \ini{\frac{\psm\eit}{\sqrt{2\RRN}}}^2 + \ini{\frac{\ppsm\ejt}{\sqrt{2\RCN}}}^2
\] and then bounding the first term on the right hand side gives,
\[
\ini{\frac{\psm\eit}{\sqrt{2\RRN}}}^2=\frac{1}{2\RRN}\left(\eit\right)^\trans(\psm)^\trans\psm\eit
\le\frac{1}{2\RRN}\max_{i\in[m]}\left(\eit\right)^\trans\RN^+\eit
=\frac{1}{2}\,.
\]
The argument for the second term is parallel.  Therefore since it is shown that the trace of $\XT$ is bounded by 1 and that $\XT$ is positive definite, this implies that all eigenvalues of $\XT$ are in $[0,1]$. 
\end{proof}

Next, we introduce the following quantity, which plays a central role in the amortized analysis of our algorithm.  
\begin{definition}{\label{def:qrel}} The quantum relative entropy of symmetric positive semidefinite square matrices $\boldsymbol{A}$ and $\boldsymbol{B}$ is
\begin{equation*}
\rent{\bA}{\bB}:=\trace{\bA\log({\bA})-\bA\log({\bB})+\bB-\bA}.
\end{equation*}
\end{definition}


An important result that will be used in the subsequent subsections is the well known Golden-Thompson Inequality, whose proof can be found, for example, in \cite{Bhatia1997}.
\begin{lemma}\label{ptl1}
For any symmetric matrices $\bs{A}$ and $\bs{B}$ we have,
\[
\trc(\exp(\bs{A}+\bs{B}))\leq\trc(\exp(\bs{A})\exp(\bs{B}))\,.
\]
\end{lemma}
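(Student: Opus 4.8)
The plan is to obtain Golden--Thompson from the Lie--Trotter product formula together with a trace power inequality for products of positive semidefinite matrices. First I would recall that for Hermitian (in particular, real symmetric) matrices $\bs{A},\bs{B}$,
\[
\expp{\bs{A}+\bs{B}}=\lim_{k\to\infty}\bigl(\expp{\bs{A}/2^{k}}\,\expp{\bs{B}/2^{k}}\bigr)^{2^{k}},
\]
so that, by continuity of the trace, $\trc(\expp{\bs{A}+\bs{B}})=\lim_{k\to\infty}\trc\bigl((\expp{\bs{A}/2^{k}}\expp{\bs{B}/2^{k}})^{2^{k}}\bigr)$. Writing $\bs{P}_{k}:=\expp{\bs{A}/2^{k}}$ and $\bs{Q}_{k}:=\expp{\bs{B}/2^{k}}$, which are positive definite and satisfy $\bs{P}_{k}^{2^{k}}=\expp{\bs{A}}$, $\bs{Q}_{k}^{2^{k}}=\expp{\bs{B}}$, it then suffices to prove the trace power inequality
\[
\trc\bigl((\bs{P}\bs{Q})^{q}\bigr)\ \le\ \trc\bigl(\bs{P}^{q}\bs{Q}^{q}\bigr)
\]
for all positive semidefinite $\bs{P},\bs{Q}$ and all real $q\ge 1$ (only $q=2^{k}$ is needed): applying it with $\bs{P}=\bs{P}_{k}$, $\bs{Q}=\bs{Q}_{k}$, $q=2^{k}$ gives $\trc((\bs{P}_{k}\bs{Q}_{k})^{2^{k}})\le\trc(\expp{\bs{A}}\expp{\bs{B}})$ for every $k$, and then $k\to\infty$ closes the argument.

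Second, I would establish the trace power inequality. Since $\bs{B}^{1/2}\bs{A}\bs{B}^{1/2}$ is similar to $\bs{A}\bs{B}$ (conjugate by $\bs{B}^{1/2}$, first replacing $\bs{B}$ by $\bs{B}+\varepsilon\bs{I}$ if $\bs{B}$ is singular and letting $\varepsilon\downarrow 0$), one has $\trc((\bs{A}\bs{B})^{q})=\trc\bigl((\bs{B}^{1/2}\bs{A}\bs{B}^{1/2})^{q}\bigr)$, and the Araki--Lieb--Thirring inequality $\trc\bigl((\bs{B}^{1/2}\bs{A}\bs{B}^{1/2})^{q}\bigr)\le\trc(\bs{B}^{q/2}\bs{A}^{q}\bs{B}^{q/2})=\trc(\bs{A}^{q}\bs{B}^{q})$ for $q\ge 1$ finishes it; this is classical and may simply be cited (e.g.\ \cite{Bhatia1997}). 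For a self-contained treatment I would instead prove it by induction on $k$ for $q=2^{k}$. The base case $q=2$ is the elementary estimate $\trc((\bs{P}\bs{Q})^{2})=\langle\bs{Q}\bs{P},\bs{P}\bs{Q}\rangle\le\|\bs{Q}\bs{P}\|_{F}\,\|\bs{P}\bs{Q}\|_{F}=\trc(\bs{P}^{2}\bs{Q}^{2})$, using Cauchy--Schwarz in the Frobenius inner product $\langle\bs{X},\bs{Y}\rangle=\trc(\bs{X}^{*}\bs{Y})$, the identity $(\bs{P}\bs{Q})^{*}=\bs{Q}\bs{P}$, and $\|\bs{P}\bs{Q}\|_{F}^{2}=\trc(\bs{Q}\bs{P}^{2}\bs{Q})=\trc(\bs{P}^{2}\bs{Q}^{2})=\|\bs{Q}\bs{P}\|_{F}^{2}$. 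The inductive step passes to the positive semidefinite matrix $\bs{M}:=\bs{Q}^{1/2}\bs{P}\bs{Q}^{1/2}$ (similar to $\bs{P}\bs{Q}$), writes $\trc((\bs{P}\bs{Q})^{2^{k+1}})=\trc(\bs{M}^{2^{k+1}})=\trc((\bs{M}^{2})^{2^{k}})$, observes that $\bs{M}^{2}=\bs{Q}^{1/2}(\bs{P}\bs{Q}\bs{P})\bs{Q}^{1/2}$ is similar to the product $(\bs{P}\bs{Q}\bs{P})\bs{Q}$ of two positive semidefinite matrices, applies the induction hypothesis at level $k$, and bounds the resulting trace by $\trc(\bs{P}^{2^{k+1}}\bs{Q}^{2^{k+1}})$ — the last step again being an instance of Araki--Lieb--Thirring.

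I expect the main obstacle to be exactly this trace power step: the $q=2$ case is a one-line Cauchy--Schwarz argument, but pushing it to higher powers carries genuine non-commutative content, and the cleanest way to package it is via the Araki--Lieb--Thirring inequality. The remaining points are routine: the Lie--Trotter limit follows from $\expp{\bs{X}/n}=\bs{I}+\bs{X}/n+O(n^{-2})$ together with the telescoping bound $\|\bs{C}^{n}-\bs{D}^{n}\|\le n\max(\|\bs{C}\|,\|\bs{D}\|)^{n-1}\|\bs{C}-\bs{D}\|$, and the singularity of $\bs{Q}$ (or $\bs{B}$) is handled by the $\varepsilon$-perturbation mentioned above, using continuity of eigenvalues. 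An alternative route I would keep in reserve is a complex-interpolation proof via the Hadamard three-lines theorem, which sidesteps trace power inequalities altogether, but the Lie--Trotter argument above is shorter to write out.
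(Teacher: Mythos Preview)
The paper does not prove this lemma at all: it simply names it as the Golden--Thompson inequality and points to \cite{Bhatia1997} for a proof. So you have already done strictly more than the paper. Your Lie--Trotter~$+$~trace--power route is one of the standard proofs of Golden--Thompson and is correct as stated once you are willing to invoke Araki--Lieb--Thirring (which you cite from \cite{Bhatia1997} anyway).

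One small remark on your ``self-contained'' induction: as you yourself flag, the final step of the inductive case --- passing from $\trc\bigl((\bs{P}\bs{Q}\bs{P})^{2^{k}}\bs{Q}^{2^{k}}\bigr)$ to $\trc\bigl(\bs{P}^{2^{k+1}}\bs{Q}^{2^{k+1}}\bigr)$ --- is not an obvious consequence of the induction hypothesis and really does need Araki--Lieb--Thirring, so the argument is circular as a stand-alone proof. If you want a genuinely self-contained version, the cleaner way is to prove instead the \emph{doubling} lemma $\trc\bigl((\bs{P}\bs{Q})^{2n}\bigr)\le\trc\bigl((\bs{P}^{2}\bs{Q}^{2})^{n}\bigr)$ once (equivalently $|\trc(\bs{X}^{2n})|\le\trc((\bs{X}^{*}\bs{X})^{n})$ with $\bs{X}=\bs{P}\bs{Q}$, via Weyl's eigenvalue--singular-value majorization) and then iterate it $k$ times with $\bs{P}=\expp{\bs{A}/2^{k}}$, $\bs{Q}=\expp{\bs{B}/2^{k}}$; this shows directly that $\trc\bigl((\expp{\bs{A}/2^{k}}\expp{\bs{B}/2^{k}})^{2^{k}}\bigr)$ is nonincreasing in $k$, hence bounded by its value at $k=0$, and Lie--Trotter finishes. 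But since the paper is content with a citation, your sketch is more than adequate.
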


\subsection{Proof for the Regret Statement}\label{ap:regretmw}

In this subsection, we prove the regret bound as presented in Theorem~\ref{thm:base}, which holds for Algorithm~\ref{alg:base} with {\bf non-conservative} updates. 
To do so, we first derive a regret bound in terms of the hinge loss with the deterministic prediction $\ybt$. We then convert this to an expected regret bound in terms of the 0-1 loss for the random variable $\yht$. 

Regret bounds for the MEG algorithm were originally proven in~\cite{tsuda2005matrix}. However, that analysis leads to a $\trc(\tilde{\bU})$ dependence, whereas we derive a $\sqrt{\trc(\tilde{\bU})}$ scaling, for our more restrictive setting. Regret bounds with such scaling for  linear classification in the vector case have been previously given in~\cite{Sabato2015} (which themselves are generalisations of the bounds from Littlestone~\cite{litt88} for learning $k$-literal disjunctions with $\cO(k \log n)$ mistakes). However, to our knowledge, no such regret bounds for MEG are present in the literature for the matrix case. Our proof uses an amortized analysis of the quantum relative entropy, followed by an application of the matricized results in~\cite{Sabato2015}.

In the following, we define the hinge loss as $\hl_{\mar}(y,\bar{y}) := \frac{1}{\mar} [\mar - y\bar{y}]_+$. We define 
\begin{equation}\label{eq:defH}
\bH^t:= \nabla_{\bWtilde} \marh h_\marh(y_t,\ybt)\,,
\end{equation}
 where $\nabla$ denotes the subgradient and where $\ybt$ is as defined in Algorithm~\ref{alg:base}. When $ y_t \ybt = \marh$, we will only consider the specific subgradient $\bH^t = \bzero$.

\begin{lemma}
\label{lemma:bZ_t}
For all $t\in [T]$,
 \[ \bH^t =  -y_t\XT\, \left[\marh > y_t\left(\tr{\wem{t}\XT}-1\right) 
\right]\,.\]
\end{lemma}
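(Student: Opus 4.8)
The plan is to treat $\marh\hl_\marh(y_t,\ybt)$ as a truncated affine function of the matrix argument and read off its subgradient by the standard chain rule for $[\,\cdot\,]_+$ composed with an affine map. First I would unfold the definition of the hinge loss: $\marh\hl_\marh(y_t,\ybt) = [\marh - y_t\ybt]_+$, and since $\ybt = \tr{\wem{t}\XT}-1$, I would regard the argument as a free matrix variable $\bW$ and set $\phi(\bW) := \marh - y_t(\tr{\bW\XT}-1)$, an affine function. Because $\XT = \bx{t}(\bx{t})^\trans$ is symmetric, $\nabla_{\bW}\tr{\bW\XT} = \XT$, so $\nabla_{\bW}\phi(\bW) = -y_t\XT$ for every $\bW$; the quantity $\bH^t = \nabla_{\bWtilde}\,\marh\hl_\marh(y_t,\ybt)$ is then a subgradient of $\bW\mapsto[\phi(\bW)]_+$ evaluated at $\bW = \wem{t}$.

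Next I would invoke that $u\mapsto[u]_+$ is convex and nondecreasing with subdifferential $\partial[u]_+ = \{0\}$ for $u<0$, $\{1\}$ for $u>0$, and $[0,1]$ for $u=0$. By the subgradient chain rule, every subgradient of $\bW\mapsto[\phi(\bW)]_+$ at $\wem{t}$ has the form $c\,(-y_t\XT)$ with $c\in\partial[\,\cdot\,]_+\big(\phi(\wem{t})\big)$, and I would compute $\phi(\wem{t}) = \marh - y_t(\tr{\wem{t}\XT}-1) = \marh - y_t\ybt$. Splitting into cases: if $\marh > y_t(\tr{\wem{t}\XT}-1)$ then $c=1$ forces $\bH^t = -y_t\XT$; if $\marh < y_t(\tr{\wem{t}\XT}-1)$ then $c=0$ forces $\bH^t = \bzero$; and if $y_t\ybt = \marh$ the subdifferential is the whole segment $[0,1]\cdot(-y_t\XT)$, so the convention stated just before the lemma — that in this case we take $\bH^t = \bzero$ — pins down $c=0$. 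The three cases collapse exactly into $\bH^t = -y_t\XT\,[\marh > y_t(\tr{\wem{t}\XT}-1)]$, which is the claim.

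I do not expect a genuine obstacle here: this is a routine subgradient computation, and the only delicate point is the tie case $y_t\ybt = \marh$, where the subdifferential is an interval rather than a singleton. That ambiguity is not resolved by the calculus itself but by the explicit convention fixed in the text above the lemma, so the proof merely has to record that this convention yields $\bH^t = \bzero$, consistent with the indicator $[\marh > y_t(\tr{\wem{t}\XT}-1)]$ being $0$ at the boundary.
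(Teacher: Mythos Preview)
Your proposal is correct and follows essentially the same approach as the paper: unfold $\marh\hl_\marh(y_t,\ybt)=[\marh-y_t(\tr{\wem{t}\XT}-1)]_+$, differentiate the affine inner part using $\nabla_{\bW}\tr{\bW\XT}=\XT$ (by symmetry of $\XT$), and split on the sign of the argument of $[\cdot]_+$, invoking the stated convention at the tie point. The paper's proof is just a terser version of exactly this computation.
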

\begin{proof}
Recalling the definition of $\bH^t:= \nabla_{\bWtilde}\marh h_{\marh}(y_t,\ybt)$, observe that when $\marh > y_t\left(\tr{\wem{t}\XT}-1\right) $, we have
\begin{equation}
    \nabla_{\bWtilde}\marh h_{\marh}(y_t,\ybt) = \nabla_{\bWtilde} \left[\marh - y_t\left(\tr{\wem{t}\XT}-1\right)\right]_+ = -y_t(\XT)^\trans =- y_t \XT,
\end{equation}
where we used the fact that  $\nabla_{\bA} \tr{\bA\bB}= \bB^\trans$. 
In the case that $\marh \leq y_t\left(\tr{\wem{t}\XT}-1\right) $, \[  \nabla_{\bWtilde} \marh h_{\marh}(y_t,\ybt) = \bzero.\]
\end{proof}

\begin{lemma}
\label{lemma:psd_exp}
For matrix $\bA \in \bS^d$ with eigenvalues no less than -1, 
\[ \id - \bA + \bA^2 - \exp(-\bA) \succeq \bzero\,.
\]
\end{lemma}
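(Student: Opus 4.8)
The statement is a scalar inequality promoted to matrices via the spectral theorem, so the plan is to reduce it to the scalar case. First I would observe that the function $g(x) := 1 - x + x^2 - e^{-x}$ satisfies $g(x) \ge 0$ for all $x \ge -1$; this is the entire content of the lemma once we diagonalize. To see the scalar fact, note $g(0) = 0$, $g'(x) = -1 + 2x + e^{-x}$, $g'(0) = 0$, and $g''(x) = 2 - e^{-x}$, which is nonnegative precisely when $e^{-x} \le 2$, i.e. $x \ge -\log 2 \approx -0.693$. So $g$ is convex on $[-\log 2, \infty)$ with a critical point at $0$, hence $g \ge 0$ there. On $[-1, -\log 2]$ one checks $g' < 0$ (since $g'$ is decreasing on this interval as $g'' \le 0$ there, and $g'(-\log 2) = -1 + 2(-\log 2) + 2 = 1 - 2\log 2 < 0$), so $g$ is decreasing on $[-1,-\log 2]$ down to $g(-\log 2) > 0$; therefore $g \ge g(-\log 2) > 0$ on all of $[-1, -\log 2]$ as well. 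Combining, $g(x) \ge 0$ for all $x \ge -1$.

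\textbf{Passing to matrices.} Since $\bA \in \bS^d$ is symmetric, write its eigendecomposition $\bA = \sum_{i=1}^d \lambda_i \bu_i \bu_i^\trans$ with $\{\bu_i\}$ orthonormal and each $\lambda_i \ge -1$ by hypothesis. Then $\id$, $\bA$, $\bA^2$, and $\exp(-\bA)$ are all simultaneously diagonalized in this basis: $\id - \bA + \bA^2 - \exp(-\bA) = \sum_{i=1}^d g(\lambda_i)\, \bu_i \bu_i^\trans$. Each coefficient $g(\lambda_i) \ge 0$ by the scalar claim, and a symmetric matrix with a nonnegative-eigenvalue spectral decomposition is positive semidefinite, which gives $\id - \bA + \bA^2 - \exp(-\bA) \succeq \bzero$.

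\textbf{Main obstacle.} There is no serious obstacle here — it is a routine spectral-calculus argument. The only place requiring a little care is the scalar estimate near the left endpoint $x = -1$, since $g$ is \emph{not} convex on all of $[-1,\infty)$ (the second derivative changes sign at $-\log 2$); one must handle the interval $[-1, -\log 2]$ by monotonicity rather than convexity. A cleaner alternative that avoids splitting into cases: use the bound $e^{-x} \le 1 - x + \tfrac{x^2}{2} \cdot \tfrac{1}{1 - x/3} \le \cdots$, or more simply invoke the standard inequality $e^{-x} \le 1 - x + x^2$ valid for all $x \ge -1$ (this can be shown by the same $g \ge 0$ analysis, or recalled as a known lemma), which is literally the claim. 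I would present the case-split argument for self-containedness. Everything else — diagonalization, the fact that nonnegative spectrum implies PSD, and the simultaneous diagonalizability of polynomials and the exponential of $\bA$ — is standard and needs only a sentence each.
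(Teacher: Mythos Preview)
Your proposal is correct and takes the same approach as the paper: diagonalize $\bA$ and reduce to the scalar inequality $1 - x + x^2 - e^{-x} \ge 0$ for $x \ge -1$. The paper in fact simply asserts this scalar fact without proof, so your case-split verification is more complete than what appears there (one small slip: to conclude $g' < 0$ on $[-1,-\log 2]$ from $g'$ being decreasing, you should cite $g'(-1) = e-3 < 0$ at the left endpoint rather than $g'(-\log 2)$, since the maximum of a decreasing function is at the left end).
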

\begin{proof}
Let $\bB := \id - \bA + \bA^2 - \exp(-\bA)$.
Observing that $\bA$, $\bA^2$ and $\exp(-\bA)$ share the same set of eigenvectors,
\begin{equation*}
     \id -\bA + \bA^2 - \exp(-\bA) = \bU(\id -\bLam+\bLam^2 - \exp(-\bLam)) \bU^\trans ,  
\end{equation*}
where $\bU$ is the orthogonal matrix and $\bLam$ is the diagonal matrix in the eigendecomposition of $\bA$. Therefore, each eigenvalue $\lambda_{\bB,i}$ of the resulting matrix $\bB$ can be written in terms of an eigenvalue $\lambda_{\bB,i}$ of matrix $\bA$ for all $i \in [d]$,
\begin{equation*}
    \lambda_{\bB,i} = 1 - \lambda_{\bA,i} +\lambda_{\bA,i}^2 - \exp(-\lambda_{\bA,i}).
\end{equation*}

\begin{lemma}\label{lem:diff_psd}
For any matrix $\bA \in \bS^d_{++}$ and any two matrices $\bB,\bC \in \bS^d$, $\bB \preceq \bC$ implies $\tr{\bA\bB} \leq \tr{\bA\bC}$.
\end{lemma}
\begin{proof}
This follows a parallel argument to the proof for~\cite[Lemma 2.2]{tsuda2005matrix}.
\end{proof}

The positive semidefinite criterion requires that all eigenvalues be non-negative, so that $\lambda_{\bB,i} \geq 0$. This inequality holds true for $\lambda_{\bA,i} \geq -1$.
\end{proof}

\begin{lemma}\label{malS_cons}
For all trials $t\in [T]$ in Algorithm~\ref{alg:base}, we have for $\lrreg \in (0,1]$:
\begin{equation}
\rent{\mat}{\wem{t}}-\rent{\mat}{\wem{t+1}}
\geq \lrreg \left(\trc((\wem{t} - \mat) \bH^t)- \lrreg^2 \trc(\wem{t}(\bH^t)^2)\right)\,.
\end{equation}
\end{lemma}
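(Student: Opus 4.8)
\textbf{Proof plan for Lemma~\ref{malS_cons}.}
The plan is to run the usual amortized (potential-function) analysis for matrix exponentiated gradient, taking the quantum relative entropy $\rent{\mat}{\cdot}$ of Definition~\ref{def:qrel} as the potential. The first step is to observe that the update of Algorithm~\ref{alg:base} can be written uniformly, on every trial, as $\wem{t+1} = \exp\!\big(\log(\wem{t}) - \lrreg\bH^t\big)$: on update trials this is the stated rule combined with Lemma~\ref{lemma:bZ_t} (which gives $\bH^t = -y_t\XT$), and on non-update trials Lemma~\ref{lemma:bZ_t} gives $\bH^t=\bzero$, so both sides equal $\wem{t}$ and the claimed inequality reads $0\ge 0$.

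Next I would expand the left-hand side directly from Definition~\ref{def:qrel}; the $\trc(\mat\log\mat)$ and $\trc(\mat)$ contributions cancel between the two relative entropies, leaving
\[
\rent{\mat}{\wem{t}}-\rent{\mat}{\wem{t+1}} = \trc\!\big(\mat(\log\wem{t+1}-\log\wem{t})\big) + \trc(\wem{t})-\trc(\wem{t+1}) = -\lrreg\trc(\mat\bH^t) + \trc(\wem{t})-\trc(\wem{t+1}),
\]
using $\log\wem{t+1}-\log\wem{t} = -\lrreg\bH^t$. It then remains to lower-bound $\trc(\wem{t})-\trc(\wem{t+1})$ by $\lrreg\trc(\wem{t}\bH^t)$ minus a quadratic error term.

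The key tool is the Golden--Thompson inequality (Lemma~\ref{ptl1}), used to ``factor'' the non-commuting exponential: $\trc(\wem{t+1}) = \trc\!\big(\exp(\log\wem{t}-\lrreg\bH^t)\big) \le \trc\!\big(\wem{t}\exp(-\lrreg\bH^t)\big)$. To replace $\exp(-\lrreg\bH^t)$ by a linear-plus-quadratic expression I would invoke Lemma~\ref{lemma:psd_exp} with $\bA = \lrreg\bH^t$; its hypothesis (eigenvalues $\ge -1$) holds because Lemma~\ref{eigenx} puts the eigenvalues of $\XT$ in $[0,1]$, hence those of $\bH^t=-y_t\XT$ in $[-1,1]$, and $\lrreg\in(0,1]$ scales them into $[-\lrreg,\lrreg]$. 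Lemma~\ref{lemma:psd_exp} then gives $\exp(-\lrreg\bH^t)\preceq \id-\lrreg\bH^t+\lrreg^2(\bH^t)^2$, and since $\wem{t}$ is an exponential of a symmetric matrix and thus strictly positive definite, Lemma~\ref{lem:diff_psd} yields $\trc\!\big(\wem{t}\exp(-\lrreg\bH^t)\big) \le \trc(\wem{t}) - \lrreg\trc(\wem{t}\bH^t) + \lrreg^2\trc(\wem{t}(\bH^t)^2)$. Substituting this into the identity above and collecting terms gives the asserted inequality.

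The only genuinely delicate point is the interplay of Golden--Thompson with Lemma~\ref{lemma:psd_exp}: Golden--Thompson is precisely what lets us turn the exponential of a sum into a product, after which the quadratic PSD bound can be applied to the single factor $\exp(-\lrreg\bH^t)$; and it is the spectral bound on $\XT$ from Lemma~\ref{eigenx} together with $\lrreg\le 1$ that validates the hypothesis of Lemma~\ref{lemma:psd_exp}. Everything else --- the cancellation inside the relative entropy and the monotonicity step via Lemma~\ref{lem:diff_psd} --- is routine linear algebra.
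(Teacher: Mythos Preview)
Your proposal is correct and follows essentially the same route as the paper's proof: expand the relative-entropy difference using the update rule written as $\wem{t+1}=\exp(\log\wem{t}-\lrreg\bH^t)$, apply Golden--Thompson (Lemma~\ref{ptl1}) to bound $\trc(\wem{t+1})$, and then replace $\exp(-\lrreg\bH^t)$ by its quadratic upper bound via Lemma~\ref{lemma:psd_exp} together with the trace monotonicity of Lemma~\ref{lem:diff_psd}. You are slightly more explicit than the paper in checking the eigenvalue hypothesis of Lemma~\ref{lemma:psd_exp} (via Lemma~\ref{eigenx} and $\lrreg\le 1$) and in treating the non-update case $\bH^t=\bzero$ separately, but the argument is identical.
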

\begin{proof}
We have:
\begin{align}
\rent{\mat}{\wem{t}}-\rent{\mat}{\wem{t+1}} &=\tr{\mat\log\wem{t+1}-\mat\log\wem{t}}+\tr{\wem{t}}-\tr{\wem{t+1}} \notag\\
\label{tseq1}& =-\lrreg \tr{\mat\bH^t}+\tr{\wem{t}}-\tr{e^{\log\wem{t}-\lrreg \bH^t}}\\
\label{tseq2}&\geq-\lrreg \tr{\mat\bH^t}+\tr{\wem{t}}-\tr{e^{\log\wem{t}}e^{-\lrreg \bH^t}}\\
&=-\lrreg \tr{\mat\bH^t}+\tr{\wem{t}\left(\id-e^{-\lrreg \bH^t}\right)}\notag\\
&\geq-\lrreg \tr{\mat\bH^t}+\tr{\wem{t}\left(\lrreg \bH^t - \lrreg^2 (\bH^t)^2\right)}\label{tseq3}
\end{align}
where Equation~\eqref{tseq1} comes from the update of the algorithm and Lemma~\ref{lemma:bZ_t}, Equation~\eqref{tseq2} comes from Lemma~\ref{ptl1} and Equation~\eqref{tseq3} comes from Lemmas~\ref{lemma:psd_exp} and~\ref{lem:diff_psd}.
\end{proof}


\begin{lemma}
\label{theorem:bound_with_Ztsquared}
For Algorithm~\ref{alg:base}, we have for $\lrreg \in (0,1]$:
\begin{equation}
\label{eq:Ztsquared1}
     \sum_{t=1}^T \tr{(\wem{t}-\mat)\bH^t} \leq  \frac{1}{\lrreg} \left( \tr{\mat\log\left(\frac{\mat (m+n)}{e\scp}\right)} + \scp\right) + \sum_{t=1}^T   \lrreg \tr{\wem{t}(\bH^t)^2}.
\end{equation}
Setting the additional assumptions $\scp \geq \tr{\mat} \geq 1$ and $m+n\geq3$ gives
\begin{equation}
\label{eq:Ztsquared2}
     \sum_{t=1}^T \tr{(\wem{t}-\mat)\bH^t} \leq  \frac{\scp}{\lrreg} \log\left(m+n\right) + \sum_{t=1}^T   \lrreg \tr{\wem{t} (\bH^t)^2}.
\end{equation}
\end{lemma}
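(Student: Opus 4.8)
The plan is to telescope the per-trial progress inequality from Lemma~\ref{malS_cons} over $t = 1, \dots, T$. Summing that inequality and using that $\rent{\cdot}{\cdot} \geq 0$ for the final term, together with the fact that $\rent{\mat}{\wem{1}}$ has a closed form since $\wem{1} = \frac{\upD}{m+n}\id^{m+n}$ (here $\scp = \upD$ in the inductive notation), yields a bound on $\sum_t \trc((\wem{t} - \mat)\bH^t)$ in terms of $\rent{\mat}{\wem{1}}$, the learning rate $\lrreg$, and the quadratic correction terms $\sum_t \lrreg \trc(\wem{t}(\bH^t)^2)$. This gives~\eqref{eq:Ztsquared1} once $\rent{\mat}{\wem{1}}$ is evaluated explicitly.

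\textbf{Key steps.} First I would telescope: $\sum_{t=1}^T (\rent{\mat}{\wem{t}} - \rent{\mat}{\wem{t+1}}) = \rent{\mat}{\wem{1}} - \rent{\mat}{\wem{T+1}} \leq \rent{\mat}{\wem{1}}$, using non-negativity of the quantum relative entropy of positive semidefinite matrices. Combining with Lemma~\ref{malS_cons} (summed over $t$) and dividing by $\lrreg > 0$ gives $\sum_t \trc((\wem{t}-\mat)\bH^t) \leq \frac{1}{\lrreg}\rent{\mat}{\wem{1}} + \sum_t \lrreg \trc(\wem{t}(\bH^t)^2)$. Second, I would compute $\rent{\mat}{\wem{1}}$ from Definition~\ref{def:qrel}: with $\wem{1} = \frac{\scp}{m+n}\id$ we have $\log \wem{1} = \log\!\left(\frac{\scp}{m+n}\right)\id$, so
\[
\rent{\mat}{\wem{1}} = \trc\!\left(\mat\log\mat\right) - \log\!\left(\tfrac{\scp}{m+n}\right)\trc(\mat) + \trc(\wem{1}) - \trc(\mat) = \trc\!\left(\mat\log\!\left(\tfrac{\mat(m+n)}{e\scp}\right)\right) + \scp,
\]
using $\trc(\wem{1}) = \scp$ and absorbing the $-\trc(\mat)$ into the $-1 = \log(1/e)$ inside the log. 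This establishes~\eqref{eq:Ztsquared1}. Third, for~\eqref{eq:Ztsquared2} I would bound $\trc(\mat\log(\mat(m+n)/(e\scp)))$: since $\scp \geq \trc(\mat) \geq 1$ and $\mat$ is positive semidefinite, writing $\mat = \sum_i \lambda_i \bu_i\bu_i^\trans$ we get $\trc(\mat \log \mat) = \sum_i \lambda_i \log \lambda_i \leq \trc(\mat)\log\trc(\mat) \leq \scp \log \scp$ is too crude; instead note $\sum_i \lambda_i \log\lambda_i \le \trc(\mat) \log(\maxnorm{\mat}) \le \trc(\mat)\log\trc(\mat)$, and then bound the whole expression $\trc(\mat \log(\mat(m+n)/(e\scp))) \le \trc(\mat)\log((m+n)\trc(\mat)/(e\scp)) \le \scp \log((m+n)/e)$ using $\trc(\mat) \le \scp$ and monotonicity, so the right side of~\eqref{eq:Ztsquared1} is at most $\frac{1}{\lrreg}(\scp\log((m+n)/e) + \scp) = \frac{\scp}{\lrreg}\log(m+n)$, where the $-\scp$ from $\log(1/e)$ cancels the additive $+\scp$. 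The condition $m+n\geq 3$ ensures $\log(m+n) > 0$ so all quantities are well-defined and the bound is non-vacuous.

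\textbf{Main obstacle.} The telescoping and the relative-entropy computation are routine; the delicate part is the bound on $\trc(\mat\log\mat)$ and the careful bookkeeping with the $-1$ and $+\scp$ terms so that the final bound collapses exactly to $\frac{\scp}{\lrreg}\log(m+n)$ rather than something with stray additive constants. I would need to verify that $\trc(\mat\log(\mat(m+n)/(e\scp))) + \scp \leq \scp\log(m+n)$; this uses the spectral inequality $\sum_i\lambda_i\log\lambda_i \le (\sum_i\lambda_i)\log(\sum_i\lambda_i)$ (a consequence of convexity of $x\log x$ and $\lambda_i \geq 0$, or equivalently $\log$-monotonicity since each $\lambda_i \le \trc(\mat)$), combined with $\trc(\mat) \le \scp$, and the observation that the map $c \mapsto c\log((m+n)c/(e\scp)) + \scp - c\log((m+n)/e) = c\log(c/\scp)$ is $\le 0$ for $0 < c \le \scp$. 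This is the one place where the hypotheses $\scp \ge \trc(\mat) \ge 1$ are essential.
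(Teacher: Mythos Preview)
Your approach matches the paper's: telescope Lemma~\ref{malS_cons}, drop the nonnegative final relative entropy, evaluate $\rent{\mat}{\wem{1}}$ explicitly for $\wem{1}=\tfrac{\scp}{m+n}\id$, then for~\eqref{eq:Ztsquared2} use the spectral inequality $\trc(\mat\log(a\mat))\le\trc(\mat)\log(a\,\trc(\mat))$ and bound the resulting scalar function of $c:=\trc(\mat)$ on $[1,\scp]$.

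Two small slips to fix. First, ``monotonicity'' is the wrong justification for $c\log((m+n)c/(e\scp))\le\scp\log((m+n)/e)$: the function $f(c)=c\log((m+n)c/(e\scp))$ has derivative $\log((m+n)c/\scp)$, which can be negative on part of $[1,\scp]$ if $\scp>m+n$. The inequality holds because $f$ is \emph{convex} on $[0,\scp]$ with $f(0)=0$ and $f(\scp)=\scp\log((m+n)/e)$, so its maximum on that interval is at an endpoint; this is exactly how the paper argues it (checking both endpoints $c=1$ and $c=\scp$). Second, your algebra in the ``Main obstacle'' paragraph drops a $+\scp$: one has $c\log((m+n)c/(e\scp))+\scp-c\log((m+n)/e)=c\log(c/\scp)+\scp$, not $c\log(c/\scp)$. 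Finally, the hypothesis $m+n\ge3$ is doing real work, not just avoiding vacuousness: it guarantees $\log((m+n)/e)\ge0$, which is precisely what makes the endpoint $c=\scp$ dominate so that $f(c)+\scp\le\scp\log(m+n)$.
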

\begin{proof} We start by proving Equation~\eqref{eq:Ztsquared1}.
Rearranging Lemma~\ref{malS_cons} and summing over $t$,
\begin{align}
\sum_{t=1}^T \left(\trc((\wem{t} - \mat) \bH^t)\right) 
&\leq \frac{1}{\lrreg} \left( \sum_{t=1}^T \rent{\mat}{\wem{t}}-\rent{\mat}{\wem{t+1}} +  \lrreg^2 \tr{\wem{t}(\bH^t)^2}\right) \notag \\
&\leq \frac{1}{\lrreg} \left(\rent{\mat}{\wem{1}}-\rent{\mat}{\wem{T+1}} + \sum_{t=1}^T   \lrreg^2 \tr{\wem{t}(\bH^t)^2}\right). \notag
\end{align}
Using the fact that $\rent{\mat}{\wem{T+1}} \geq 0$ and writing out $\rent{\mat}{\wem{1}}$, we then obtain Equation~\eqref{eq:Ztsquared1}.

To prove Equation~\eqref{eq:Ztsquared2}, we attempt to maximize the term $\tr{\mat\log\left(\frac{\mat (m+n)}{e\scp}\right)} + \scp$. Noting that $\trc\left(\mat \log (a \mat) \right) \leq \trc(\mat) \log \left( \trc(a\mat)\right)$ for $a\geq 0$, we then have \[ \tr{\mat\log\left(\frac{\mat (m+n)}{e\scp}\right)} + \scp \leq \trc(\mat) \log \left ( \frac{\trc(\mat) (m+n)}{e\scp} \right) + \scp. \] The upper bound in the above equation is convex in $\trc(\mat)$ and hence is maximized at either boundary $\{1, \scp \}$. Comparing the terms, we have \[ \tr{\mat\log\left(\frac{\mat (m+n)}{e\scp}\right)} + \scp \leq \log\left(\frac{m+n}{e\scp}\right)+ \scp\] for $\trc(\mat)=1$ and \[\tr{\mat\log\left(\frac{\mat (m+n)}{e\scp}\right)} + \scp \leq \scp \log(m+n)\] for $\trc(\mat) = \scp$. We then observe that given the assumptions, $\scp \log(m+n)$ maximizes, therefore giving the upper bound in Equation~\eqref{eq:Ztsquared2}.

\end{proof}

\begin{lemma}
\label{lemma: bound_Wt_Zt}
The following condition is satisfied for Algorithm~\ref{alg:base}:
\begin{equation}
\label{eq:loss_function_assumption}
    \tr{\wem{t} (\bH^t)^2} \leq \marh h_{\marh}(y_t,\ybt) +\marh + 1.
\end{equation}
\end{lemma}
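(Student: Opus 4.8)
The plan is to reduce to the two explicit forms of $\bH^t$ given by Lemma~\ref{lemma:bZ_t}. Write $\ybt = \tr{\wem{t}\XT}-1$, as in Algorithm~\ref{alg:base}. By Lemma~\ref{lemma:bZ_t}, either $\bH^t = \bzero$, or $\bH^t = -y_t\XT$ and we are in the ``hinge-active'' regime $y_t\ybt < \marh$. In the first case the left-hand side of~\eqref{eq:loss_function_assumption} is $0$, while $\marh h_\marh(y_t,\ybt)+\marh+1 \geq \marh+1 > 0$ because $h_\marh \geq 0$ and $\marh > 0$; so the inequality is immediate. Hence the whole content lies in the second case, and the point to bear in mind is that $\bH^t\neq\bzero$ holds precisely when $y_t\ybt<\marh$ — this is exactly what rescues the $y_t=+1$ branch at the end.

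In the second case $y_t^2 = 1$ gives $(\bH^t)^2 = (\XT)^2$. The key observation I would use is that $\XT = \bx{t}(\bx{t})^\trans$ is rank one, so $(\XT)^2 = \trc(\XT)\,\XT$; by Lemma~\ref{eigenx} the single nonzero eigenvalue $\trc(\XT)$ of $\XT$ lies in $[0,1]$, and $\tr{\wem{t}\XT} \geq 0$ since $\wem{t}$ and $\XT$ are positive semidefinite. Consequently $\tr{\wem{t}(\bH^t)^2} = \trc(\XT)\,\tr{\wem{t}\XT} \leq \tr{\wem{t}\XT} = 1 + \ybt$. (Equivalently one could invoke $(\XT)^2 \preceq \XT$ together with Lemma~\ref{lem:diff_psd}, using that $\wem{t}$ is strictly positive definite.)

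It then remains to verify the elementary scalar inequality $1 + \ybt \leq \marh h_\marh(y_t,\ybt) + \marh + 1$, i.e. $\ybt - \marh \leq [\marh - y_t\ybt]_+$, which I would split according to the sign of $y_t$. If $y_t = 1$, the case hypothesis $y_t\ybt < \marh$ says $\ybt < \marh$, so the left side is negative and the right side nonnegative. If $y_t = -1$, then $[\marh - y_t\ybt]_+ = [\marh+\ybt]_+ \geq \marh + \ybt$, whence $\marh h_\marh(y_t,\ybt)+\marh+1 \geq 2\marh+\ybt+1 \geq 1+\ybt$. Chaining this with the bound of the previous paragraph closes the second case and hence the lemma.

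I do not expect a substantive obstacle: the argument is a rank-one positive-semidefinite comparison followed by a two-line case check. The one thing that must be handled carefully is the interplay between the two cases of Lemma~\ref{lemma:bZ_t} — namely that when $y_t = +1$ and $\bH^t\neq\bzero$ we automatically have $\ybt < \marh$, so a large positive $\ybt$ (which would otherwise violate the target inequality) cannot arise, while when $y_t\ybt \geq \marh$ the left-hand side is already $0$.
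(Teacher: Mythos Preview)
Your proof is correct and follows essentially the same route as the paper: both reduce to $\tr{\wem{t}(\bH^t)^2} \leq \tr{\wem{t}\XT}$ (the paper via $(\XT)^2 \preceq \XT$ and Lemma~\ref{lem:diff_psd}, you via the equivalent rank-one identity $(\XT)^2 = \trc(\XT)\,\XT$ with $\trc(\XT)\le 1$), and then finish with a scalar case check. Your split on the sign of $y_t$ is a bit tidier than the paper's split on whether $\tr{\wem{t}\XT}\lessgtr \marh+1$, but the content is the same.
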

\begin{proof}
The proof splits into two cases.

Case 1) $\marh \leq y_t\ybt = y_t \left(\tr{\wem{t}\XT}-1\right)$:

Observe that $\bH^t = \bzero$ due to Lemma~\ref{lemma:bZ_t}, giving $\tr{\wem{t} (\bH^t)^2} = 0.$ which demonstates~\eqref{eq:loss_function_assumption} in this case.

Case 2) $\marh > y_t\ybt = y_t\left(\tr{\wem{t}\XT}-1\right)$:\\
We have that
\begin{equation}\label{eq:upper_bound_Zt}
\tr{\wem{t} (\bH^t)^2} =\tr{\wem{t} (\XT)^2} \leq  \tr{\wem{t}\XT},
\end{equation}
where the first equality comes from the fact $\bH^t = -y_t\XT$ from Lemma~\ref{lemma:bZ_t} and
the second inequality comes from Lemma~\ref{lem:diff_psd} and the fact that $(\XT)^2 \preceq \XT$ due to Lemma~\ref{eigenx}.

We split case 2 into two further subcases.

Sub-case 1) $\tr{\wem{t}\XT} < \marh + 1$ (Prediction smaller than margin):

Since we have 
\[
\tr{\wem{t}\XT} <\marh h_{\marh}(y_t,\ybt)+  \marh + 1,
\]
lower bounding the L.H.S. by~\eqref{eq:upper_bound_Zt} demonstrates~\eqref{eq:loss_function_assumption}.

Sub-case 2) $\tr{\wem{t}\XT} \ge \marh + 1$ (Prediction larger than margin with mistake):

We have 
\[ \tr{\wem{t}\XT} \leq \left[\tr{\wem{t}\XT} + \marh - 1 \right]_+ - (\marh - 1) \leq  \left[\tr{\wem{t}\XT} + \marh - 1\right]_+ + (\marh + 1)\,.
\]
By the case 2 and sub-case 2 conditions we have that $y_t = -1$, with
\[
\marh h_{\marh}(-1,\ybt) =\left[ \marh+ \tr{\wem{t}\XT} -1 \right]_+.
\]
Thus we have 
\[
\tr{\wem{t}\XT} \leq \marh h_{\marh}(-1,\ybt)  + (\marh + 1)
\] and by lower bounding L.H.S. by~\eqref{eq:upper_bound_Zt} we demonstrate~\eqref{eq:loss_function_assumption} and thus the lemma.
\end{proof}

Now we are ready to introduce the regret bound in terms of the hinge loss for the deterministic $\ybt$.
\begin{lemma}\label{theorem1part1}
The hinge loss of Algorithm~\ref{alg:base} with parameters $\mar \in (0,1]$, $\scp \geq \pdim\geq 1$,
$\lrreg = \sqrt{\frac{\scp \log(m+n)}{2 T}}$, $T  \geq 2 \scp\log(m+n)$ and $m+n \geq 3$, is bounded by
\begin{equation}
\sum_{t\in [T]} \hlm(y_t,\ybt)  \le \sum_{t\in [T]} \hlm(y_t,\Up_{i_t j_t})+ \frac{4}{\mar} \sqrt{2 \scp \log(m+n) T} + \frac{4}{\mar} \scp \log(m+n)\,,
\end{equation}
where $\Up$, $\pdim$ and their relationship are defined in the preliminaries of the proof. 
\end{lemma}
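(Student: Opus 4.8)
The plan is the classical amortized (potential-function) analysis of matrix multiplicative weights, using the quantum relative entropy $\Delta(\mat,\cdot)$ as potential, followed by a Winnow/Littlestone-style rearrangement that turns the per-trial progress inequality into a hinge-loss regret bound, and then a constant-tracking pass that exploits the specific choices $\lrreg = \sqrt{\scp\log(m+n)/(2T)}$ and $T \ge 2\scp\log(m+n)$.

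First I would record the one-trial comparison inequality. The map $\bW \mapsto \marh h_\marh(y_t,\tr{\bW\XT}-1)$ is convex (a hinge composed with an affine function of $\bW$), $\bH^t$ is one of its subgradients at $\wem{t}$ by Lemma~\ref{lemma:bZ_t} (the convention $\bH^t=\bzero$ at the kink is a valid subgradient), and by Lemma~\ref{lem:stuttS} the comparator ``prediction'' is $\tr{\mat\XT}-1 = \Up_{i_tj_t}$. Hence the subgradient inequality evaluated at $\mat$ gives, for every $t$,
\[ \tr{(\wem{t}-\mat)\bH^t} \;\ge\; \marh\hlm(y_t,\ybt)-\marh\hlm(y_t,\Up_{i_tj_t}). \]
Summing over $t\in[T]$ and upper-bounding the left side by Lemma~\ref{theorem:bound_with_Ztsquared}, equation~\eqref{eq:Ztsquared2} — whose hypotheses $\scp \ge \trc(\mat) \ge 1$ and $m+n\ge 3$ hold since $\trc(\mat)=\pdim$ by Lemma~\ref{lem:qexactS} and $\scp\ge\pdim\ge1$ by assumption — produces
\[ \sum_{t}\bigl(\marh\hlm(y_t,\ybt)-\marh\hlm(y_t,\Up_{i_tj_t})\bigr) \;\le\; \tfrac{\scp}{\lrreg}\log(m+n) + \lrreg\sum_{t}\tr{\wem{t}(\bH^t)^2}. \]

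Next I would eliminate the variance term with Lemma~\ref{lemma: bound_Wt_Zt}: $\tr{\wem{t}(\bH^t)^2}\le\marh h_\marh(y_t,\ybt)+\marh+1\le\marh\hlm(y_t,\ybt)+2$, using $\marh\le1$. Writing $L := \sum_t \marh\hlm(y_t,\ybt)$ and $L^* := \sum_t \marh\hlm(y_t,\Up_{i_tj_t})$, the above rearranges to $(1-\lrreg)L \le L^* + \tfrac{\scp}{\lrreg}\log(m+n) + 2\lrreg T$, and substituting the stated $\lrreg$ the last two terms collapse to $2\sqrt{2\scp\log(m+n)T}$. Finally, $T\ge 2\scp\log(m+n)$ forces $\lrreg\le\tfrac12$, so $\tfrac{1}{1-\lrreg}\le 1+2\lrreg$; multiplying through and expanding yields
\[ L \;\le\; L^* + 2\lrreg L^* + 2\sqrt{2\scp\log(m+n)T} + 4\lrreg\sqrt{2\scp\log(m+n)T}. \]
Since each summand of $L^*$ lies in $\{0,2\marh\}$ (as $\Up_{i_tj_t}\in\{\pm\marh\}$), $L^*\le 2\marh T\le 2T$, hence $2\lrreg L^*\le 4\lrreg T = 2\sqrt{2\scp\log(m+n)T}$; and a direct computation gives $4\lrreg\sqrt{2\scp\log(m+n)T}=4\scp\log(m+n)$. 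Collecting terms, $L \le L^* + 4\sqrt{2\scp\log(m+n)T} + 4\scp\log(m+n)$, and dividing by $\marh$ gives the claimed bound.

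The main obstacle is exactly this last step. The multiplicative-weights rearrangement leaves a $\lrreg L$ term on the wrong side, and one must resist the crude bound $\tfrac{1}{1-\lrreg}\le 2$ (which would leave $L^*$ with coefficient $2$, too weak for the stated form) in favour of $\tfrac{1}{1-\lrreg}\le 1+2\lrreg$, so the comparator loss survives with coefficient exactly one; the two leftover pieces — $\lrreg L^*$, absorbed through $L^*\le 2T$, and the cross term $\lrreg\sqrt{\scp\log(m+n)T}$ — are precisely what the $\sqrt{\scp\log(m+n)T}$ and $\scp\log(m+n)$ budget in the statement can pay for, which is also what pins down the value of $\lrreg$ and the requirement $T\ge 2\scp\log(m+n)$. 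The rest (convexity of the hinge, the trace identities $\tr{\mat\XT}-1=\Up_{i_tj_t}$ and $\trc(\mat)=\pdim$, and the spectral facts behind Lemmas~\ref{theorem:bound_with_Ztsquared} and~\ref{lemma: bound_Wt_Zt}) is bookkeeping already in place.
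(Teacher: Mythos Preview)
Your proposal is correct and follows essentially the same route as the paper: the subgradient inequality for the convex hinge (using Lemma~\ref{lem:stuttS}), then the MEG amortized bound of Lemma~\ref{theorem:bound_with_Ztsquared} combined with the variance control of Lemma~\ref{lemma: bound_Wt_Zt}, followed by the $\tfrac{1}{1-\lrreg}\le 1+2\lrreg$ rearrangement and the bound $h_\marh(y_t,\Up_{i_tj_t})\le 2$ to absorb the leftover $\lrreg L^*$ term. The only cosmetic differences are that the paper works with the unscaled losses $h_\marh$ throughout (dividing by $\marh$ earlier rather than at the end) and substitutes $\lrreg$ after labelling the five expanded terms, but the constants and the logic line up exactly.
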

\begin{proof}
Substituting for $\ybt$ gives,
\[h_\marh(y_t,\ybt) = \frac{1}{\marh}[\marh - y_t (\tr{\wem{t}\XT} - 1)]_+\,.\]
Lemma~\ref{lem:stuttS} gives, 
\[h_\marh(y_t,\yut) = \frac{1}{\marh}[\marh - y_t (\tr{\mat\XT} - 1)]_+\,.\] 
Define
\[
f_t(\bZ) := \frac{1}{\marh}[\marh - y_t (\tr{\bZ\XT} - 1)]_+
\]
Since $h_\marh(y_t,\cdot)$ is convex and the fact that a convex function applied to a linear function is again convex we have that $f(\cdot)$ is convex.
We have
\begin{align}
    \sum_{t=1}^T\left(h_{\marh}(y_t,\ybt) -h_{\marh}(y_t,\yut)\right) & =\sum_{t=1}^T\left( f_t(\wem{t}) -f_t(\mat)\right) \notag \\ 
&  \leq \sum_{t=1}^T \tr{\left(\wem{t}-\mat\right)^{\trans} \nabla f_t(\bWtilde)} \label{eq:convex_hg} \\
&  = \sum_{t=1}^T \tr{\left(\wem{t}-\mat\right)^{\trans} \nabla_{\bWtilde} h_{\marh}(y_t,\ybt)} \notag \\
&=\frac{1}{\marh}\sum_{t=1}^{T} \tr{(\wem{t}-\mat)\bH^t} \label{eq:apply_defZt},
\end{align}
where~\eqref{eq:convex_hg} follows from the fact that $f(\bA) - f(\bB) \leq \trc((\bA-\bB)^\trans \nabla f(\bA))$ for a convex function~$f$ and~\eqref{eq:apply_defZt} comes from the definition of $\bH^t = \nabla_{\bWtilde} \marh h_\marh(y_t,\ybt)$ (see~\eqref{eq:defH}) and the fact that $\wem{t}$ and $\mat$ are symmetric.

Therefore, we only need an upper bound to $\sum_{t=1}^{T} \tr{(\wem{t}-\mat)\bH^t}$. 
We observe that $\lrreg \in \left(0,\frac{1}{2}\right]$ due to the definition of $\lrreg$ and the assumption on $T$.  Thus we can apply~\eqref{eq:Ztsquared2} and~\eqref{eq:loss_function_assumption} to obtain
\[
\sum_{t=1}^{T} \tr{(\wem{t}-\mat)\bH^t} \le  \frac{1}{\lrreg} \scp\log(m+n)  + \lrreg\marh \sum_{t=1}^{T} h_{\marh}(y_t,\ybt) + \lrreg   (1+\marh) T\,. 
\]
Substituing the above into~\eqref{eq:apply_defZt} gives,
\begin{align*}
\sum_{t=1}^{T} h_{\marh}(y_t,\ybt) &\leq \sum_{t=1}^T h_{\marh}(y_t,\yut) +\frac{1}{\marh}\left( \frac{1}{\lrreg} \scp\log(m+n)  + \lrreg\marh \sum_{t=1}^{T} h_{\marh}(y_t,\ybt) + \lrreg   (1+\marh) T \right) \\
& = \left(\frac{1}{1 - \lrreg} \right)\left (\sum_{t=1}^T h_{\marh}(y_t,\yut) + \frac{1}{\lrreg \marh}  \scp \log(m+n)  + \frac{\lrreg}{\marh}   (1+\marh) T\right) \\
&  \leq \left(\frac{1}{1 - \lrreg} \right)\left (\sum_{t=1}^T h_{\marh}(y_t,\yut) + \frac{1}{\lrreg \marh}  \scp \log(m+n) + \frac{2 \lrreg}{\marh}  T\right)\,,
\end{align*}
where the final inequality follows since $\marh\in (0,1]$.

We apply $(1/(1-x)) \leq 1+ 2x$ for $x\in [0,1/2]$ to obtain 
\begin{align*}
     \sum_{t=1}^{T} h_{\marh}(y_t,\ybt) &\leq \left(1 + 2 \lrreg \right)\left (\sum_{t=1}^T h_{\marh}(y_t,\yut) + \frac{1}{\lrreg\marh} \left( \scp\log(m+n) \right) + \frac{2 \lrreg}{\marh}   T\right)\\
     & = \overbrace{(1+2 \lr)\sum_{t=1}^T h_{\marh}(y_t,\yut)}^{(1)} + \overbrace{\frac{1}{\lr\marh} \scp \log(m+n)}^{(2)} + \overbrace{\frac{2\lr}{\marh} T}^{(3)}+ \overbrace{\frac{2}{\marh} \scp \log(m+n)}^{(4)} + \overbrace{\frac{4\lr^2}{\marh}T}^{(5)}
     \end{align*}
Then, substituting the value for $\lrreg$, 
\begin{multline*}
  \sum_{t=1}^{T} h_{\marh}(y_t,\ybt)  \leq \overbrace{\sum_{t=1}^T h_{\marh}(y_t,\yut)}^{(a)}+ \overbrace{\sqrt{\frac{2 \scp \log(m+n)}{T}} \sum_{t=1}^T h_{\marh}(y_t,\yut)}^{(b)} +\\ \overbrace{\frac{4}{\marh} \scp \log(m+n)}^{(c)} + \overbrace{\frac{2}{\marh} \sqrt{2\scp \log(m+n)  T}}^{(d)}\,
\end{multline*}
where $(1)=(a)+(b)$, $(2)+(3)=(d)$, and $(4)+(5)=(c)$.
Recalling $\yut=\marh\mat_{i_t,j_t}$ from the preliminaries of the proof, we then have $h_{\marh}(y_t,\yut) \leq 2 \leq \frac{2}{\marh}$, giving
\[
\sum_{t=1}^{T} h_{\marh}(y_t,\ybt) - \sum_{t=1}^T h_{\marh}(y_t,\yut) \leq \frac{4}{\marh} \sqrt{2 \scp\log(m+n)  T} + \frac{4}{\marh} \scp \log(m+n).
\]
\end{proof}

Before we can compute the desired regret bound in terms of the 0-1 loss, we first need to introduce the following relationships. 

\begin{lemma}\label{lem:hinge_EM_bound}
For $y_t \in \{-1,1\}$, $\ybt \in \Re$, $\Yrv\sim \mbox{\sc Uniform}(-\marh,\marh)$, $\marh \in (0,1]$ and $\yht:= \sign(\ybt - \Yrv)$,
\[2\mathbb{E}[ y_t \neq \yht]\leq h_\marh( y_t,\ybt). \]
\end{lemma}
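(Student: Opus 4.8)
The plan is to bound the probability of a mistake by a simple integral over the uniform randomization and then compare it to the hinge loss. First I would fix the trial and write $p := y_t\ybt$, so that $\yht \ne y_t$ exactly when $y_t(\ybt - \Yrv) \le 0$, i.e. when $\Yrv$ lies on the ``wrong side'': if $y_t = 1$ this is $\Yrv \ge \ybt$, and if $y_t = -1$ this is $\Yrv \le \ybt$. Since $\Yrv \sim \mbox{\sc Uniform}(-\marh,\marh)$, in either case $\mathbb{E}[y_t \ne \yht] = \Pr[y_t(\ybt - \Yrv)\le 0]$ equals the fraction of the interval $(-\marh,\marh)$ lying beyond $y_t\ybt = p$ in the appropriate direction; a short case check (on whether $p \le -\marh$, $-\marh < p < \marh$, or $p \ge \marh$) gives the closed form $\mathbb{E}[y_t \ne \yht] = \min\!\left(1, \tfrac{1}{2}\left[1 - \tfrac{p}{\marh}\right]_+ \cdot \tfrac{2\marh}{2\marh}\right)$; more cleanly, $\mathbb{E}[y_t \ne \yht] = \tfrac{1}{2}\min\!\left(2, \left[1 - p/\marh\right]_+\right)$ when $p \ge -\marh$, and $=1$ when $p \le -\marh$. (One should treat the boundary $p = \marh$, where $\yht$ is $y_t$ almost surely, and $p=-\marh$ consistently; the zero-probability event $\Yrv = \ybt$ is irrelevant.)

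Next I would recall that $h_\marh(y_t,\ybt) = \tfrac{1}{\marh}[\marh - y_t\ybt]_+ = \left[1 - p/\marh\right]_+$. So the claim $2\,\mathbb{E}[y_t \ne \yht] \le h_\marh(y_t,\ybt)$ reduces to checking, for all real $p$, that $2\,\mathbb{E}[y_t \ne \yht] \le \left[1 - p/\marh\right]_+$. In the regime $p \ge -\marh$ this is $\min\!\left(2, \left[1 - p/\marh\right]_+\right) \le \left[1 - p/\marh\right]_+$, which is immediate since $\min(a,b)\le b$. In the regime $p < -\marh$ we have $\mathbb{E}[y_t \ne \yht] = 1$, and we need $2 \le 1 - p/\marh$, i.e. $p \le -\marh$, which holds (with equality only at the boundary, already covered by the other case). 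Hence the inequality holds in all cases.

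The argument is essentially elementary, so there is no real ``hard part''; the only thing requiring care is the bookkeeping of the uniform-measure computation of $\Pr[y_t \ne \yht]$ — in particular getting the two sign cases ($y_t = \pm 1$) to collapse into the single expression in $p = y_t\ybt$, and handling the endpoints $p = \pm\marh$ so that the strict-versus-weak inequalities in the definition of $\yht = \sign(\ybt - \Yrv)$ are consistent with the $[\,\cdot\,]_+$ in the hinge loss. I would present the computation as a short three-way case split on $p$ and then immediately read off the bound.
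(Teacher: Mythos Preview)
Your proof is correct and follows essentially the same approach as the paper: compute the mistake probability from the uniform randomization and compare it to the hinge loss case by case. Your substitution $p = y_t\ybt$ is a mild streamlining that collapses the paper's separate $y_t = 1$ and $y_t = -1$ cases into a single three-way split on $p$, but the content is identical.
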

\begin{proof}
We have
\[
p(\yht=1) = 
\begin{cases}
0 & \text{ if } \ybt \leq -\marh \\
\frac{1}{2} + \frac{\ybt}{2 \marh} & \text{ if } -\marh < \ybt \leq \marh\\
1 & \text{ if } \ybt > \marh
\end{cases}
\]
and
\[
p(\yht=-1) = 
\begin{cases}
1 & \text{ if } \ybt \leq -\marh \\
\frac{1}{2} - \frac{\ybt}{2 \marh} & \text{ if } -\marh < \ybt \leq \marh\\
0 & \text{ if } \ybt > \marh.
\end{cases}
\]
The possible cases are as follows.
\begin{enumerate}
\item If $|\ybt| < \marh$, $2 \mathbb{E}[ y_t \neq \yht] = h_\marh( y_t,\ybt)$. This is since if $y_t=1$, $\mathbb{E}[ y_t \neq \yht] = \frac{1}{2} - \frac{\ybt}{2\marh}$ and $h_\marh(y_t,\ybt) =\frac{1}{\marh} (\marh - \ybt) $. Similarly if $y_t=-1$, $\mathbb{E}[ y_t \neq \yht] = \frac{1}{2} + \frac{\ybt}{2\marh}$ and $h_\marh(y_t,\ybt) =\frac{1}{\marh} (\marh +\ybt) $.
\item If $|\ybt| \geq \marh$ and
$\mathbb{E}[y_t \neq \yht]=0$, then $h_\marh( y_t,\ybt)=  \frac{1}{\marh}[\marh - |\ybt|]_+ = 0$. 
\item If $|\ybt| \geq \marh$ and
$\mathbb{E}[ y_t \neq \yht]=1$,
$h_\marh( y_t,\ybt) = \frac{1}{\marh}[\marh + |\ybt|]_+ \geq \frac{2 \marh}{\marh} = 2 \mathbb{E}[ y_t \neq \yht] . $

\end{enumerate}
\end{proof}

\begin{lemma}\label{lem:bound_bU_marm}
Suppose we have $\bU$ as in Theorem~\ref{thm:base}. Recalling that $\Up = \marh \bU$ from the preliminaries of the proof, we have that \[h_{\marh}(y_t,\yut) \leq 2[ y_t \ne U_{i_t j_t}].\]
\end{lemma}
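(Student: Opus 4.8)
The plan is to unfold the definition of the margin hinge loss and exploit the fact that both $y_t$ and $U_{i_tj_t}$ are $\pm 1$. Recall from the preliminaries of this proof that $\Up = \marh\bU$, so $\yut = \Up_{i_tj_t} = \marh U_{i_tj_t}$, and recall $\hlm(y,\bar y) = \frac{1}{\marh}[\marh - y\bar y]_+$. Substituting,
\[
\hlm(y_t,\yut) \;=\; \frac{1}{\marh}\bigl[\marh - y_t\,\marh U_{i_tj_t}\bigr]_+ \;=\; \bigl[1 - y_t U_{i_tj_t}\bigr]_+ ,
\]
using that $\marh > 0$ to pull the scalar out of $[\cdot]_+$.

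Next I would split on the two possible values of $y_t U_{i_tj_t}$, which is legitimate because $\bU\in\sm$ forces $U_{i_tj_t}\in\{-1,1\}$ and $y_t\in\{-1,1\}$ by hypothesis. If $y_t = U_{i_tj_t}$, then $y_t U_{i_tj_t} = 1$, so the right-hand side is $[1-1]_+ = 0 = 2[y_t \ne U_{i_tj_t}]$. If $y_t \ne U_{i_tj_t}$, then $y_t U_{i_tj_t} = -1$, so the right-hand side is $[1-(-1)]_+ = 2 = 2[y_t\ne U_{i_tj_t}]$. In both cases $\hlm(y_t,\yut) = 2[y_t\ne U_{i_tj_t}]$, which in particular gives the claimed inequality.

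There is no real obstacle here: the statement is a one-line consequence of the definitions once one observes that the $1/\marh$ in the hinge loss exactly cancels the $\marh$ hidden in $\Up = \marh\bU$, reducing everything to the plain (unit-margin) hinge loss evaluated at a $\pm 1$ target, which is exactly $0$ or $2$. The only point worth stating carefully is that $\bU$ is a sign matrix, so no intermediate values of $y_t U_{i_tj_t}$ can occur; I would mention this explicitly so the case analysis is exhaustive.
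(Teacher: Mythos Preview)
Your proof is correct and follows essentially the same approach as the paper: substitute $\yut=\marh U_{i_tj_t}$ into the hinge-loss definition and do a two-case split on whether $y_t=U_{i_tj_t}$, obtaining $0$ or $2$. Your version is in fact slightly cleaner, since you explicitly factor out $\marh$ to reduce to $[1-y_tU_{i_tj_t}]_+$, whereas the paper works with the $\marh$ still inside.
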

\begin{proof}
Recall that $\bU \in \{-1,1\}^{m\times n}$. The hinge loss is then given by
$h_\marh(y_t, \yut ) = \frac{1}{\marh} [\marh -y_t \yut ]_+. $
In the case that $y_t = \yut$,
$h_\marh(y_t, \yut ) = \frac{1}{\marh} [\marh - \marh ]_+  = 0. $
Otherwise,
$ h_\marh(y_t, \yut ) = \frac{1}{\marh} [\marh + \marh]_+  = 2. $
\end{proof}

We proceed by giving a sharper bound for Algorithm~\ref{alg:base} than is stated in Theorem~\ref{thm:base}. This, however, only holds under the additional assumption that $T  \geq 2 \upD \log(m+n)$. 

 \begin{theorem}\label{thm:base_bef_alg}
The expected regret of Algorithm~\ref{alg:base} with {\bf non-conservative} updates and parameters $\mar \in (0,1]$, $\upD \geq \qD(\bU)$ ,
$\lr = \sqrt{\frac{\upD \log(m+n) }{2 T}}$, p.d. matrices $\RN\in \SPDM^m$ and $\CN\in \SPDM^n$ and for $T  \geq 2 \upD \log(m+n)$  is bounded  by
\begin{equation}\label{eq:refinebaseregret}
\Exp[\nm] - \sum_{t\in [T]} [ y_t \ne U_{i_t j_t}] \le  \frac{2}{\mar} \sqrt{2 \upD\log(m+n) T} + \frac{2}{\mar} \upD \log(m+n) 
\end{equation}
for all $\bU\in \sm$ with $\maxnorm{\bU} \le 1/\mar$.
\end{theorem}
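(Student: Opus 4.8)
The plan is to assemble three results that are already in place: the \emph{deterministic} hinge-loss regret bound of Lemma~\ref{theorem1part1}, the randomized-prediction calibration inequality of Lemma~\ref{lem:hinge_EM_bound}, and the comparator-loss bound of Lemma~\ref{lem:bound_bU_marm}. First I would fix the embedding exactly as in the preliminaries of the proof: since $\maxnorm{\bU}\le 1/\mar$, set $\Up := \marh\bU$ and let $(\nP,\nQ)$ be the row-normalized factorization with $\Up = \nP\nQ^{\trans}$ that achieves $\pdim = \qD(\bU)$; then take $\scp := \upD$, so that $\scp \ge \pdim$ (by the hypothesis $\upD \ge \qD(\bU)$) and $\scp \ge 1$. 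The remaining hypotheses of Lemma~\ref{theorem1part1} now hold verbatim: the learning rate $\lr = \sqrt{\upD\log(m+n)/(2T)}$ is exactly $\sqrt{\scp\log(m+n)/(2T)}$, the condition $T \ge 2\upD\log(m+n)$ is exactly $T \ge 2\scp\log(m+n)$, and $m+n\ge 3$ is assumed. Hence Lemma~\ref{theorem1part1} gives
\[
\sum_{t\in[T]} \hlm(y_t,\ybt) \;\le\; \sum_{t\in[T]} \hlm(y_t,\Up_{i_t j_t}) \;+\; \frac{4}{\mar}\sqrt{2\upD\log(m+n)T} \;+\; \frac{4}{\mar}\upD\log(m+n).
\]

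Next I would make explicit that the only randomness in Algorithm~\ref{alg:base} is the reported label $\yht = \sign(\ybt - \Yrv)$: the update test $y_t\ybt < \marh$ and the hypothesis $\wem{t+1}$ depend only on the (oblivious) index/label sequence and on $\ybt = \tr{\wem{t}\XT}-1$, so the entire trajectory $\{\wem{t},\ybt\}_{t\in[T]}$ — and therefore the whole right-hand side of the display above — is a deterministic function of the input. Applying Lemma~\ref{lem:hinge_EM_bound} for each fixed $\ybt$, with the fresh $\Yrv$ the only source of randomness, and summing over $t$ then yields
\[
\Exp[\nm] \;=\; \sum_{t\in[T]}\Exp\bigl[[y_t\ne\yht]\bigr] \;\le\; \frac12\sum_{t\in[T]}\hlm(y_t,\ybt).
\]

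Finally I would chain the two displays and invoke Lemma~\ref{lem:bound_bU_marm}, which gives $\tfrac12\hlm(y_t,\yut)\le[y_t\ne U_{i_t j_t}]$ (recall $\yut = \Up_{i_t j_t}$ and $\bU\in\{-1,1\}^{m\times n}$): halving the Lemma~\ref{theorem1part1} bound turns the two $\tfrac{4}{\mar}$ terms into the claimed $\tfrac{2}{\mar}$ terms, and replacing $\tfrac12\sum_t \hlm(y_t,\Up_{i_t j_t})$ by $\sum_t [y_t\ne U_{i_t j_t}]$ and rearranging produces~\eqref{eq:refinebaseregret}; since $\bU$ was an arbitrary comparator in $\sm$ with $\maxnorm{\bU}\le 1/\mar$ (and $\qD(\bU)\le\upD$, which is what ``$\upD\ge\qD(\bU)$'' encodes), the bound holds for all such $\bU$.

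Because every quantitative estimate is already packaged as a lemma, there is no genuine analytic obstacle at this stage; the care needed is organizational. I would expect the main things to get right to be: (i) checking that the three parameter identifications ($\scp=\upD$, the learning rate, and the lower bound on $T$) line up exactly with the hypotheses of Lemma~\ref{theorem1part1}; (ii) articulating the ``late randomization'' observation precisely, so that the deterministic hinge-loss bound may legitimately be combined with a per-trial expectation bound; and (iii) tracking the factor of two through the hinge-loss calibration (Lemmas~\ref{lem:hinge_EM_bound} and~\ref{lem:bound_bU_marm}) so that the leading constant improves from $4$ to $2$. The substantive work — the amortized quantum-relative-entropy inequality and the matricized Winnow-type analysis — has already been carried out in Lemmas~\ref{malS_cons}--\ref{theorem1part1}.
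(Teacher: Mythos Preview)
Your proposal is correct and follows essentially the same approach as the paper: invoke Lemma~\ref{theorem1part1} with $\scp=\upD$, then apply Lemmas~\ref{lem:hinge_EM_bound} and~\ref{lem:bound_bU_marm} to pass from hinge loss to the expected 0-1 regret, halving the constants from $4/\mar$ to $2/\mar$. Your write-up is in fact more careful than the paper's terse proof, explicitly verifying the parameter identifications and articulating why the trajectory $\{\wem{t},\ybt\}$ is deterministic so that the per-trial expectation bound may be summed.
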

\begin{proof}
Starting from Lemma~\ref{theorem1part1}, we observe that we can apply Lemma~\ref{lem:hinge_EM_bound}, to bound the expected mistakes of the latter by the cumulative hinge loss of the former. Combining this with Lemma~\ref{lem:bound_bU_marm} then gives the desired regret bound in terms of the 0-1 loss. Note that $\upD \geq \pdim = \qD(\bU)$, where the inequality comes from Lemma~\ref{theorem1part1} and the equality is stated in the preliminaries of the proof, which assumes that $\maxnorm{\bU} \le 1/\mar$. 
\end{proof}

We will now prove the first part of Theorem 1.  We split into two cases,

Case 1) ($T  < 2 \upD \log(m+n)$):

we have that 
\[ \Exp[\nm] - \sum_{t\in [T]} [ y_t \ne U_{i_t j_t}] \le T < 2 \upD \log(m+n)\,\]
 for all $\lrreg>0$.

Case 2) ($T  \geq 2 \upD \log(m+n)$):

From~\eqref{eq:refinebaseregret} we have
\begin{align}
\Exp[\nm] - \sum_{t\in [T]} [ y_t \ne U_{i_t j_t}] & \le \frac{2}{\mar} \sqrt{2 \upD\log(m+n) T} + \frac{2}{\mar} \upD \log(m+n) \notag \\
 & =\frac{2}{\mar} \sqrt{2 \upD\log(m+n) T} + \frac{2}{\mar} \sqrt{(\upD \log(m+n))^2} \notag \\
 &\leq \frac{2}{\mar} \sqrt{2 \upD\log(m+n) T} + \frac{2}{\mar} \sqrt{\frac{1}{2}\upD \log(m+n) T } \label{eq:tbound} \\
 &=  \frac{2}{\mar} \left(\sqrt{2}+\sqrt{\frac{1}{2}}\right)\sqrt{\upD\log(m+n) T} \notag
\end{align}
where in Equation~\eqref{eq:tbound} we used the assumption on $T$.

Combining both cases, we have that the following holds for all $T$
\begin{align}
\Exp[\nm] - \sum_{t\in [T]} [ y_t \ne U_{i_t j_t}]& \leq  \frac{2}{\mar} \left(\sqrt{2}+\sqrt{\frac{1}{2}}\right)\sqrt{\upD\log(m+n) T} + \min(2 \upD \log(m+n), T) \notag \\
&= \frac{2}{\mar} \left(\sqrt{2}+\sqrt{\frac{1}{2}}\right)\sqrt{\upD\log(m+n) T} +\sqrt{ \min(2 \upD \log(m+n), T)^2} \notag\\
&\leq \frac{2}{\mar} \left(\sqrt{2}+\sqrt{\frac{1}{2}}\right)\sqrt{\upD\log(m+n) T} +\sqrt{ 2 \upD \log(m+n) T} \notag\\
&\leq \frac{2}{\mar} \left(\sqrt{2}+\sqrt{\frac{1}{2}}\right)\sqrt{\upD\log(m+n) T} + \frac{2}{\mar} \sqrt{ \frac{1}{2}\upD \log(m+n) T} \label{eq:gamtrick} \\
&=\frac{4}{\mar} \sqrt{ 2\upD \log(m+n) T} \notag
\end{align}
where we used the fact that $\frac{1}{\gamma} \geq 1$ in Equation~\eqref{eq:gamtrick}.  Thus we have demonstrated~\eqref{eq:baseregret} proving Theorem~\ref{thm:base}.
 \hfill $\blacksquare$
\subsection{Proof for the Realizable Case}\label{ap:mwthm1}

In this subsection, we prove the second part of Theorem~\ref{thm:base}. Recall from the theorem statement that $y_t = \bU_{i_t j_t}$ for all $t\in\nmset$,
$\mc(\bU)^{-1} \geq \mar$, $ \scp \geq \min\limits_{\bV \in \SP^1(\bU)}  \qD(\bV)$, $\lr = \mar$ and that we have {\bf conservative} updates. Recall from the preliminaries of the proof that given $\bU$, $\Up \in \Re^{m\times n}$ is such that $\Up = \marh \argmin \limits_{\bV \in \SP^1(\bU)}  \qD(\bV)$, meaning that $\min_{i\in[m],j\in[n]}|\Up_{ij}| \geq \marh$. Also recall that $
\pdim = \min\limits_{\bV \in \SP^1(\bU)}  \qD(\bV)$.

\begin{lemma}\label{ptl2}~\cite[Lemma 2.1]{tsuda2005matrix}
If $\bA\in\bS_{+}^d$ with eigenvalues in $[0,1]$ and $a\in\mathbb{R}$ then:
\[\left(1-e^a\right)\bs{A}\preceq \id-\exp(a\bs{A})\]
\end{lemma}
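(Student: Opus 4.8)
The plan is to reduce the Loewner-order inequality to a scalar inequality by simultaneous diagonalization, and then establish the scalar inequality by convexity of the exponential. Since $\bA\in\bS_+^d$ with eigenvalues in $[0,1]$, write the spectral decomposition $\bA=\sum_{i=1}^d\lambda_i\bu_i\bu_i^\trans$ with $\{\bu_i\}$ orthonormal and $\lambda_i\in[0,1]$. Because $\exp(a\bA)$ is a spectral function of $\bA$, it shares this eigenbasis: $\exp(a\bA)=\sum_i e^{a\lambda_i}\bu_i\bu_i^\trans$, and of course $\id=\sum_i\bu_i\bu_i^\trans$. Hence
\[
\id-\exp(a\bA)-(1-e^a)\bA=\sum_{i=1}^d\bigl(1-e^{a\lambda_i}-(1-e^a)\lambda_i\bigr)\,\bu_i\bu_i^\trans .
\]
A symmetric matrix is positive semidefinite iff all its eigenvalues are nonnegative, so it suffices to show that $1-e^{a\lambda}-(1-e^a)\lambda\ge 0$ for every $\lambda\in[0,1]$ and every $a\in\Re$.

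For the scalar step I would use that $\lambda\mapsto e^{a\lambda}$ is convex on $\Re$ (second derivative $a^2e^{a\lambda}\ge 0$). Thus on $[0,1]$ the graph lies below the chord joining $(0,1)$ and $(1,e^a)$:
\[
e^{a\lambda}\le(1-\lambda)\cdot 1+\lambda\cdot e^a=1+\lambda(e^a-1),\qquad \lambda\in[0,1].
\]
Rearranging gives $1-e^{a\lambda}\ge\lambda(1-e^a)$, which is exactly the required scalar bound. Equivalently, one may note that $\lambda\mapsto 1-e^{a\lambda}-(1-e^a)\lambda$ is concave and vanishes at both endpoints $\lambda=0$ and $\lambda=1$, hence is nonnegative on $[0,1]$. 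Combining with the display above proves $(1-e^a)\bA\preceq\id-\exp(a\bA)$.

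There is no genuine obstacle here; the proof is short. The only points needing care are bookkeeping ones: first, that the reduction to eigenvalues is legitimate, which follows because $\exp(a\bA)$ and $\id$ are simultaneously diagonalizable with $\bA$; and second, that the hypothesis $\lambda_i\in[0,1]$ is essential — the chord bound (and hence the matrix inequality) relies on having an upper bound of $1$ on the eigenvalues, and the statement genuinely fails for eigenvalues outside $[0,1]$ when $a>0$. This is why the lemma is invoked only after Lemma~\ref{eigenx} guarantees the relevant matrices have spectrum in $[0,1]$.
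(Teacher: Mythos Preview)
Your proof is correct. The paper does not actually prove this lemma; it simply cites it as \cite[Lemma~2.1]{tsuda2005matrix}, so there is no in-paper proof to compare against. Your argument---simultaneous diagonalization followed by the chord inequality $e^{a\lambda}\le(1-\lambda)+\lambda e^{a}$ for $\lambda\in[0,1]$ from convexity of the exponential---is the standard proof and is exactly how the cited reference establishes it.
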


\begin{lemma}\label{malS}
For all trials $t\in \nmset$, we have:
\begin{equation}
\rent{\mat}{\wem{t}}-\rent{\mat}{\wem{t+1}}
\geq\lr y_t \tr{\mat\XT}+\left(1-e^{\lr y_t}\right)\tr{\wem{t}\XT}\,.
\end{equation}
\end{lemma}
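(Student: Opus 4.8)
The plan is to run the same potential-function (amortized) analysis used for Lemma~\ref{malS_cons}, with the quantum relative entropy $\rent{\mat}{\cdot}$ as potential, but replacing the quadratic upper bound on $\id-\exp(-\lr\bH^t)$ by the sharper \emph{linear} bound from Lemma~\ref{ptl2}, which is available because on a conservative update trial the increment $\lr y_t\XT$ has a well-controlled spectrum.

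First I would expand the left-hand side directly from Definition~\ref{def:qrel}:
\[
\rent{\mat}{\wem{t}}-\rent{\mat}{\wem{t+1}} = \tr{\mat\log\wem{t+1}-\mat\log\wem{t}} + \tr{\wem{t}} - \tr{\wem{t+1}}\,.
\]
Since $t\in\nmset$ means a mistake was made and hence the conservative update fired, $\log\wem{t+1} = \log\wem{t} + \lr y_t\XT$; substituting this, the first trace term collapses to $\lr y_t\tr{\mat\XT}$ and the last becomes $\tr{\exp(\log\wem{t}+\lr y_t\XT)}$. Applying the Golden--Thompson inequality (Lemma~\ref{ptl1}) with $\bs{A}=\log\wem{t}$ and $\bs{B}=\lr y_t\XT$, and using $\exp(\log\wem{t})=\wem{t}$, gives
\[
\rent{\mat}{\wem{t}}-\rent{\mat}{\wem{t+1}} \ge \lr y_t\tr{\mat\XT} + \tr{\wem{t}\bigl(\id - \exp(\lr y_t\XT)\bigr)}\,.
\]

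Next I would lower-bound $\tr{\wem{t}(\id-\exp(\lr y_t\XT))}$. By Lemma~\ref{eigenx} all eigenvalues of $\XT$ lie in $[0,1]$, so Lemma~\ref{ptl2} (with $\bs{A}=\XT$ and the scalar $a=\lr y_t$) yields the operator inequality $(1-e^{\lr y_t})\XT \preceq \id - \exp(\lr y_t\XT)$. Because $\wem{t}$ is a matrix exponential it lies in $\bS_{++}^{m+n}$, so Lemma~\ref{lem:diff_psd} converts this into $(1-e^{\lr y_t})\tr{\wem{t}\XT} \le \tr{\wem{t}(\id-\exp(\lr y_t\XT))}$. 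Plugging this into the previous display gives exactly the claimed inequality.

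The argument is essentially routine; there is no substantive obstacle beyond bookkeeping. The only points needing care are: verifying that the conservative update rule ($[\mbox{\sc non-conservative}]=0$, hence $\Yrv=0$ and $\yht=\sign(\ybt)$) fires precisely on the mistake trials $t\in\nmset$, so that the identity $\log\wem{t+1}=\log\wem{t}+\lr y_t\XT$ is legitimate there; confirming that Lemma~\ref{ptl2} applies because $\XT$'s spectrum lies in $[0,1]$ (Lemma~\ref{eigenx}) and the scalar $a=\lr y_t$ is unrestricted; and noting $\wem{t}\in\bS_{++}^{m+n}$ so that Lemma~\ref{lem:diff_psd} is applicable when passing from the operator inequality to the trace inequality.
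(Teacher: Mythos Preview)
Your proposal is correct and follows essentially the same approach as the paper's proof: expand the relative-entropy drop using the update rule, apply Golden--Thompson (Lemma~\ref{ptl1}), then use Lemma~\ref{ptl2} together with Lemma~\ref{eigenx} to linearize $\id-\exp(\lr y_t\XT)$. You are slightly more explicit than the paper in invoking Lemma~\ref{lem:diff_psd} and the positivity of $\wem{t}$ to pass from the operator inequality to the trace inequality, and in checking that conservative updates fire exactly on $\nmset$, but the argument is otherwise identical.
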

\begin{proof}
We have:
\begin{align}
\rent{\mat}{\wem{t}}-\rent{\mat}{\wem{t+1}} &=\tr{\mat\log\wem{t+1}-\mat\log\wem{t}}+\tr{\wem{t}}-\tr{\wem{t+1}} \notag\\
\label{tseq11}& =\lr y_t \tr{\mat\XT}+\tr{\wem{t}}-\tr{e^{\log\wem{t}+\lr y_t \XT}}\\
\label{tseq21}&\geq\lr y_t \tr{\mat\XT}+\tr{\wem{t}}-\tr{e^{\log\wem{t}}e^{\lr y_t \XT}}\\
&=\lr y_t\tr{\mat\XT}+\tr{\wem{t}\left(\id-e^{\lr y_t \XT}\right)}\notag\\
\label{tseq31}&\geq\lr y_t \tr{\mat\XT}+\left(1-e^{\lr y_t}\right)\tr{\wem{t}\XT},
\end{align}
where Equation~\eqref{tseq11} comes from the update of the algorithm, Equation~\eqref{tseq21} comes from Lemma~\ref{ptl1} and Equation~\eqref{tseq31} comes from Lemma~\ref{ptl2} which applies since, by Lemma \ref{eigenx} all eigenvalues of $\XT$ are in $[0,1]$.
\end{proof}

\begin{lemma}\cite[Lemma A.5]{HPP16}\label{numl}
For $x\in[-1,1]$,
\[
x^2+x+1-e^{x}\geq(3-e)x^2\,.
\]
\end{lemma}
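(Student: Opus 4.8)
The plan is to reduce the inequality to an elementary tail estimate for the power series of $e^x$. Writing $e^x = 1 + x + \tfrac{x^2}{2} + \sum_{n\ge 3}\tfrac{x^n}{n!}$, the left-hand side satisfies
\[
x^2 + x + 1 - e^x = \frac{x^2}{2} - \sum_{n\ge 3}\frac{x^n}{n!}\,,
\]
so the claim $x^2 + x + 1 - e^x \ge (3-e)x^2$ is equivalent to
\[
\sum_{n\ge 3}\frac{x^n}{n!} \le \Bigl(\frac{1}{2} - (3-e)\Bigr) x^2 = \Bigl(e - \frac{5}{2}\Bigr) x^2\,.
\]

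The key observation is that the target constant $e - \tfrac52$ is exactly the tail $\sum_{n\ge 3}\tfrac{1}{n!}$. Given that, the remaining step is purely a bound on a convergent series restricted to $|x|\le 1$: by the triangle inequality, then $|x|^n \le |x|^3$ for $n\ge 3$ (since $|x|\le 1$), then $|x|^3 \le x^2$ (again since $|x|\le 1$),
\[
\sum_{n\ge 3}\frac{x^n}{n!} \le \sum_{n\ge 3}\frac{|x|^n}{n!} \le |x|^3\sum_{n\ge 3}\frac{1}{n!} \le x^2\sum_{n\ge 3}\frac{1}{n!} = \Bigl(e - \frac{5}{2}\Bigr) x^2\,,
\]
which is precisely what we need; equality holds at $x = 0$ and $x = 1$.

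The only subtle point --- and hence the ``main obstacle'' --- is recognizing the identity $e - \tfrac52 = \sum_{n\ge 3}\tfrac1{n!}$ that makes the crude tail bound match the target constant exactly; everything else is routine. As a self-contained alternative avoiding series manipulations, one could set $f(x) := (e-2)x^2 + x + 1 - e^x$ and show $f \ge 0$ on $[-1,1]$ by calculus: $f(0) = f(1) = 0$, and $f''(x) = 2(e-2) - e^x$ is strictly decreasing with a single zero, so $f'$ is unimodal; combined with $f'(0) = 0$ and $f'(1) = e - 3 < 0$ this forces $f$ to decrease on $[-1,0]$, increase on $[0,x^{**}]$ for some $x^{**}\in(0,1)$, and decrease on $[x^{**},1]$, so the minimum of $f$ over $[-1,1]$ is attained at $x\in\{0,1\}$ and equals $0$. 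I would present the power-series argument as the primary proof for its brevity.
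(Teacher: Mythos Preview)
Your power-series argument is correct and clean: the reduction to $\sum_{n\ge 3} x^n/n! \le (e-\tfrac52)x^2$ is right, the identification $e-\tfrac52 = \sum_{n\ge 3} 1/n!$ is exact, and the chain of inequalities $|x|^n \le |x|^3 \le x^2$ for $n\ge 3$ and $|x|\le 1$ is valid. The calculus alternative is also correct; the shape analysis of $f'$ via the single sign change of $f''$ is sound, and the monotonicity on $[-1,0]$ handles the left endpoint automatically.

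Note, however, that the paper does not supply its own proof of this lemma: it is quoted verbatim from \cite[Lemma~A.5]{HPP16} and used as a black box in the progress-inequality argument (Lemma~\ref{lem:detprogress}). So there is no in-paper proof to compare against. Your series argument is a perfectly good self-contained justification and would serve well if one wanted to make the appendix independent of the cited reference.
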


We proceed by showing that the ``progress'' $\rent{\mat}{\wem{t}}-\rent{\mat}{\wem{t+1}}$ of $\wem{t}$ towards $\mat$ may be further lower bounded by $c \mar$ (see Lemma~\ref{lem:detprogress}).  

\begin{lemma}\label{lem:detprogress}
Let $\co:=3-e$.   For all trials $t$ with $t \in \nmset$ (under the conditions of Lemma~\ref{lem:base}) we have:
\[
\rent{\mat}{\wem{t}}-\rent{\mat}{\wem{t+1}}\geq \co\mar^2
\]
\end{lemma}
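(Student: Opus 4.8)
The plan is to start from the single-trial progress inequality of Lemma~\ref{malS}, rewrite its right-hand side using the two trace identities already available, and then collapse everything onto the scalar estimate recorded in Lemma~\ref{numl}.

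First I would apply Lemma~\ref{malS}: for a mistake trial $t\in\nmset$ in the conservative setting (so that an update genuinely occurs and $\wem{t+1}=\exp(\log\wem{t}+\lr y_t\XT)$),
\[
\rent{\mat}{\wem{t}}-\rent{\mat}{\wem{t+1}}\ \geq\ \lr\,y_t\,\tr{\mat\XT}+\bigl(1-e^{\lr y_t}\bigr)\tr{\wem{t}\XT}\,.
\]
Next I would substitute $\tr{\mat\XT}=1+\Up_{i_t j_t}$ (Lemma~\ref{lem:stuttS}), $\tr{\wem{t}\XT}=1+\ybt$ (the definition of $\ybt$ in Algorithm~\ref{alg:base}), and $\lr=\mar$ (the hypothesis of the realizable case), turning the right-hand side into
\[
g_t\ :=\ \mar\,y_t\,(1+\Up_{i_t j_t})+\bigl(1-e^{\mar y_t}\bigr)(1+\ybt)\,.
\]

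Then I would lower-bound $g_t$ by a case split on $y_t\in\{-1,1\}$. Since $y_t=U_{i_t j_t}$ on a mistake trial (realizability) and $\Up=\marh\,\argmin_{\bV\in\SP^1(\bU)}\qD(\bV)$ satisfies $\Up_{ij}\sign(U_{ij})\geq\marh$, we get $y_t\Up_{i_t j_t}\geq\marh=\mar$; and on a conservative mistake trial $\yht=\sign(\ybt)\neq y_t$ forces $y_t\ybt\leq 0$. If $y_t=1$ then $1+\Up_{i_t j_t}\geq 1+\mar$ and $1+\ybt\leq 1$, and since $1-e^{\mar}<0$ this yields $g_t\geq \mar(1+\mar)+(1-e^{\mar})=\mar^2+\mar+1-e^{\mar}$. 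If $y_t=-1$ then $1+\Up_{i_t j_t}\leq 1-\mar$ and $1+\ybt\geq 1$, and since $1-e^{-\mar}>0$ this yields $g_t\geq -\mar(1-\mar)+(1-e^{-\mar})=\mar^2-\mar+1-e^{-\mar}$. In both cases, writing $x:=\mar y_t$, which lies in $[-1,1]$ because $1/\mar\geq\mc(\bU)\geq 1$, the lower bound is exactly $x^2+x+1-e^{x}$, and Lemma~\ref{numl} gives $x^2+x+1-e^{x}\geq(3-e)x^2=\co\mar^2$.

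I do not expect a substantive obstacle: the only point requiring care is the sign bookkeeping in the case split, since the coefficient $1-e^{\mar y_t}$ changes sign with $y_t$, so replacing $1+\ybt$ by its extreme value $1$ gives a valid lower bound only after checking monotonicity separately in each case; one must likewise use the correct one-sided margin bound on $\Up_{i_t j_t}$ and note that on a conservative mistake trial an update does take place, so that Lemma~\ref{malS} actually applies.
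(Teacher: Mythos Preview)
Your proof is correct and follows essentially the same approach as the paper: start from Lemma~\ref{malS}, split on the sign of $y_t$, use the margin inequality $y_t\Up_{i_t j_t}\ge\mar$ together with the mistake condition $y_t\ybt\le 0$ to bound the two trace terms, and finish with Lemma~\ref{numl}. The only cosmetic difference is that you substitute $\tr{\mat\XT}=1+\Up_{i_t j_t}$ and $\tr{\wem{t}\XT}=1+\ybt$ explicitly and unify the two cases via $x=\mar y_t$, whereas the paper keeps the traces and treats the cases separately; the logic is identical.
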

\newcommand{\vvterm}{\mar}
\begin{proof}
By Lemma~\ref{lem:stuttS}, $\Up_{i_t j_t}=\tr{\mat{\XT}}-1$ so since $y_t = \sign(\bar{U}_{i_t j_t})$, and $\mar\leq |\bar{U}_{i_t j_t}|$ we have $\mar\leq y_t\left(\tr{\mat{\XT}}-1\right)$. So when $y_t=1$ we have $\tr{\mat{\XT}}\geq 1+\mar$ and when $y_t=-1$ we have $\tr{\mat{\XT}}\leq 1-\mar$. We use these inequalities as follows.

First suppose that $y_t=1$. By Lemma~\ref{malS} we have:
\begin{align}
\rent{\mat}{\wem{t}}-\rent{\mat}{\wem{t+1}}&\geq\vvterm\tr{\mat\XT}+\left(1-e^\mar\right)\tr{\wem{t}\XT}\notag\\
&\geq \vvterm\left(1+\vvterm\right)+\left(1-e^{\vvterm}\right)\tr{\wem{t}\XT} \notag \\
\label{stw1}&\geq \vvterm\left(1+\vvterm\right)+\left(1-e^\mar\right)\\
&=(\vvterm+\vvterm^2)+1-e^\mar\notag\\
\label{num1}&\geq\co\vvterm^2,
\end{align}
where Equation~\eqref{num1} comes from Lemma~\ref{numl} and Equation~\eqref{stw1} comes from the fact that $\hat{y}^t=-1$ and hence, by the algorithm, $\tr{\wem{t}\XT}\leq 1$.

Now suppose that $y_t=-1$. By Lemma \ref{malS} we have:
\begin{align}
\rent{\mat}{\wem{t}}-\rent{\mat}{\wem{t+1}}&\geq-\vvterm\tr{\mat\XT}+\left(1-e^{-\vvterm}\right)\tr{\wem{t}\XT}\notag\\
&\geq -\vvterm\left(1-\vvterm\right)+\left(1-e^{-\vvterm}\right)\tr{\wem{t}\XT} \notag\\
\label{stw2}&\geq -\vvterm\left(1-\vvterm\right)+\left(1-e^{-\vvterm}\right)\\
&=-\vvterm+\vvterm^2+1-e^{-\vvterm}\notag\\
\label{num2}&\geq\co\vvterm^2,
\end{align}
where Equation~\eqref{num2} comes from Lemma~\ref{numl} and Equation~\eqref{stw2} comes from the fact that $\hat{y}^t=1$ and hence, by the algorithm, $\tr{\wem{t}\XT}\geq 1$.
\end{proof}

\begin{lemma}\label{fpar1}
We have,
\[\co\mar^2\nm \le\rent{\mat}{\wem{1}}\,. 
\]
\end{lemma}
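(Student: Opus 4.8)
The plan is a standard telescoping (amortized) argument on the quantum relative entropy potential $\rent{\mat}{\wem{t}}$. First I would record two facts about the conservative run of Algorithm~\ref{alg:base}: (i) with $[\mbox{\sc non-conservative}]=0$ the update is triggered exactly when $y_t \ybt < 0$, which (since $\Yrv=0$ and hence $\yht = \sign(\ybt)$) happens precisely on the mistake trials $t\in\nmset$; consequently $\wem{t+1}=\wem{t}$, and therefore $\rent{\mat}{\wem{t}}-\rent{\mat}{\wem{t+1}}=0$, for every $t\notin\nmset$; and (ii) each $\wem{t}$ is strictly positive definite (it starts at $\frac{\upD}{m+n}\id^{m+n}$ and each update is a matrix exponential), so all the relative entropies below are well defined.

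Next I would invoke Lemma~\ref{lem:detprogress}: for every $t\in\nmset$ (the conditions of the referenced Lemma~\ref{lem:base} being in force), $\rent{\mat}{\wem{t}}-\rent{\mat}{\wem{t+1}}\ge \co\mar^2$ with $\co=3-e$. Summing this over $t\in\nmset$ and then using fact (i) to extend the sum harmlessly to all $t\in[T]$,
\begin{equation*}
\co\mar^2\nm \;\le\; \sum_{t\in\nmset}\bigl(\rent{\mat}{\wem{t}}-\rent{\mat}{\wem{t+1}}\bigr) \;=\; \sum_{t=1}^{T}\bigl(\rent{\mat}{\wem{t}}-\rent{\mat}{\wem{t+1}}\bigr) \;=\; \rent{\mat}{\wem{1}}-\rent{\mat}{\wem{T+1}}\,.
\end{equation*}
Finally I would discard the last term using non-negativity of the quantum relative entropy, $\rent{\mat}{\wem{T+1}}\ge 0$ (this is Klein's inequality, i.e.\ the fact that $\rent{\cdot}{\cdot}$ is the Bregman divergence of the convex spectral function $\bX\mapsto\trc(\bX\log\bX-\bX)$), which yields $\co\mar^2\nm\le\rent{\mat}{\wem{1}}$ as claimed.

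There is no real obstacle here — the argument is pure bookkeeping once Lemma~\ref{lem:detprogress} is available. The only points that need a line of justification are the identification ``update trials $=\nmset$'' for conservative updates (so that the potential is constant off $\nmset$ and the telescoping sum collapses correctly) and the non-negativity of $\rent{\cdot}{\cdot}$; both are routine.
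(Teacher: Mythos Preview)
Your argument is correct and matches the paper's proof essentially line for line: telescope the potential $\rent{\mat}{\wem{t}}$ over $t\in[T]$, drop the non-mistake trials (where $\wem{t+1}=\wem{t}$ under conservative updates), apply Lemma~\ref{lem:detprogress} on each $t\in\nmset$, and use non-negativity of the quantum relative entropy to discard $\rent{\mat}{\wem{T+1}}$. The only cosmetic difference is the order of presentation (you sum over $\nmset$ first and then extend to $[T]$, whereas the paper starts from $[T]$ and restricts to $\nmset$).
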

\begin{proof}
Suppose that we have $T$ trials. Then we have:
\begin{align}
\rent{\mat}{\wem{1}}&\geq\rent{\mat}{\wem{1}}-\rent{\mat}{\wem{T+1}}\notag\\
&=\sum_{t\in[T]}\left(\rent{\mat}{\wem{t}}-\rent{\mat}{\wem{t+1}}\right)\notag\\
&=\sum_{t\in\nmset}\left(\rent{\mat}{\wem{t}}-\rent{\mat}{\wem{t+1}}\right)\label{eq:aa}\\
&\geq\sum_{t\in\nmset}\co\mar^2\label{eq:bb}\\
&=\co\mar^2\nm \notag,
\end{align}
where~\eqref{eq:bb} follows from~\eqref{eq:aa} using Lemma~\ref{lem:detprogress}.
\end{proof}
\begin{lemma}\label{fpar2}
Given that $\wem{1}=\scp\frac{\id}{m+n}$ we have 
\[
\rent{\mat}{\wem{1}}\leq\tr{\mat}\log(m+n) + \trc(\mat)\log\frac{\trc(\mat)}{\scp} +\scp-\trc(\mat)
\]
\end{lemma}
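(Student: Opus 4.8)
The plan is to simply unwind the definition of the quantum relative entropy (Definition~\ref{def:qrel}) at the point $\bB=\wem{1}=\frac{\scp}{m+n}\id$ and simplify the terms coming from the identity. First I would write
\[
\rent{\mat}{\wem{1}}=\trc\!\left(\mat\log\mat-\mat\log\!\left(\tfrac{\scp}{m+n}\id\right)+\tfrac{\scp}{m+n}\id-\mat\right).
\]
Since $\log\!\left(\tfrac{\scp}{m+n}\id\right)=\log\!\left(\tfrac{\scp}{m+n}\right)\id$ and $\trc(\id)=m+n$, the middle and third terms evaluate to $-\log\!\left(\tfrac{\scp}{m+n}\right)\trc(\mat)$ and $\scp$ respectively, giving the exact identity
\[
\rent{\mat}{\wem{1}}=\trc(\mat\log\mat)+\trc(\mat)\log\!\left(\tfrac{m+n}{\scp}\right)+\scp-\trc(\mat).
\]

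The only term that is not already in the desired form is $\trc(\mat\log\mat)$, and here I would invoke the bound
\[
\trc(\mat\log\mat)\le\trc(\mat)\log\trc(\mat),
\]
which is the $a=1$ case of the inequality $\trc(\mat\log(a\mat))\le\trc(\mat)\log(\trc(a\mat))$ already used in the proof of Lemma~\ref{theorem:bound_with_Ztsquared}. For completeness it follows by diagonalising $\mat=\sum_i\lambda_i\bu_i\bu_i^\trans$: since $\mat\succeq\bzero$ we have $0\le\lambda_i\le\trc(\mat)$ for every $i$, hence $\lambda_i\log\lambda_i\le\lambda_i\log\trc(\mat)$ (with the convention $0\log 0=0$), and summing over $i$ gives the claim.

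Substituting this bound and recombining logarithms via $\log\!\left(\tfrac{m+n}{\scp}\right)+\log\trc(\mat)=\log(m+n)+\log\tfrac{\trc(\mat)}{\scp}$ yields
\[
\rent{\mat}{\wem{1}}\le\trc(\mat)\log(m+n)+\trc(\mat)\log\tfrac{\trc(\mat)}{\scp}+\scp-\trc(\mat),
\]
which is exactly the stated bound. I expect no real obstacle here: the computation is routine, and the only point needing a moment's care is the spectral inequality for $\trc(\mat\log\mat)$, which reduces to an elementary scalar estimate applied eigenvalue by eigenvalue (using that $\mat$ is positive semidefinite, as guaranteed by Definition~\ref{def:imat}).
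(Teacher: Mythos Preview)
Your proposal is correct and follows essentially the same route as the paper: expand the quantum relative entropy at $\wem{1}=\frac{\scp}{m+n}\id$, simplify the identity terms, and then apply the spectral inequality $\trc(\mat\log\mat)\le\trc(\mat)\log\trc(\mat)$ via diagonalisation. The paper's proof is line-for-line the same computation, including the eigenvalue argument for the last step.
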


\begin{proof}
We have:
\begin{align}
\rent{\mat}{\wem{1}}=&\tr{\mat\log\mat}-\tr{\mat\log\wem{1}}+\tr{\wem{1}}-\trc(\mat)\notag\\
=&\tr{\mat\log\mat}-\tr{\mat\log\left(\frac{\scp}{m+n}\id\right)}+\tr{\frac{\scp}{m+n}\id}-\trc(\mat)\notag\\
=&\tr{\mat\log\mat}-\tr{\mat\log\left(\frac{\scp}{m+n}\id\right)}+\scp-\trc(\mat)\notag\\
=&\tr{\mat\log\mat}-\tr{\mat\left(\id\log\left(\frac{\scp}{m+n}\right)\right)}+\scp-\trc(\mat)\notag\\
=&\tr{\mat\log\mat}-\tr{\mat\log\left(\frac{\scp}{m+n}\right)}+\scp-\trc(\mat)\label{eq:prelog}\\
\leq&\tr{\mat\log(\trc(\mat))}-\tr{\mat\log\left(\frac{\scp}{m+n}\right)}+\scp-\trc(\mat)\label{eq:postlog}\\
=&\left(\log(\trc(\mat))-\log\left(\frac{\scp}{m+n}\right)\right)\trc(\mat)+\scp-\trc(\mat)\notag\\
=&\log\left(\frac{\trc(\mat)(m+n)}{\scp}\right)\trc(\mat) +\scp-\trc(\mat)\,,\notag
\end{align}
where~\eqref{eq:postlog} follows from~\eqref{eq:prelog}, since $\mat := \bV \bLam \bV^{-1}$ where $\bLam$ is a diagonal matrix of the eigenvalues of $\mat$.  This holds since, 
\begin{align*}
\trc(\mat\log\mat) & = \trc(\bV \bLam \bV^{-1}\bV\log\bLam \bV^{-1}) \\  
&= \trc(\bV \bLam \log\bLam \bV^{-1})  \\
& = \trc(\bLam \log\bLam) \\
& = \sum_{i=1}^{m+n} \lambda_i \log(\lambda_i) \\
& \le (\sum_{i=1}^{m+n} \lambda_i) \log(\sum_{i=1}^{m+n} \lambda_i) \\
& = \trc(\mat\log(\trc(\mat)))\,.
\end{align*}

\end{proof}

\begin{lemma}\label{lem:base}
The mistakes, $\nm$, of Algorithm~\ref{alg:base}  with the assumption that $y_t = \sign(\bar{U}_{i_t j_t})$ for all $t\in\nmset$ and with parameters
$\mar\le\mc(\U)^{-1}$, $1\le \scp$ and $\lr = \mar$ and {\bf conservative} updates, is bounded above by:
\begin{equation}\label{eq:baselap}
\nm \le 3.6 \frac{1}{\mar^2}\left(\pdim\left(\log(m+n) + \log\frac{\pdim}{\scp}\right)+\scp-\pdim\right)
\end{equation}

\end{lemma}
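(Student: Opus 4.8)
The plan is to read off Lemma~\ref{lem:base} by concatenating three facts already in hand. The hypotheses $\mar\le\mc(\U)^{-1}$, $1\le\scp$, $\lr=\mar$, {\sc conservative} updates and $y_t=\sign(\bar U_{i_tj_t})$ for $t\in\nmset$ are exactly the running assumptions under which Lemmas~\ref{malS}--\ref{fpar2} were established, so before invoking them I would only pause to note that the embedding $\mat$ of Definition~\ref{def:imat} is well defined in this regime: since $\mc(\U)\le 1/\marh$, there is $\bV\in\SP^1(\bU)$ with $\maxnorm{\bV}\le 1/\marh$, hence $\min_{\bV\in\SP^1(\bU)}\qD(\bV)$ is finite, $\Up=\marh\,\argmin_{\bV\in\SP^1(\bU)}\qD(\bV)$ admits a row-normalised factorisation $\Up=\nP\nQ^\trans$, and for that factorisation $\trc(\mat)=\pdim=\min_{\bV\in\SP^1(\bU)}\qD(\bV)$ by Lemma~\ref{lem:qexactS}. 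The initialisation $\wem1=\scp\,\id/(m+n)$ of Algorithm~\ref{alg:base} (with quasi-dimension estimate $\upD=\scp$) is precisely the hypothesis of Lemma~\ref{fpar2}.

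With this in place the argument is the short chain
\begin{align*}
(3-e)\,\mar^2\,\nm
&\;\le\; \rent{\mat}{\wem1}
\;\le\; \trc(\mat)\log(m+n)+\trc(\mat)\log\tfrac{\trc(\mat)}{\scp}+\scp-\trc(\mat)\\
&\;=\; \pdim\Bigl(\log(m+n)+\log\tfrac{\pdim}{\scp}\Bigr)+\scp-\pdim,
\end{align*}
where the first inequality is Lemma~\ref{fpar1}, the second is Lemma~\ref{fpar2}, and the last equality uses $\trc(\mat)=\pdim$ from Lemma~\ref{lem:qexactS}. Dividing through by $(3-e)\mar^2$ and bounding $1/(3-e)=3.549\ldots<3.6$ yields~\eqref{eq:baselap} verbatim.

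The point to stress is that the substantive work has already happened, so there is no real obstacle left at this step. Lemma~\ref{lem:detprogress} extracts the per-mistake ``progress'' $(3-e)\mar^2$ by pairing the margin bound $|\Up_{i_tj_t}|\ge\marh$ (via $\tr{\mat\XT}=1+\Up_{i_tj_t}$, Lemma~\ref{lem:stuttS}) with the fact that a conservative mistake forces $\tr{\wem t\XT}$ onto the wrong side of $1$, and then applies the scalar inequality of Lemma~\ref{numl}; Lemma~\ref{fpar1} telescopes this over the mistake trials using nonnegativity of the quantum relative entropy and the fact that conservative updates leave $\wem t$ unchanged off $\nmset$; and Lemma~\ref{fpar2} controls $\rent{\mat}{\wem1}$ using the initialisation together with the eigenvalue inequality $\sum_i\lambda_i\log\lambda_i\le\bigl(\sum_i\lambda_i\bigr)\log\bigl(\sum_i\lambda_i\bigr)$, valid because every eigenvalue of the PSD matrix $\mat$ is at most $\trc(\mat)$. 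Consequently this last step is pure assembly plus the numerical estimate $1/(3-e)<3.6$, and I would not re-derive any of the feeder lemmas here.
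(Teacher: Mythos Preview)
Your proposal is correct and follows exactly the same route as the paper: combine Lemma~\ref{fpar1} with Lemma~\ref{fpar2}, substitute $\trc(\mat)=\pdim$ via Lemma~\ref{lem:qexactS}, and bound $1/(3-e)<3.6$. The additional paragraph you include explaining why $\mat$ is well defined and summarising the content of the feeder lemmas is helpful context but not required for the argument itself.
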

\begin{proof}
Combining Lemmas~\ref{fpar1} and~\ref{fpar2} gives us
\[
\nm\leq\frac{1}{c}\frac{1}{\mar^2}\left(\tr{\mat}\log(m+n) + \trc(\mat)\log\frac{\trc(\mat)}{\scp} +\scp-\trc(\mat)
\right)
\]
Using Lemma \ref{lem:qexactS} and upper bounding $1/c$ by 3.6 then gives the result.
\end{proof}

The theorem statement for the realizable case then follows by setting $\scp \geq \pdim$. \hfill $\blacksquare$

\section{Proof of Theorem~\ref{thm:bndpdim}} 
We recall Theorem~\ref{thm:bndpdim} and then prove it.

\noindent
{\bf Theorem~\ref{thm:bndpdim}.}{\em \ \,
If  $\U\in\bikl$ define 
\begin{equation*}
\qDBF(\U) := 
\begin{cases}
 2 \trc( \bR^{\trans} \RN \bR)\RRN 
\!+ \! 2 \trc( \bC^{\trans} \CN \bC)\RCN\!+\! 2k\! +\! 2\ell \, & \RN \text{ and }\CN \text{ are PDLaplacians} \\
k\trace{\bR^{\trans} \bM \bR}\RRN+\ell\trace{\bC^{\trans} \bN \bC}\RCN
 & \RN\in\SPDM^m \text{ and }\CN\in\SPDM^n
\end{cases}.
\end{equation*}
as the minimum over all decompositions of $\U = \bR \Ustar \bC^{\trans}$ for
$\bR\in \BEM^{m,k}$, $\bC\in \BEM^{n,\ell}$ and $\Ustar \in \{-1,1\}^{k\times \ell}$.  
Thus for $\U\in\bikl$, 
\begin{align*}
\qD(\U) & \le \qDBF(\U)\quad\quad (\mbox{if }\maxnorm{\U} \le {1}/{\marh}) \\
\min_{\bV\in\SP^1(\U)} \qD(\bV) & \le \qDBF(\U) \quad\quad (\mbox{if }\mc(\U) \le {1}/{\marh})\,.
\end{align*}
}
\begin{proof}
A {\em $\marh$-decomposition} of matrix $\U$ is given by a $\nP \in \RNM^{m,d}$ and a $\nQ \in \RNM^{n,d}$ such that $\nP \nQ^{\trans} = \marh \U$.  
A {\em block-invariant decomposition} of matrix $\U\in\bikl$  is given by a $\nP \in \RNM^{m,d}$ and a $\nQ \in \RNM^{n,d}$ for some $d$ such that there exists a
$\delta\in (0,1]$, 
$\nPstar\in \RNM^{k,d}$, and a $\nQstar\in \RNM^{\ell,d}$,  so that 
$\nP = \R \nPstar$, $\nQ = \C \nQstar$ and $\nP \nQ^{\trans} = \delta \U$.  

We now prove the following intermediate result,
\begin{quote}
{\bf Lemma:} If $\U\in\bikl$, then for every $\marh\in (0,1/\maxnorm{\U})$, there exists a block-invariant $\marh$-decomposition of $\U$.
\begin{proof}
Since $\U\in\bikl$ we have that $\U = \bR \Ustar \bC^{\trans}$ for some
$\bR\in \BEM^{m,k}$, $\bC\in \BEM^{n,\ell}$ and $\Ustar \in \{-1,1\}^{k\times \ell}$.  Observe by block invariance we have that $\maxnorm{\U} = \maxnorm{\Ustar}$ and by the definition of $\maxnorm{\cdot}$ we have that there exists a $\left(\frac{1}{\maxnorm{\U}}\right)$-decomposition of $\Ustar$ via factors $\nPstar\in \RNM^{k,d}$, and a $\nQstar\in \RNM^{\ell,d}$, this implies that $\nP := \R \nPstar$, $\nQ := \C \nQstar$ is a
$\left(\frac{1}{\maxnorm{\U}}\right)$-block-invariant decomposition of $\U$.  

Now given any $\marh\in (0,1/\maxnorm{\U})$ we construct a $\marh$-block-invariant decomposition of $\U$.
Set $c:= \marh \maxnorm{\U}$. We construct new factor matrices $\nP'\in \RNM^{m,d+1}$ and $\nQ'\in \RNM^{n,d+1}$
\[\nP' :=
\begin{pmatrix}
c \nP & (\sqrt{1-c^2}) \bone 
\end{pmatrix};\quad\quad \nQ' :=
\begin{pmatrix}
\nQ & \bzero
\end{pmatrix}\,.
\]
Observe that $(\nP',\nQ')$ is the required $\marh$-block-invariant decomposition of $\U$.
\end{proof}
\end{quote}

Recall~\eqref{eq:defpdim},
\begin{equation}\label{eq:reca}
\qD(\bU) := \min_{\nP \nQ^\trans = {\marh}\U} \RRN\trc\left(\nP^{\trans}\RN\nP  \right) 
+
\RCN\trc\left(\nQ^{\trans}\CN\nQ \right)\,.
\end{equation}
Observe that when the feasible set of the optimization that defines $\qD(\bU)$ is non-empty and $\U\in\bikl$,  there exists a member of the feasible set which is a block-invariant decomposition of $\U$ by the lemma above. We proceed by proving an upper bound of 
\[
\RRN\trc\left(\nP^{\trans}\RN\nP  \right) 
+
\RCN\trc\left(\nQ^{\trans}\CN\nQ \right)
\]
for every block-invariant decomposition of $\U$.

First we will bound the term $\trc(\nP^{\trans}\RN\nP)$ for general positive definite matrices and  then for PDLaplacians. By symmetry, the bound will also hold for $\trc(\nQ^{\trans}\CN\nQ)$.

Suppose $(\nP,\nQ)$ is a block-invariant decomposition of $\U$. Then, we have
\begin{align*}
\trc(\nP^\trans \bM \nP) & = \trc( (\R\nPstar)^{\trans} \bM \R\nPstar ) = \trc(\nPstar(\nPstar)^\trans\R^\trans\bM \R ) \\ & \le  \trc(\nPstar(\nPstar)^\trans)\trc(\R^\trans\bM \R) = k \trc(\R^\trans\bM \R)\,,
\end{align*}
where the inequality comes from the fact that $\trc(\bA\bB) \le\lambda_{\text{max}}(\bA)\trc(\bB)\le\trc(\bA)\trc(\bB)$ for $\bA,\bB\in \SPSDM$.  By symmetry we have demonstrated the inequality for positive definite matrices.

We now consider PDLaplacians.  Assume $\RN := \bLb = \la+\left(\frac{\one}{\m}\right)\left(\frac{\one}{\m}\right)^\trans\RAD_{\bL}^{-1}$, a PDLaplacian.  Recall the following two elementary inequalities from the preliminaries: if
$\bu\in [-1,1]^m$, then 
\begin{align}
(\bu^{\trans} \bL \bu) \RAD_{\bL} & \le \onehalf (\bu^{\trans} \bLb \bu)  \RAD_{\bL^{\circ}}\,, \label{eq:aamj}
\\
(\bu^{\trans} \bLb \bu)  \RAD_{\bL^{\circ}} & \le 2 (\bu^{\trans} \bL \bu\, \RAD_{\bL} +  1)\,. \label{eq:bbmj}
\end{align}
Observe that for an $m \times m$ graph Laplacian $\bL$ with adjacency matrix $\bA$ that for $\bX\in\Re^{m\times d}$,
\begin{equation}\label{eq:lapquad}
\trc(\bX^{\trans} \bL \bX) = \sum_{(i,j) \in E} A_{ij} \norm{\bX_i - \bX_j}^2\,.
\end{equation}
Suppose $(\nP,\nQ)$ is a block-invariant decomposition of $\U$ then
the row vectors $\nP_1,\ldots,\nP_m$ come in at most $k$ distinct varieties, that is $|\bigcup_{i\in [m]} \nP_i| \le k$.  The same holds for $\bR$ and furthermore $(\nP_r = \nP_s) \Longleftrightarrow (\bR_r = \bR_s)$ for $r,s\in [m]$.  Observe that given $r,s\in [m]$ that if $\bR_r \ne \bR_s$ then $\norm{\bR_r -\bR_s}^2 =2$ and $\norm{\nP_r -\nP_s}^2 \le 4$ since they are coordinate and unit vectors respectively.  This then implies,
\begin{equation}\label{eq:beworld}
\trc\left(\nP^{\trans}\bL\nP  \right) \le 2\trc\left(\bR^{\trans}\bL\bR  \right)\,. 
\end{equation}
Thus we have
\begin{align*}
\trc(\nP^{\trans}\RN\nP) \RRN &\le 2 \trc(\nP^{\trans}\bL\nP)\RAD_{\bL} +2k \quad &\, \text{by}~\eqref{eq:bbmj}\,  \\
&\le 4 \trc\left(\bR^{\trans}\bL\bR  \right)\RAD_{\bL} +2k \quad &\, \text{by}~\eqref{eq:beworld}\,  \\
&\le 2 \trc\left(\bR^{\trans}\RN\bR  \right)\RRN+2k \quad &\, \text{by}~\eqref{eq:aamj}\, 
\end{align*}
By symmetry we have demonstrated the inequality for PDLaplacians. 
\end{proof}

\section{Proofs for Section \ref{sec:inductive}}\label{sec:append_inductive}

\subsection{Proof of Proposition \ref{eq:alg}}
\newcommand{\zed}[1]{\bs{v}(#1)}
\newcommand{\zet}[1]{\bs{\bar{v}}(#1)}
\newcommand{\XTf}{\tilde{\bs{X}}^{T+1}}
\newcommand{\ai}[1]{a_{#1}}
\newcommand{\len}{l}
\newcommand{\les}{q}
\newcommand{\expo}{q}
\newcommand{\cost}[1]{\sigma_{#1}}
\newcommand{\zbt}{\bar{z}_t}
\newcommand{\zbs}{\bar{z}_s}
\newcommand{\ybs}{\bar{y}_s}
\newcommand{\funk}[1]{f_s(#1)}

We now prove Proposition \ref{eq:alg}, that the transductive and inductive algorithms are equivalent.  Recall by assumption that  $\IRRN=\RRN$ and $\IRCN = \RCN$.

\subsubsection{Equivalence of Traces}

Suppose, in this subsection, that we have some given trial $t$. In this subsection we analyse the inductive algorithm. We make the following definitions:

\begin{definition}
For all $s\in\uset\cap[t]$
$$\zed{s}:= \con{\frac{(\sqrt{(\RN^t)^+})\be^{i_s}}{\sqrt{2\RRN}}}{\frac{(\sqrt{(\CN^t)^+})\be^{j_s}}{\sqrt{2\RCN}}}$$
$$\zet{s}:= \con{\frac{(\sqrt{(\RN^{T+1})^+})\be^{i_s}}{\sqrt{2\RRN}}}{\frac{(\sqrt{(\CN^{T+1})^+})\be^{j_s}}{\sqrt{2\RCN}}}$$
\end{definition}

Note that $\XT(s)=\zed{s}\zed{s}^{\trans}$ and $\XTf(s)=\zet{s}\zet{s}^{\trans}$ for $s\in\uset\cap[t]$.

\begin{lemma}\label{prodlem}
For all $\len\in\mathbb{N}$ and for all $\ai{1},\ai{2}, ..., \ai{l}\in\uset\cap[t-1]$ there exists some $\alpha\in\mathbb{R}$ such that:
$$\XT(\ai{1})\XT(\ai{2})\cdots\XT(\ai{\len})=\alpha\zed{\ai{1}}\zed{\ai{\len}}^{\trans}$$
and
$$\XTf(\ai{1})\XTf(\ai{2})\cdots\XTf(\ai{\len})=\alpha\zet{\ai{1}}\zet{\ai{\len}}^{\trans}$$
\end{lemma}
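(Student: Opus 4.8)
The plan is a short induction on the length $\len$ of the product, whose only non-routine ingredient is that the inner products of the vectors $\zed{\cdot}$ are independent of the trial index and coincide with those of the $\zet{\cdot}$.

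First I would record the telescoping identity for rank-one factors. Since $\XT(s)=\zed{s}\zed{s}^{\trans}$ and $\XTf(s)=\zet{s}\zet{s}^{\trans}$ for every $s\in\uset\cap[t-1]$, associativity and the fact that each $\zed{a_i}^{\trans}\zed{a_{i+1}}$ is a scalar give, by induction on $\len\ge 1$ (base case $\len=1$, empty product equal to $1$),
\[
\XT(a_1)\cdots\XT(a_\len)=\Big(\textstyle\prod_{i=1}^{\len-1}\zed{a_i}^{\trans}\zed{a_{i+1}}\Big)\,\zed{a_1}\zed{a_\len}^{\trans},
\]
and identically with every $\zed{\cdot}$ replaced by $\zet{\cdot}$. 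It therefore remains to set $\alpha:=\prod_{i=1}^{\len-1}\zed{a_i}^{\trans}\zed{a_{i+1}}$ and check that this equals $\prod_{i=1}^{\len-1}\zet{a_i}^{\trans}\zet{a_{i+1}}$, for which it suffices to prove $\zed{r}^{\trans}\zed{s}=\zet{r}^{\trans}\zet{s}$ for all $r,s\in\uset\cap[t-1]$.

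Next I would expand these inner products using the concatenation convention: since $\sqrt{(\RN^t)^+}$ is a symmetric PSD square root, $(\sqrt{(\RN^t)^+})^{\trans}\sqrt{(\RN^t)^+}=(\RN^t)^+$, and likewise for the column kernel, so
\[
\zed{r}^{\trans}\zed{s}=\frac{1}{2\RRN}(\be^{i_r})^{\trans}(\RN^t)^+\be^{i_s}+\frac{1}{2\RCN}(\be^{j_r})^{\trans}(\CN^t)^+\be^{j_s}.
\]
The step I expect to need the most care is the index bookkeeping here: $(\RN^t)^+=(\RNfunc(i_a,i_b))_{a,b\in\cI^t\cup\{i_t\}}$, and $\be^{i_r}$ must be read as the canonical basis vector selecting the position of $i_r$ inside $\cI^t\cup\{i_t\}$; because $r\in\uset\cap[t-1]$ the update at trial $r$ placed $i_r$ in $\cI^{r+1}\subseteq\cI^t$, so that position exists and the quadratic form evaluates to exactly $\RNfunc(i_r,i_s)$, with no dependence on $t$. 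The same reasoning applied to $\zet{r}^{\trans}\zet{s}$ uses $(\RN^{T+1})^+$ together with $i_r,i_s\in\cI^t\subseteq\cI^{T+1}$. Hence both inner products equal $\frac{1}{2\RRN}\RNfunc(i_r,i_s)+\frac{1}{2\RCN}\CNfunc(j_r,j_s)$.

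Combining, the single scalar $\alpha=\prod_{i=1}^{\len-1}\big(\frac{1}{2\RRN}\RNfunc(i_{a_i},i_{a_{i+1}})+\frac{1}{2\RCN}\CNfunc(j_{a_i},j_{a_{i+1}})\big)$ simultaneously witnesses both product identities, which proves the lemma. Everything apart from making the basis-vector/quadratic-form bookkeeping precise is purely algebraic, so that is the only genuine obstacle; once it is stated cleanly the induction closes immediately.
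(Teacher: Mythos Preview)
Your proof is correct and follows essentially the same approach as the paper: induction on $\len$ exploiting the rank-one structure $\XT(s)=\zed{s}\zed{s}^{\trans}$, with the key observation that the scalar $\zed{r}^{\trans}\zed{s}$ equals $\frac{1}{2\RRN}\RNfunc(i_r,i_s)+\frac{1}{2\RCN}\CNfunc(j_r,j_s)$ and hence coincides with $\zet{r}^{\trans}\zet{s}$. The only cosmetic difference is that the paper folds the inner-product computation into the inductive step itself rather than first writing out the closed form $\alpha=\prod_{i=1}^{\len-1}\zed{a_i}^{\trans}\zed{a_{i+1}}$ and then verifying equality of the factors; your explicit attention to the index bookkeeping (why $i_r\in\cI^t$) is a point the paper leaves implicit.
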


\begin{proof}
We prove by induction on $\len$. In the case $\len:=1$ the result is clear with $\alpha:=1$.

Now suppose the result holds with $\len:=\les$ for some $q\in\mathbb{N}$. We now show that it holds for $\len:= \les+1$. Since it holds for $\len:=\les$, choose $\alpha'$ such that $\XT(\ai{1})\XT(\ai{2})\cdots\XT(\ai{\les})=\alpha'\zed{\ai{1}}\zed{\ai{\les}}^\trans$ and $\XTf(\ai{1})\XTf(\ai{2})\cdots\XTf(\ai{\les})=\alpha'\zet{\ai{1}}\zet{\ai{\les}}^\trans$. Note that we now have:
\begin{align*}
\XT(\ai{1})\XT(\ai{2})\cdots\XT(\ai{\len})&=\XT(\ai{1})\XT(\ai{2})\cdots\XT(\ai{\les})\XT(\ai{\len})\\
&=\alpha'\zed{\ai{1}}\zed{\ai{\les}}^{\trans}\XT(\ai{\len})\\
&=\alpha'\zed{\ai{1}}\zed{\ai{\les}}^{\trans}\zed{\ai{\len}}\zed{\ai{\len}}^\trans\\
&=\left(\zed{\ai{\les}}^{\trans}\zed{\ai{\len}}\right)\alpha'\zed{\ai{1}}\zed{\ai{\len}}^\trans\\
&=\left(\frac{\RNfunc(i_{\ai{\les}},i_{\ai{\len}})}{2\RRN}+\frac{\CNfunc(j_{\ai{\les}},j_{\ai{\len}})}{2\RCN}\right)\alpha'\zed{\ai{1}}\zed{\ai{\len}}^{\trans}\\
\end{align*}
Similarly we have: $$\XTf(\ai{1})\XTf(\ai{2})\cdots\XTf(\ai{\len})=\left(\frac{\RNfunc(i_{\ai{\les}},i_{\ai{\len}})}{2\RRN}+\frac{\CNfunc(j_{\ai{\les}},j_{\ai{\len}})}{2\RCN}\right)\alpha'\zet{\ai{1}}\zet{\ai{\len}}^{\trans},\\$$
from which the result follows.
\end{proof}

\begin{lemma}\label{prodlem3}
For all $\len\in\mathbb{N}$ and for all $\ai{1},\ai{2}, ..., \ai{l}\in\uset\cap[t-1]$ we have:
$$\tr{\XT(t)\XT(\ai{1})\XT(\ai{2})\cdots\XT(\ai{\len})}=\tr{\XTf(t)\XTf(\ai{1})\XTf(\ai{2})\cdots\XTf(\ai{\len})}$$
\end{lemma}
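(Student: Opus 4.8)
The plan is to reduce the identity entirely to Lemma~\ref{prodlem}. First I would apply that lemma to the tail product $\XT(\ai{1})\cdots\XT(\ai{\len})$: it produces a scalar $\alpha\in\mathbb{R}$ with $\XT(\ai{1})\XT(\ai{2})\cdots\XT(\ai{\len})=\alpha\,\zed{\ai{1}}\zed{\ai{\len}}^{\trans}$ and $\XTf(\ai{1})\XTf(\ai{2})\cdots\XTf(\ai{\len})=\alpha\,\zet{\ai{1}}\zet{\ai{\len}}^{\trans}$ with the \emph{same} $\alpha$ in both lines; this shared-$\alpha$ property is the crux of the whole argument. Since $\XT(t)=\zed{t}\zed{t}^{\trans}$ and $\XTf(t)=\zet{t}\zet{t}^{\trans}$ (extending the definitions of $\zed{\cdot}$ and $\zet{\cdot}$ to the index $s=t$, which is legitimate because $i_t\in\cI^t\cup\{i_t\}$ and $j_t\in\cJ^t\cup\{j_t\}$), cyclicity of the trace gives
\[
\tr{\XT(t)\XT(\ai{1})\cdots\XT(\ai{\len})}=\alpha\,\tr{\zed{t}\zed{t}^{\trans}\zed{\ai{1}}\zed{\ai{\len}}^{\trans}}=\alpha\,(\zed{\ai{\len}}^{\trans}\zed{t})(\zed{t}^{\trans}\zed{\ai{1}}),
\]
and the analogous identity for the $\XTf$ product with $\zet{\cdot}$ replacing $\zed{\cdot}$.

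Next I would evaluate the two scalar inner products. Exactly as in the computation inside the proof of Lemma~\ref{prodlem}, one has $\zed{t}^{\trans}\zed{\ai{1}}=\frac{\RNfunc(i_t,i_{\ai{1}})}{2\RRN}+\frac{\CNfunc(j_t,j_{\ai{1}})}{2\RCN}$ and $\zed{\ai{\len}}^{\trans}\zed{t}=\frac{\RNfunc(i_{\ai{\len}},i_t)}{2\RRN}+\frac{\CNfunc(j_{\ai{\len}},j_t)}{2\RCN}$, because $(\RN^t)^+$ and $(\CN^t)^+$ are literally the kernel Gram matrices on the relevant indices, so that $(\be^{i_t})^{\trans}(\RN^t)^+\be^{i_{\ai{1}}}=\RNfunc(i_t,i_{\ai{1}})$ and similarly for the column block. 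The point is that these values depend only on evaluations of $\RNfunc$ and $\CNfunc$, never on which principal submatrix was used to build the embedding; hence $\zet{t}^{\trans}\zet{\ai{1}}$ and $\zet{\ai{\len}}^{\trans}\zet{t}$ take the very same values, and therefore both traces equal
\[
\alpha\left(\frac{\RNfunc(i_t,i_{\ai{1}})}{2\RRN}+\frac{\CNfunc(j_t,j_{\ai{1}})}{2\RCN}\right)\left(\frac{\RNfunc(i_{\ai{\len}},i_t)}{2\RRN}+\frac{\CNfunc(j_{\ai{\len}},j_t)}{2\RCN}\right),
\]
which proves the claim. If one admits $\len=0$, the empty product is the identity and the statement reduces to $\tr{\zed{t}\zed{t}^{\trans}}=\tr{\zet{t}\zet{t}^{\trans}}=\frac{\RNfunc(i_t,i_t)}{2\RRN}+\frac{\CNfunc(j_t,j_t)}{2\RCN}$, which is the same computation with $\ai{1}=\ai{\len}$ taken to be $t$.

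The only step that needs care — and where a reader might stumble — is the index bookkeeping: one must check that $\zed{t}$ and $\zet{t}$ are meaningful and that the coordinate vectors $\be^{i_s}$ in $\zed{s}$ and the vector $\be^{i_t}$ in $\zed{t}$ sit in the same ambient space $\mathbb{R}^{|\cI^t|+|\cJ^t|+2}$, so that the matrix products displayed above are well defined. This is immediate from Algorithm~\ref{alg:inductive}, since at trial $t$ every $i_s$ with $s\in\uset\cap[t-1]$, together with $i_t$, lies in $\cI^t\cup\{i_t\}$ (and symmetrically for the columns). No analytic or convergence input is needed: once Lemma~\ref{prodlem} is available, the whole statement is a short bilinear-algebra identity.
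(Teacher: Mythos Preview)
Your proposal is correct and follows essentially the same approach as the paper's own proof: invoke Lemma~\ref{prodlem} to obtain the shared scalar $\alpha$, write $\XT(t)=\zed{t}\zed{t}^{\trans}$, apply trace cyclicity to reduce to a product of two scalar inner products, and observe that these inner products depend only on evaluations of $\RNfunc$ and $\CNfunc$. Your additional remarks on the $\len=0$ case and on the index bookkeeping for $\zed{t}$ are welcome clarifications that the paper leaves implicit.
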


\begin{proof}
By Lemma \ref{prodlem}, let $\alpha$ be such that \[\XT(\ai{1})\XT(\ai{2})\cdots\XT(\ai{\len})=\alpha\zed{\ai{1}}\zed{\ai{\len}}^{\trans}\] and \[\XTf(\ai{1})\XTf(\ai{2})\cdots\XTf(\ai{\len})=\alpha\zet{\ai{1}}\zet{\ai{\len}}^{\trans}.\] Note that:
\begin{align*}
&\tr{\XT(t)\XT(\ai{1})\XT(\ai{2})\cdots\XT(\ai{\len})}\\
&=\alpha\tr{\XT(t)\zed{\ai{1}}\zed{\ai{\len}}^{\trans}}\\
&=\alpha\tr{\zed{t}\zed{t}^{\trans}\zed{\ai{1}}\zed{\ai{\len}}^{\trans}}\\
&=\alpha\tr{\zed{\ai{\len}}^{\trans}\zed{t}\zed{t}^{\trans}\zed{\ai{1}}}\\
&=\alpha\left(\zed{\ai{\len}}^{\trans}\zed{t}\right)\left(\zed{t}^{\trans}\zed{\ai{1}}\right)\\
&=\alpha\left(\frac{\RNfunc(i_{\ai{\len}},i_{t})}{2\RRN}+\frac{\CNfunc(j_{\ai{\len}},j_{t})}{2\RCN}\right)\left(\frac{\RNfunc(i_{t},i_{\ai{1}})}{2\RRN}+\frac{\CNfunc(j_{t},j_{\ai{1}})}{2\RCN}\right)
\end{align*}
Similarly we have: 
\begin{align}
&\tr{\XTf(t)\XTf(\ai{1})\XTf(\ai{2})\cdots\XTf(\ai{\len})}\\
=&\alpha\left(\frac{\RNfunc(i_{\ai{\len}},i_{t})}{2\RRN}+\frac{\CNfunc(j_{\ai{\len}},j_{t})}{2\RCN}\right)\left(\frac{\RNfunc(i_{t},i_{\ai{1}})}{2\RRN}+\frac{\CNfunc(j_{t},j_{\ai{1}})}{2\RCN}\right)
\end{align} The result follows.
\end{proof}

\begin{lemma}\label{prodlem2}
For any $\expo\in\mathbb{N}$, any $\kappa\in\mathbb{R}^+$ and any $b_1, b_2,\cdots b_{t-1}\in\mathbb{R}$ we have:
$$\tr{\XT(t)\left(\sum_{s\in\uset\cap[t-1]}b_s\XT(s)\right)^{\expo}}=\tr{\XTf(t)\left(\sum_{s\in\uset\cap[t-1]}b_s\XTf(s)\right)^{\expo}}$$
\end{lemma}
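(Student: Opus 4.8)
The plan is to reduce the identity to Lemma~\ref{prodlem3} by expanding the $q$-th power via multilinearity. First I would use the distributive law for matrix multiplication to write
\[
\left(\sum_{s\in\uset\cap[t-1]}b_s\XT(s)\right)^{q}
= \sum_{(a_1,\dots,a_q)\in(\uset\cap[t-1])^q} \left(\prod_{r=1}^{q} b_{a_r}\right)\XT(a_1)\XT(a_2)\cdots\XT(a_q)\,,
\]
and the completely analogous expansion with $\XTf$ in place of $\XT$ (the matrices do not commute, so the sum is genuinely over ordered tuples, but the expansion is valid for either family). Multiplying on the left by $\XT(t)$ (resp.\ $\XTf(t)$), applying the trace, and using linearity of the trace, each side becomes a finite sum, over the \emph{same} index set $(\uset\cap[t-1])^q$, of $\left(\prod_{r=1}^q b_{a_r}\right)$ times $\tr{\XT(t)\XT(a_1)\cdots\XT(a_q)}$ (resp.\ $\tr{\XTf(t)\XTf(a_1)\cdots\XTf(a_q)}$).

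Then I would invoke Lemma~\ref{prodlem3} with $\len := q$ applied to the tuple $(a_1,\dots,a_q)$: since every $a_r$ lies in $\uset\cap[t-1]$, the hypotheses of that lemma are met verbatim, giving
\[
\tr{\XT(t)\XT(a_1)\cdots\XT(a_q)} = \tr{\XTf(t)\XTf(a_1)\cdots\XTf(a_q)}
\]
for each tuple. Hence the two sums agree coefficient-by-coefficient in the $b_{a_r}$, and the claimed equality follows immediately.

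There is essentially no obstacle here: this is a routine multilinear expansion, and the only points to note are that the summation range $\uset\cap[t-1]$ coincides exactly with the index set required by Lemma~\ref{prodlem3}, that $q\ge 1$ under the convention for $\mathbb{N}$ used in these lemmas (cf.\ the base case $\len:=1$ in Lemma~\ref{prodlem}), and that the parameter $\kappa$ is vestigial and plays no role. I would emphasize, though, that this lemma is the key bridge toward equating $\tr{\wem{t}\XT(t)}$ in the two algorithms: $\log\wem{t}$ is an affine combination of the matrices $\XT(s)$ (resp.\ $\XTf(s)$), and expanding $\exp$ of such a combination through its power series reduces exactly to traces of the form handled above, which is where Proposition~\ref{eq:alg} ultimately draws on this result.
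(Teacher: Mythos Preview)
Your proposal is correct and follows essentially the same route as the paper's own proof: expand the $q$-th power by multilinearity into a sum over ordered tuples in $(\uset\cap[t-1])^q$, pull the trace through by linearity, and then match the two sides term-by-term via Lemma~\ref{prodlem3}. Your observation that the parameter $\kappa$ is vestigial in the statement is also accurate.
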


\begin{proof}
We have:
\begin{align*}
&\tr{\XT(t)\left(\sum_{s\in\uset\cap[t-1]}b_s\XT(s)\right)^{\expo}}\\
&=\tr{\XT(t)\sum_{\ai{1}\in\uset\cap[t-1]}\sum_{\ai{2}\in\uset\cap[t-1]}\cdots\sum_{\ai{\expo}\in\uset\cap[t-1]}\left(\prod_{i=1}^{\expo}b_{\ai{i}}\right)\XT(\ai{1})\XT(\ai{2})\cdots\XT(\ai{\expo})}\\
&=\sum_{\ai{1}\in\uset\cap[t-1]}\sum_{\ai{2}\in\uset\cap[t-1]}\cdots\sum_{\ai{\expo}\in\uset\cap[t-1]}\left(\prod_{i=1}^{\expo}b_{\ai{i}}\right)\tr{\XT(t)\XT(\ai{1})\XT(\ai{2})\cdots\XT(\ai{\expo})}\\
\end{align*}
and similarly,
\begin{align*}
&\tr{\XTf(t)\left(\sum_{s=1}^{t-1}b_s\XTf(s)\right)^{\expo}}\\
&=\tr{\XTf(t)\sum_{\ai{1}\in\uset\cap[t-1]}\sum_{\ai{2}\in\uset\cap[t-1]}\cdots\sum_{\ai{\expo}\in\uset\cap[t-1]}\left(\prod_{i=1}^{\expo}b_{\ai{i}}\right)\XTf(\ai{1})\XTf(\ai{2})\cdots\XTf(\ai{\expo})}\\
&=\sum_{\ai{1}\in\uset\cap[t-1]}\sum_{\ai{2}\in\uset\cap[t-1]}\cdots\sum_{\ai{\expo}\in\uset\cap[t-1]}\left(\prod_{i=1}^{\expo}b_{\ai{i}}\right)\tr{\XTf(t)\XTf(\ai{1})\XTf(\ai{2})\cdots\XTf(\ai{\expo})}.\\
\end{align*}
The result follows by Lemma \ref{prodlem3}.
\end{proof}

\begin{lemma}\label{prodlemfin}
For any $\kappa\in\mathbb{R}^+$ and any $b_1, b_2,\cdots b_{t-1}\in\mathbb{R}$ we have:
$$\tr{\XT(t)\exp\left(\kappa\bs{I}+\sum_{s\in\uset\cap[t-1]}b_s\XT(s)\right)}=\tr{\XTf(t)\exp\left(\kappa\bs{I}+\sum_{s\in\uset\cap[t-1]}b_s\XTf(s)\right)}$$
\end{lemma}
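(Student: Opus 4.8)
The plan is to strip off the $\kappa\bs{I}$ term, expand the two matrix exponentials as power series, and match them term by term using the polynomial identity already proved in Lemma~\ref{prodlem2}. First I would observe that $\kappa\bs{I}$ commutes with $\sum_{s\in\uset\cap[t-1]}b_s\XT(s)$ (and, on the other side, with $\sum_{s\in\uset\cap[t-1]}b_s\XTf(s)$), so that
\[
\exp\left(\kappa\bs{I}+\sum_{s\in\uset\cap[t-1]}b_s\XT(s)\right)=e^{\kappa}\exp\left(\sum_{s\in\uset\cap[t-1]}b_s\XT(s)\right),
\]
and likewise with every $\XT$ replaced by $\XTf$. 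Pulling the scalar $e^{\kappa}$ out of the trace on both sides, it suffices to prove
\[
\tr{\XT(t)\exp\left(\sum_{s\in\uset\cap[t-1]}b_s\XT(s)\right)}=\tr{\XTf(t)\exp\left(\sum_{s\in\uset\cap[t-1]}b_s\XTf(s)\right)}.
\]

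Next I would expand $\exp(\bs{M})=\sum_{\expo=0}^{\infty}\frac{1}{\expo!}\bs{M}^{\expo}$. The matrices here live in a fixed finite dimension, so this series converges absolutely in (say) the operator norm, and since the trace is a bounded linear functional it may be taken inside the sum, giving
\[
\tr{\XT(t)\exp\left(\sum_{s\in\uset\cap[t-1]}b_s\XT(s)\right)}=\sum_{\expo=0}^{\infty}\frac{1}{\expo!}\,\tr{\XT(t)\left(\sum_{s\in\uset\cap[t-1]}b_s\XT(s)\right)^{\expo}},
\]
and the analogous identity with $\XT$ replaced by $\XTf$. For every $\expo\ge1$ the $\expo$-th summands on the two sides agree by Lemma~\ref{prodlem2}. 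For $\expo=0$ the summand is $\tr{\XT(t)\bs{I}}=\tr{\XT(t)}$, and since $\XT(t)=\zed{t}\zed{t}^{\trans}$ we get $\tr{\XT(t)}=\ini{\zed{t}}^{2}=\tfrac{\RNfunc(i_t,i_t)}{2\RRN}+\tfrac{\CNfunc(j_t,j_t)}{2\RCN}$; the identical computation gives the same value for $\tr{\XTf(t)}$, so the $\expo=0$ terms coincide as well. Summing over $\expo$ and restoring the factor $e^{\kappa}$ then yields the claimed identity.

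There is no genuine obstacle in this lemma; it is purely a matter of reducing the exponential identity to the finite-product identity already in hand. The only two points that merit an explicit word are (i) justifying the interchange of the trace with the infinite series, which is immediate from absolute convergence in a fixed finite dimension, and (ii) handling the $\expo=0$ term separately, since Lemma~\ref{prodlem2} (resting on Lemma~\ref{prodlem}) is phrased for products of at least one matrix; that term is settled by the trivial observation that $\tr{\XT(t)}$ and $\tr{\XTf(t)}$ both depend only on the diagonal kernel values $\RNfunc(i_t,i_t)$ and $\CNfunc(j_t,j_t)$. Everything else is a direct quotation of Lemma~\ref{prodlem2}.
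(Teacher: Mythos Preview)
Your proposal is correct and follows exactly the paper's argument: factor out $e^{\kappa}$ via commutation of $\kappa\bs{I}$ with everything, expand the exponential as a power series, and invoke Lemma~\ref{prodlem2} termwise. You are in fact slightly more careful than the paper, which simply writes ``The result then follows from Lemma~\ref{prodlem2}'' without isolating the $\expo=0$ term or commenting on the interchange of trace and sum.
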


\begin{proof}
Using the fact that $\exp\left(\bA + \bB\right) = \exp\left(\bA\right)\exp\left(\bB\right)$ for commuting matrices $\bA$ and $\bB$, and noting that the multiple of the identity matrix commutes with any matrix, we have that
\[ \tr{\XT(t)\exp\left(\kappa\bs{I}+\sum_{s\in\uset\cap[t-1]}b_s\XT(s)\right)} = \tr{\XT(t)\exp\left(\kappa\bs{I}\right) \exp\left(\sum_{s\in\uset\cap[t-1]}b_s\XT(s)\right)}.\]
By the Taylors series expansion we have:
\begin{align*}
\tr{\XT(t)\exp\left(\kappa\bs{I}\right) \exp\left(\sum_{s\in\uset\cap[t-1]}b_s\XT(s)\right)}&=e^{\kappa} \tr{\XT(t)\sum_{\expo=0}^{\infty}\frac{1}{\expo !}\left(\sum_{s\in\uset\cap[t-1]}b_s\XT(s)\right)^{\expo}}\\
&=e^{\kappa} \sum_{\expo=0}^{\infty}\frac{1}{\expo !}\tr{\XT(t)\left(\sum_{s\in\uset\cap[t-1]}b_s\XT(s)\right)^{\expo}}
\end{align*}
Similarly, we have \[\tr{\XTf(t)\exp\left(\kappa\bs{I}+\sum_{s\in\uset\cap[t-1]}b_s\XTf(s)\right)}=e^{\kappa} \sum_{\expo=0}^{\infty}\frac{1}{\expo !}\tr{\XTf(t)\left(\sum_{s\in\uset\cap[t-1]}b_s\XTf(s)\right)^{\expo}}.\]
The result then follows from Lemma \ref{prodlem2}.
\end{proof}

\subsubsection{Equivalence of Algorithms}

On a trial $t$ let $\zbt$ be the prediction ($\ybt$) of the inductive algorithm and let $\ybt$ remain the prediction of the transductive algorithm. We fix $\kappa:=\log\left(\scp/(m+ n)\right)$.

\begin{lemma}\label{expred}
On a trial $t$ the prediction, $\ybt$, of the transductive algorithm is given by:
$$\ybt=\tr{\XTf(t)\exp\left(\kappa\bs{I}+\sum_{s=1}^{t-1}\funk{\ybs}\XTf(s)\right)}$$
and the prediction, $\zbt$, of the inductive algorithm is given by:
$$\zbt=\tr{\XT(t)\exp\left(\kappa\bs{I}+\sum_{s=1}^{t-1}\funk{\zbs}\XT(s)\right)}$$
where $\funk{x}:=\lr{y_s}$ if $y_sx \leq [\mbox{\sc non-conservative}] \times \marh $ and $\funk{x}:=0$ otherwise.
\end{lemma}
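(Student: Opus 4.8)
The plan is to unroll the multiplicative weight update of each algorithm in the logarithmic domain, where it becomes purely additive, so that $\wem{t}$ can be written as a single matrix exponential; no appeal to Golden--Thompson is needed here (that enters only in the regret and mistake analyses). For the transductive algorithm, the initialisation $\wem{1}=\frac{\scp}{m+n}\id^{m+n}$ gives $\log(\wem{1})=\kappa\,\id$ with $\kappa=\log(\scp/(m+n))$, and on trial $s$ Algorithm~\ref{alg:base} either replaces $\log(\wem{s})$ by $\log(\wem{s})+\lr y_s\bm{\tilde{X}}^{s}$ (exactly when $y_s\ybs<[\mbox{\sc non-conservative}]\times\marh$) or leaves it unchanged; since $\log$ and $\exp$ are mutually inverse on the positive definite cone, both cases are captured uniformly by adding $\funk{\ybs}\bm{\tilde{X}}^{s}$, which equals $\lr y_s\bm{\tilde{X}}^{s}$ on an update and $\bzero$ otherwise. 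A one-line induction on the trial index then gives $\log(\wem{t})=\kappa\,\id+\sum_{s=1}^{t-1}\funk{\ybs}\bm{\tilde{X}}^{s}$, and substituting this into the prediction rule of Algorithm~\ref{alg:base}, together with cyclicity of the trace, yields the stated formula for $\ybt$.

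For the inductive algorithm the formula is even more immediate: Algorithm~\ref{alg:inductive} builds $\log(\wem{t})$ directly as $\kappa\,\id^{|\cI^t|+|\cJ^t|+2}+\sum_{s\in\uset}\lr y_s\XT(s)$, where the identity dimension is fixed for the whole of trial $t$. So the only thing left to observe is that after $t-1$ trials $\uset=\{\,s\le t-1:y_s\zbs<[\mbox{\sc non-conservative}]\times\marh\,\}$, i.e.\ the set of trials that triggered an update, whence $\lr y_s\,[\,s\in\uset\,]=\funk{\zbs}$ for every $s\le t-1$; rewriting the sum over $\uset$ as a sum over all $s\le t-1$ with coefficients $\funk{\zbs}$ and substituting into the inductive prediction step gives the claimed expression for $\zbt$.

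The step I expect to carry the weight is not the unrolling itself but matching the transductive formula to $\XTf$ rather than to its own embedding $\bm{\tilde{X}}^{s}$: under the set-up of Proposition~\ref{eq:alg} the transductive algorithm uses $\sqrt{\RN^{+}}=\sqrt{(\RNfunc(i',i''))_{i',i''\in[m]}}$ (using $(\bK^{+})^{+}=\bK$ for the positive definite full kernel matrix), whereas $\XTf(s)=\zet{s}\zet{s}^{\trans}$ is built from the principal submatrix indexed by $\cI^{T+1}$; both nevertheless yield the same pairwise inner products $\frac{\RNfunc(i_r,i_s)}{2\RRN}+\frac{\CNfunc(j_r,j_s)}{2\RCN}$ (a principal submatrix of a Gram matrix is again a Gram matrix), so after Taylor-expanding $\exp$ every trace of a product of these rank-one matrices depends only on those inner products and hence agrees — this is precisely the rank-one-collapse computation performed in Lemmas~\ref{prodlem}--\ref{prodlemfin}, which will be combined with the present lemma to prove the proposition. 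Beyond this, the remaining care is purely clerical: that $\funk{\cdot}$ records the update test including the boundary case $y_s\ybs=[\mbox{\sc non-conservative}]\times\marh$ (handled by the subgradient convention $\bH^t=\bzero$ already fixed above), and that all matrices entering $\log(\wem{t})$ are conformable — automatic in dimension $m+n$ for the transductive algorithm, and valid in the inductive case because $\cI^t,\cJ^t$ do not change within trial $t$.
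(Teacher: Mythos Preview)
Your argument is correct and is exactly what the paper compresses into ``direct from algorithms'': unroll the additive log-domain update so that $\log(\wem{t})=\kappa\id+\sum_{s<t}\funk{\cdot}\,\bm{\tilde X}^{s}$, then substitute into the prediction step. You are in fact more careful than the paper on one point it silently skips --- the transductive algorithm works in the full $(m{+}n)$-dimensional embedding, not in the $\XTf$-embedding over $\cI^{T+1}$ --- and your remedy (same pairwise inner products, hence identical power-series traces, exactly the mechanism of Lemmas~\ref{prodlem}--\ref{prodlemfin}) is the right one; the only quibble is that the boundary case $y_s\ybs=[\mbox{\sc non-conservative}]\times\marh$ is not handled by the subgradient convention $\bH^t=\bzero$ (that convention lives in the regret analysis, not in the algorithm's strict-inequality update test) --- the mismatch between $\le$ in the lemma's $\funk{\cdot}$ and $<$ in the algorithm is simply a harmless typo in the paper.
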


\begin{proof}
Direct from algorithms, noting that if $s\notin\uset\cap[t-1]$ then $\funk{\zbs}=0$.
\end{proof}

\begin{lemma}\label{yeqz}
Given a trial $t$, if $\ybs=\zbs$ for all $s<t$, then $\ybt=\zbt$.
\end{lemma}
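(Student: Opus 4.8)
The plan is to combine the closed forms for the two predictions supplied by Lemma~\ref{expred} with the trace identity of Lemma~\ref{prodlemfin}. First I would invoke Lemma~\ref{expred} to write
\[
\ybt=\tr{\XTf(t)\exp\left(\kappa\bs{I}+\sum_{s=1}^{t-1}\funk{\ybs}\XTf(s)\right)},\qquad
\zbt=\tr{\XT(t)\exp\left(\kappa\bs{I}+\sum_{s=1}^{t-1}\funk{\zbs}\XT(s)\right)}.
\]
The hypothesis $\ybs=\zbs$ for all $s<t$ immediately gives $\funk{\ybs}=\funk{\zbs}$ for all $s<t$, since $\funk{\cdot}$ depends on its argument only through the sign test in its definition. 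Moreover, as noted in the proof of Lemma~\ref{expred}, $\funk{\zbs}=0$ whenever $s\notin\uset\cap[t-1]$, so both sums above in fact range only over $s\in\uset\cap[t-1]$.

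Next I would set $b_s:=\funk{\ybs}=\funk{\zbs}$ for each $s\in\uset\cap[t-1]$ and apply Lemma~\ref{prodlemfin} with these coefficients and with $\kappa=\log(\scp/(m+n))$, obtaining
\[
\tr{\XT(t)\exp\left(\kappa\bs{I}+\sum_{s\in\uset\cap[t-1]}b_s\XT(s)\right)}
=\tr{\XTf(t)\exp\left(\kappa\bs{I}+\sum_{s\in\uset\cap[t-1]}b_s\XTf(s)\right)}.
\]
By the previous paragraph the right-hand side equals $\ybt$ and the left-hand side equals $\zbt$, hence $\ybt=\zbt$, which is the claim.

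I do not expect any real obstacle here: all the analytic content — the term-by-term matching of traces via the Taylor expansion of the matrix exponential, resting in turn on Lemmas~\ref{prodlem}, \ref{prodlem3} and~\ref{prodlem2} — has already been established. The only point requiring a little care is the bookkeeping that Lemma~\ref{prodlemfin} sums over $\uset\cap[t-1]$ while Lemma~\ref{expred} writes the sum over all of $\{1,\dots,t-1\}$; this mismatch is harmless precisely because the extra summands carry zero coefficient. With that observed, the lemma reduces to a one-line substitution.
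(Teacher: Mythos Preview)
Your proposal is correct and follows exactly the paper's approach: invoke Lemma~\ref{expred}, use the hypothesis to equate the coefficients $b_s:=\funk{\ybs}=\funk{\zbs}$, observe that the extra summands outside $\uset\cap[t-1]$ vanish, and then apply Lemma~\ref{prodlemfin}. The paper's own proof is the same one-line substitution, stated more tersely.
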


\begin{proof}
Direct from Lemmas \ref{expred} and \ref{prodlemfin} (with $b_s:=\funk{\ybt}=\funk{\zbt}$), noting that if $s\notin\uset\cap[t-1]$ then $\funk{\zbs}=0$.
\end{proof}

Proposition \ref{eq:alg} follows by induction over Lemma \ref{yeqz}.\hfill $\blacksquare$

\subsection{Proof of Proposition~\ref{lem:min_kernel_norm}}
In the following, we define $\cK_{\bxx}(\cdot):=\cK(\bxx,\cdot)$. 
 If $r\geq2$, $\delta^*:=\min \left( 2,  \frac{1}{4}\delta(S_1,\ldots, S_k) \right)$. This implies that $\delta^*\leq\min \left( 2,  \frac{r-1}{2r}\delta(S_1,\ldots, S_k) \right)$.
Recall that  $s(\bxx):=\frac{r-1}{2r} \bxx+ \frac{r+1}{2}\bone $.
Then observe that, given that the transformation $\bxt_i =s(\bxx_i)$ holds true for all $i \in [m]$, requiring $S_1, \ldots, S_k \subset [-r,r]^d$ with $\bxx_1,\ldots,\bxx_m\in\cup_{i=1}^k S_i$ and $\delta^* \leq \min \left(2,  \frac{r-1}{2r}\delta(S_1,\ldots, S_k) \right)$ is equivalent to the requirement that $\St_1, \ldots, \St_k \subset [1,r]^d$ with $\bxt_1,\ldots,\bxt_m\in\cup_{i=1}^k \St_i$ and $\delta^* \leq \min \left(2, \delta(\St_1,\ldots, \St_k) \right)$. Furthermore, for all $i\in[m]$ and $j\in[k]$, we have that $\bxx_i\in S_j$ if and only if $\bxt_i\in\St_j$. We shall proceed with the latter set of requirements for simplicity.
Recall that the RKHS for the $d=1$ min kernel $\mathcal{H}^1_{\cK}$  is the set of all absolutely continuous functions from $[0,\infty)^d \rightarrow \mathbb{R}$ that satisfy $f(0)= 0 $ and $\int_0^\infty [f'(x)]^2 \mathrm{d}x < \infty$. 

\begin{lemma}
The inner product for $f \in \mathcal{H}^1_{\cK}$ may be computed by,
\[ \langle f, g \rangle = \int_0^\infty f'(x) g'(x) \mathrm{d}x\,.\]
\end{lemma}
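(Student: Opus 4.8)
The plan is to identify $\mathcal{H}^1_{\cK}$ with an explicitly constructed Hilbert space via the uniqueness part of the Moore--Aronszajn theorem: a positive-definite kernel determines its RKHS --- as a set of functions together with its inner product --- uniquely, so it suffices to exhibit a Hilbert space of functions on $[0,\infty)$ whose reproducing kernel is $\cK(x,t)=\min(x,t)$ and whose inner product is $\int_0^\infty f'(x)g'(x)\,\mathrm{d}x$.

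First I would let $V$ denote the space of absolutely continuous $f:[0,\infty)\to\Re$ with $f(0)=0$ and $\int_0^\infty[f'(x)]^2\,\mathrm{d}x<\infty$, equipped with $\langle f,g\rangle_V:=\int_0^\infty f'(x)g'(x)\,\mathrm{d}x$; this is well defined and finite by Cauchy--Schwarz. Bilinearity and symmetry are immediate, and positive-definiteness holds because $\langle f,f\rangle_V=0$ forces $f'=0$ almost everywhere, whence $f(x)=f(0)+\int_0^x f'=0$ by absolute continuity. Next I would check that $V$ is complete with continuous point evaluations: for $x\ge0$, $|f(x)|=\bigl|\int_0^x f'\bigr|\le\sqrt{x}\,\|f\|_V$, so evaluation at $x$ is a bounded linear functional; and if $(f_n)$ is Cauchy in $V$ then $(f_n')$ is Cauchy in $L^2([0,\infty))$, converging to some $g\in L^2$, and setting $f(x):=\int_0^x g$ yields an element of $V$ with $\|f_n-f\|_V=\|f_n'-g\|_{L^2}\to0$. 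Hence $V$ is an RKHS.

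Then I would identify its reproducing kernel. For fixed $t\ge0$ the function $x\mapsto\min(x,t)$ lies in $V$ (it is absolutely continuous, vanishes at $0$, and has derivative $[x<t]$, which is in $L^2$), and for every $f\in V$,
\[
\langle f,\min(\cdot,t)\rangle_V=\int_0^\infty f'(x)\,[x<t]\,\mathrm{d}x=\int_0^t f'(x)\,\mathrm{d}x=f(t)-f(0)=f(t),
\]
by the fundamental theorem of calculus for absolutely continuous functions. Thus $\min(\cdot,t)$ is the representer of evaluation at $t$, so the reproducing kernel of $V$ is exactly $\cK$; by uniqueness of the RKHS associated with a positive-definite kernel, $V=\mathcal{H}^1_{\cK}$ and the inner product is $\langle f,g\rangle=\int_0^\infty f'(x)g'(x)\,\mathrm{d}x$.

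The main obstacle is merely the measure-theoretic bookkeeping --- invoking the fundamental theorem of calculus for absolutely continuous functions, and using $f'\in L^2$ together with Cauchy--Schwarz to see the displayed integrals are finite --- and then appealing to Moore--Aronszajn uniqueness so that the concretely built space $V$ may be identified with the abstractly-specified $\mathcal{H}^1_{\cK}$; there is no genuinely hard step.
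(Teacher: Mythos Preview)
Your proof is correct and shares the same core computation as the paper's: differentiate $\min(x,t)$ with respect to $t$ to get the indicator $[t<x]$, then use $\int_0^x f'(t)\,\mathrm{d}t=f(x)-f(0)=f(x)$ to verify the reproducing property. The paper stops there, whereas you additionally check that the candidate space is a genuine Hilbert space (positive-definiteness, completeness, bounded point evaluations) and invoke Moore--Aronszajn uniqueness to pin down the identification; this extra scaffolding makes your argument more self-contained, but the essential idea is the same.
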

\begin{proof}
We show this by the reproducing property:
\[ \langle f, \cK_{x} \rangle= f(x). \]
Defining $\bm{1}_x(t)$ as the step function that evaluates to 1 for $t\leq x$ and $0$ otherwise, we note that the derivative of $\min(x,t)$ with respect to $t$ is equal to $\bm{1}_{x}(t)$. This gives rise to
\[ \int_0^\infty f'(t) \cK'(x,t) \mathrm{d}t = \int_0^\infty f'(t) \bm{1}_{x}(t)   \mathrm{d}t= \int_0^{x} f'(t)  \mathrm{d}t = f(x)\,. \]
Using the condition of $f(0) = 0$, we then obtain the reproducing property.
\end{proof}

\begin{figure}
\centering
\includegraphics[width=1.2\figurewidth, height=1.2\figureheight]{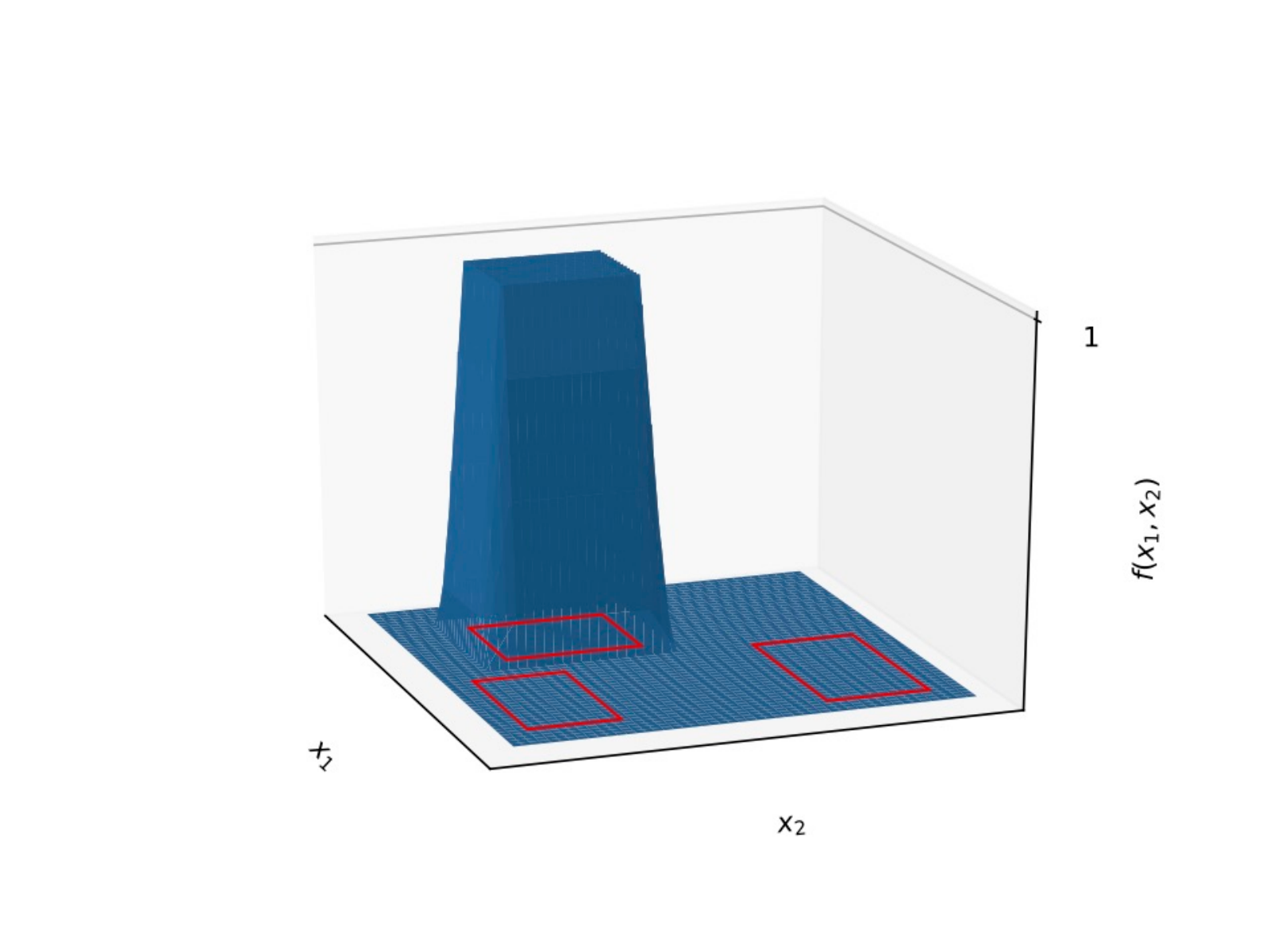}
\caption{Visualization of the function $f(x_1,x_2)$ with $S_1$, $S_2$ and $S_3$ represented as red rectangles in the $x_1-x_2$ plane.}
\label{fig:box}
\end{figure}

\begin{lemma}\label{lem:norm_f_bound}
Given $k$ boxes $\St_1,\ldots,\St_k \subset [1,r]^d$, $\delta^* \leq \min \left(2,  \delta(\St_1,\ldots, \St_k) \right)$ and  $\bxt_1,\ldots,\bxt_m\in\cup_{i=1}^k \St_i$, there exists a function $f \in H_{\cK}$ for which $f(\bxt_j) = [\bxt_j \in \St_1]$ for $j\in[m]$ and this function has norm
\[ ||f||^2= \left(\frac{4}{\delta^*}\right)^d. \]
\end{lemma}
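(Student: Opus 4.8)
The plan is to build $f$ as a product of one-dimensional bump functions, one per coordinate, exploiting the fact that the $d$-dimensional min kernel factorises as $\cK(\bxx,\bt)=\prod_{i=1}^d\min(x_i,t_i)$. First I would recall that, for such a product kernel, the RKHS $H_\cK$ is the Hilbert-space tensor product of $d$ copies of $\mathcal{H}^1_\cK$, and in particular any $g_1,\dots,g_d\in\mathcal{H}^1_\cK$ give a function $\bxx\mapsto\prod_{i=1}^d g_i(x_i)$ in $H_\cK$ with $\bigl\|\prod_{i=1}^d g_i\bigr\|^2=\prod_{i=1}^d\|g_i\|_{\mathcal{H}^1_\cK}^2$ (this is standard RKHS theory for product kernels and can be cited). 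Writing the target box as $\St_1=\{\bxx:a_i\le x_i\le b_i,\ i\in[d]\}$, the hypothesis $\St_1\subseteq[1,r]^d$ gives $1\le a_i\le b_i\le r$.

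Next I would define, for each $i\in[d]$, the continuous piecewise-linear function $g_i\colon[0,\infty)\to\Re$ that is $0$ on $[0,a_i-\tfrac{\delta^*}{2}]$, rises linearly to $1$ on $[a_i-\tfrac{\delta^*}{2},a_i]$, equals $1$ on $[a_i,b_i]$, falls linearly to $0$ on $[b_i,b_i+\tfrac{\delta^*}{2}]$, and is $0$ on $[b_i+\tfrac{\delta^*}{2},\infty)$. The left breakpoint is nonnegative because $a_i\ge1$ and $\delta^*\le2$ force $a_i-\tfrac{\delta^*}{2}\ge0$; this is exactly why the ramps have width $\delta^*/2$ rather than $\delta^*$, since it guarantees $g_i(0)=0$ and hence $g_i\in\mathcal{H}^1_\cK$. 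Its derivative is piecewise constant, equal to $\pm 2/\delta^*$ on two intervals of total length $\delta^*$, so
\[
\|g_i\|_{\mathcal{H}^1_\cK}^2=\int_0^\infty \bigl(g_i'(x)\bigr)^2\,\mathrm{d}x=\tfrac{\delta^*}{2}\Bigl(\tfrac{2}{\delta^*}\Bigr)^2+\tfrac{\delta^*}{2}\Bigl(\tfrac{2}{\delta^*}\Bigr)^2=\frac{4}{\delta^*}.
\]
Setting $f(\bxx):=\prod_{i=1}^d g_i(x_i)$ then gives $f\in H_\cK$ with $\|f\|^2=\prod_{i=1}^d\|g_i\|_{\mathcal{H}^1_\cK}^2=(4/\delta^*)^d$.

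It remains to check the interpolation property $f(\bxt_j)=[\bxt_j\in\St_1]$ for all $j\in[m]$. If $\bxt_j\in\St_1$ then every coordinate $(\bxt_j)_i$ lies in $[a_i,b_i]$, so $g_i((\bxt_j)_i)=1$ and $f(\bxt_j)=1$. If $\bxt_j\notin\St_1$ then, since $\bxt_j\in\bigcup_{\ell=1}^k\St_\ell$, it lies in some $\St_\ell$ with $\ell\ne1$; the separation hypothesis $\delta(\St_1,\dots,\St_k)\ge\delta^*$ then forces $\max_{i\in[d]}\operatorname{dist}\bigl((\bxt_j)_i,[a_i,b_i]\bigr)\ge\delta^*$, so some coordinate $i^*$ has $\operatorname{dist}((\bxt_j)_{i^*},[a_{i^*},b_{i^*}])\ge\delta^*>\delta^*/2$. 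As $g_{i^*}$ vanishes outside $[a_{i^*}-\tfrac{\delta^*}{2},b_{i^*}+\tfrac{\delta^*}{2}]$, we get $g_{i^*}((\bxt_j)_{i^*})=0$ and hence $f(\bxt_j)=0$.

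The construction itself is the heart of the matter and the verification is largely routine; the only two points needing care are the boundary condition $a_i-\delta^*/2\ge0$ (which is precisely why the statement bundles $\delta^*\le2$ together with the boxes living in $[1,r]^d$) and the passage from ``$\bxt_j\notin\St_1$'' to ``$\bxt_j$ is $\ell_\infty$-far from $\St_1$'', which uses that every sample point belongs to one of the $k$ boxes. I do not anticipate a genuine obstacle beyond getting these bookkeeping details right.
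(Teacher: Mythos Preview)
Your proposal is correct and follows essentially the same construction as the paper: a piecewise-linear trapezoidal bump in each coordinate, with ramps of width $\delta^*/2$, assembled into a product function whose RKHS norm factorises over coordinates. Your treatment of the multi-dimensional interpolation check (using the $\ell_\infty$ separation to find a coordinate where $g_{i^*}$ vanishes) is in fact more explicit than the paper's, which simply asserts that ``the condition on $\delta^*$ ensures $f(\bxt_i)=0$ for $\bxt_i\notin\St_1$'' without spelling out the coordinate argument.
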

\begin{proof}
Recall that a {\em box} in $\Re^d$ is a set $\{\bxx : a_i \le x_i \le b_i, i\in [d]\}$ defined by a pair of vectors $\ba,\bb\in\Re^d$. 
First, we consider the case of $d=1$, with the coordinates of $\St_1$ defined by $a$ and $b$. Defining the function that interpolates the points $\bxt_1,\ldots \bxt_m$ in one dimension as $f^1 \in \mathcal{H}^1_{\cK}$, we chose $f^1$ to be the following:
\[f^1(x) = 
\begin{cases}
 0 & \mathrm{for }\quad x \leq a-\frac{\delta^*}{2} \\
 \frac{2}{\delta^*} x +1- \frac{2}{\delta^*} a  &  \mathrm{for }\quad a-\frac{\delta^*}{2} <x \leq a \\
1 & \mathrm{for } \quad  a < x \leq b \\
   -\frac{2}{\delta^*} x +1 + \frac{2}{\delta^*} b &  \mathrm{for }\quad b <x \leq b+\frac{\delta^*}{2} \\
0 &    \mathrm{for }  \quad x >  b+\frac{\delta^*}{2}.
\end{cases}
\]This function is picked from the space $\mathcal{H}_{\cK}^1$ so that $\int_0^\infty[(f^1)'(x)]^2 \mathrm{d}x$ is minimized with respect to ``worst-case'' constraints. The condition on $\delta^*$ implies that $\delta^*\leq2$, so that $f^1(0) = 0$. It also implies that $\delta^* \leq \delta(S_1,\ldots S_k)$ so that for all $i \in [m]$, $f^1(\tilde{x}_i) = 0$ if $\tilde{x}_i  \notin S_1$.
The norm $||f^1||^2$, then becomes
\begin{align*}
||f^1||^2 & = \int_0^\infty |(f^1)'(x)|^2 \mathrm{d}x \\
&= \int_{a-\frac{\delta^*}{2}}^a \left(\frac{2}{\delta^*}\right)^2 \mathrm{d}x + \int_{b}^{b+\frac{\delta^*}{2}} \left(\frac{2}{\delta^*}\right)^2 \mathrm{d}x \\
&= 2 \left(\frac{2}{\delta^*}\right)^2 \left(\frac{\delta^*}{2}\right)= \frac{4}{\delta^*}\,.
\end{align*}

This can be extended to multiple dimensions by observing that the induced product norm of $f$ is the product of the norms of $f^1$ in each dimension, thus giving the required bound. In this case also, the condition on $\delta^*$ ensures both $f(\bzero)=0$ and $f(\bxt_i) = 0$ for $\bxt_i  \notin \St_1$,where $i\in[m]$.  For an illustration of this function in two dimensions, see Figure~\ref{fig:box}.
\end{proof}
\begin{lemma}\label{lem:min_kernel_normsingle}
Given $k$ boxes $\St_1,\ldots,\St_k \subset [1,r]^d$, $\delta^* \leq \min \left(2,  \delta(\St_1,\ldots, \St_k) \right) $ and  $\bxt_1,\ldots,\bxt_m\in\cup_{i=1}^k \St_i$,  if $\bu = (u_i = [\bxt_i\in \St_1])_{i\in [m]}$ and  $\bK= (\cK(\bxt_i,\bxt_j))_{i,j\in [m]}$ 
then $\bu^{\trans} \bK^{-1} \bu  \le \left(\frac{4}{\delta^*}\right)^d$.
\end{lemma}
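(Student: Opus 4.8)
The plan is to identify $\bu^\trans \bK^{-1}\bu$ with the minimal squared RKHS norm over functions interpolating the data $\{(\bxt_i,u_i)\}_{i\in[m]}$, and then to bound it from above by substituting the explicit interpolant built in Lemma~\ref{lem:norm_f_bound}.

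First I would dispose of a technicality: since $u_i = [\bxt_i\in\St_1]$ depends only on $\bxt_i$, collapsing repeated points among $\bxt_1,\ldots,\bxt_m$ changes neither $\bu^\trans\bK^{-1}\bu$ nor the right-hand side, so I may assume the $\bxt_i$ are distinct; as the \emph{min} kernel is strictly positive definite on $[1,r]^d$ (being the $d$-fold tensor product of the one-dimensional Brownian/``min'' kernel, each factor strictly positive definite), $\bK$ is then invertible. Next I would establish the standard interpolation inequality: for \emph{any} $g\in\cH_\cK$ with $g(\bxt_i)=u_i$ for all $i\in[m]$, one has $\norm{g}^2 \ge \bu^\trans\bK^{-1}\bu$. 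To see this, set $\balpha := \bK^{-1}\bu$ and $g^\star := \sum_{i\in[m]}\alpha_i\cK_{\bxt_i}\in\cH_\cK$; the reproducing property gives $g^\star(\bxt_j) = (\bK\balpha)_j = u_j$ and $\norm{g^\star}^2 = \balpha^\trans\bK\balpha = \bu^\trans\bK^{-1}\bu$, while $\langle g^\star, g-g^\star\rangle = \sum_i \alpha_i\,(g-g^\star)(\bxt_i) = 0$ because $g-g^\star$ vanishes on every $\bxt_i$; hence $\norm{g}^2 = \norm{g^\star}^2 + \norm{g-g^\star}^2 \ge \bu^\trans\bK^{-1}\bu$.

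Finally I would apply Lemma~\ref{lem:norm_f_bound}, which under exactly the present hypotheses ($\St_1,\ldots,\St_k\subset[1,r]^d$, $\delta^*\le\min(2,\delta(\St_1,\ldots,\St_k))$, $\bxt_i\in\bigcup_j\St_j$) furnishes a function $f\in\cH_\cK$ with $f(\bxt_j) = [\bxt_j\in\St_1] = u_j$ for all $j\in[m]$ and $\norm{f}^2 = (4/\delta^*)^d$. Taking $g := f$ in the inequality above yields
\[ \bu^\trans\bK^{-1}\bu \ \le\ \norm{f}^2 \ =\ \left(\frac{4}{\delta^*}\right)^d, \]
which is the assertion. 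There is no real obstacle here: the substantive content is entirely in Lemma~\ref{lem:norm_f_bound}; the only points requiring a little care are the reduction to distinct points (so that $\bK^{-1}$ is meaningful) and checking that the orthogonal-decomposition step is valid in the min-kernel (pre-)Hilbert space — which it is, since $g^\star$ is a finite kernel combination and hence genuinely lies in $\cH_\cK$, and the interpolant from Lemma~\ref{lem:norm_f_bound} was constructed inside $\cH_\cK$ as well.
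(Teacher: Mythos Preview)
Your proposal is correct and follows essentially the same approach as the paper: identify $\bu^\trans\bK^{-1}\bu$ with the minimal squared RKHS norm over interpolants, then bound it by the explicit function supplied by Lemma~\ref{lem:norm_f_bound}. You are in fact more careful than the paper, which states the min-norm identity without justification and does not address invertibility of $\bK$ or repeated points.
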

\begin{proof}
Using Lemma~\ref{lem:norm_f_bound} we observe that,
\[
\bu^{\trans} \bK^{-1}\bu = \argmin_{f\in H_{\cK} : f(\bxt_i) = [\bxt_i\in \St_1], i\in [m]} \norm{f}_{\cK}^2 \le \argmin_{f\in H_{\cK} : f(\bxt) = [\bxt\in \St_1], \bxt\in \cup_{i\in [k]} \St_i} \norm{f}_{\cK}^2\le\left(\frac{4}{\delta^*}\right)^d\,,
\] for $\bu := (u_i = [\bxt_i\in \St_1])_{i\in [m]}$, $\bK := (\cK(\bxt_i,\bxt_j))_{i,j\in [m]}$ and $\bxt_1,\ldots,\bxt_m \in \cup_{i\in [k]} \St_i$. Note that the second term in the equation above has a constraint in terms of the given $m$ points $\bxt_1, \ldots, \bxt_m$, whereas the optimization in the third term has a similar constraint, but in terms of any $\bxt$ that satisfies $ \bxt\in \cup_{i\in [k]} \St_i$. \end{proof}
Defining $\bu_i = (\bR^\trans_i)^\trans$, then observe that the term $\trace{\bR^{\trans} \bK^{-1} \bR}= \sum_{i\in [k]} \bu_i^\trans \bK^{-1} \bu_i$.
Thus by applying Lemma~\ref{lem:min_kernel_normsingle} to each $\bu_i$,  we have that $ \trace{\bR^{\trans} \bK^{-1} \bR} \le k(\frac{4}{\delta^*})^d$.  \hfill $\blacksquare$
\end{document}

\section{Lexicon}
\subsection{New}
General Matrices are bold roman. Vectors are bold roman.  Collections of vector are denoted using top superscript if possible would like to move these to subscripts.  Hats mean normalized, stars mean biclustered world and twiddles mean symmetrised positive definite would.  Caligraphic, or greek script means sets or constants.  Fix notation for reals and nats and possible sets of matrices.
\begin{center}
\begin{tabular}{l|l|l|l}
new & old & command & comments \\ \hline
$\mat$  & $\bU$ & \verb!\mat! & Since the change with $\RRN$ scale constant missing \\
$\RRN (\RCN)$ & $\re$ & \verb!\RRN (\RCN)! & Note $\RRN$ is twice $\re$ look carefully for introduced
 mistakes \\
 $\imat $& $\bV^\trans$ & \verb!\imat! & -- would like to return $\bV$ for this quantity ... but at a later stage\\
 $\marh$ & $\lr$ & \verb!\lr! & -- no longer does $\lr = \mar$ BEWARE introduced errrors
\end{tabular}
\end{center}